\documentclass[10pt]{article}
\usepackage[top=1in, bottom=1in, left=0.9in, right=0.9in]{geometry}

\usepackage{zhiyuz}

\title{\textbf{Operationalizing Stein's Method for Online Linear Optimization: CLT-Based Optimal Tradeoffs}}
\author{
Zhiyu Zhang\\
Carnegie Mellon University\\
\texttt{zhiyuzresearch@gmail.com}\\
\and
Aaditya Ramdas\\
Carnegie Mellon University\\
\texttt{aramdas@cmu.edu}
}
\date{\vspace{-5ex}}

\begin{document}
\maketitle

\begin{abstract}
Adversarial online linear optimization (OLO) is essentially about making performance tradeoffs with respect to the unknown difficulty of the adversary. In the setting of one-dimensional fixed-time OLO on a bounded domain, it has been observed since Cover \citep{cover1966behavior} that achievable tradeoffs are governed by probabilistic inequalities, and these descriptive results can be converted into algorithms via dynamic programming, which, however, is not computationally efficient. We address this limitation by showing that Stein's method, a classical framework underlying the proofs of probabilistic limit theorems, can be operationalized as computationally efficient OLO algorithms. The associated regret and total loss upper bounds are ``additively sharp'', meaning that they surpass the conventional big-O optimality and match normal-approximation-based lower bounds by additive lower order terms. Our construction is inspired by the remarkably clean proof of a Wasserstein martingale central limit theorem (CLT) due to R{\"o}llin \citep{rollin2018quantitative}.

Several concrete benefits can be obtained from this general technique. First, with the same computational complexity, the proposed algorithm improves upon the total loss upper bounds of online gradient descent (OGD) and multiplicative weight update (MWU). Second, our algorithm can realize a continuum of optimal two-point tradeoffs between the total loss and the maximum regret over comparators, improving upon prior works in parameter-free online learning. Third, by allowing the adversary to randomize on an unbounded support, we achieve sharp in-expectation performance guarantees for OLO with noisy feedback. 
\end{abstract}

\section{Introduction}\label{section:introduction}

We study a fundamental protocol in adversarial online learning: one-dimensional fixed-time \emph{online linear optimization} (OLO) on a bounded domain. With a time horizon $T\in\N_+$, it is a repeated game between an algorithm we design and an adversary. The algorithm is defined by a sequence of functions $(a_t)_{t\in[1:T]}$, where each $a_t:\R^{t-1}\rightarrow[-1,1]$ can depend on $T$. The adversary is defined by another sequence of functions $(l_t)_{t\in[1:T]}$, where each $l_t:\R^t\rightarrow\R$ can depend on $T$ and the algorithm $(a_t)_{t\in[1:T]}$. In each (the $t$-th; $t\in[1:T]$) round,
\begin{itemize}
\item The algorithm picks the decision $x_t=a_t(g_{1:t-1})\in[-1,1]$. 
\item The adversary picks the loss gradient $g_t=l_t(x_{1:t})\in\R$. 
\item The algorithm observes $g_t$ and then suffers the linear loss $g_tx_t$. 
\end{itemize}
At the end of the game, the performance of the algorithm is measured by its \emph{total loss}, $\loss_T\defeq \sum_{t=1}^Tg_tx_t$, lower is better. A real world example is sequential betting: $x_t$ is the amount of money a gambler bets on the negativity of $g_t$, and with the instantaneous payoff being $-g_tx_t$, the total amount of money they make is $-\loss_T$. 

For intuition, let us briefly assume $g_t\in[-1,1]$ such that the worst case adversary would pick $g_t=\sign(x_t)$. It is clear that optimizing only for this worst case would trivialize the problem, since the minimax-optimal algorithm is forced to always pick $x_t=0$. The convention instead is to simultaneously consider adversaries with varied difficulty: although any nontrivial algorithm should suffer positive $\loss_T$ in the worst case, one could indeed expect negative $\loss_T$ if the adversary ends up being easy. Therefore every algorithm should realize a tradeoff, just like in betting where higher profits necessarily comes with higher risks. The objective is thus achieving suitable notions of ``admissibility'', in the sense that no other algorithm can always guarantee lower $\loss_T$ regardless of the adversary. 

Rigorously studying this objective motivates a common surrogate performance metric called the \emph{regret}: for all $u\in[-1,1]$ called a \emph{comparator}, the regret of the algorithm with respect to $u$ is defined as $\reg_T(u)\defeq\sum_{t=1}^Tg_t(x_t-u)$. If an algorithm guarantees $\reg_T(u)\leq \psi_T(u)$ for some proper, closed and convex function $\psi_T:[-1,1]\rightarrow(-\infty,\infty]$,\footnote{Defined as $\infty$ outside $[-1,1]$ following the convention.} then with the convex and $1$-Lipschitz function $\psi^*_T:\R\rightarrow\R$ being the convex conjugate of $\psi_T$, $\loss_T$ can be upper-bounded by the oracle inequality
\begin{equation}\label{eq:loss_bound}
\loss_T\leq \inf_{u\in[-1,1]} \spar{\rpar{\sum_{t=1}^Tg_t}u+\psi_T(u)}=-\sup_{u\in\R} \spar{\rpar{-\sum_{t=1}^Tg_t}u-\psi_T(u)}=-\psi_T^*\rpar{-\sum_{t=1}^Tg_t}.
\end{equation}
Since $(\psi^*_T)^*=\psi_T$, the total loss bound conversely implies the regret bound, and this two-way relation is called the \emph{loss-regret duality} \citep{mcmahan2014unconstrained} (Lemma~\ref{lemma:duality}). The idea is that upper-bounding the surrogate $\reg_T(u)$ is equivalent to upper-bounding $\loss_T$ itself in an adversary-dependent manner, where the cumulative bias $\sum_{t=1}^Tg_t$ can be used to quantify the difficulty of the adversary.\footnote{Intuitively, a larger $|\sum_{t=1}^Tg_t|$ means the adversary is more biased thus easier for the algorithm to exploit. Therefore in Eq.\eqref{eq:loss_bound}, the typical $\psi^*_T$ function of interest would be an even function minimized at $0$, meaning the loss bound is maximized at $\sum_{t=1}^Tg_t=0$.} We will adopt this ``loss-centric'' viewpoint to characterize performance tradeoffs in OLO, whose relation to the conventional view of regret minimization is demonstrated through the following example. 

\begin{example}[Uniform regret]\label{example:worst}
Consider the following definition which we call the \emph{uniform regret},
\begin{equation*}
\reg^\unif_T\defeq \sup_{u\in[-1,1]}\reg_T(u)=\max\left\{\reg_T(-1),\reg_T(1)\right\}.
\end{equation*}
Any regret bound $\reg_T(u)\leq \psi_T(u)$ can be converted into a uniform regret bound $\reg^\unif_T\leq \sup_{u\in[-1,1]}\psi_T(u)$, whose right hand side, if finite, can be regarded as a function $\psi^\unif_T$ invariant to its argument. Then, an upper bound on $\loss_T$ is induced by the convex conjugate of $\psi^\unif_T$, $(\psi^\unif_T)^*(s)=\abs{s}-\sup_{u\in[-1,1]}\psi_T(u)$. Despite being weaker than $\psi^*_T$, i.e., $(\psi^\unif_T)^*(s)\leq \psi^*_T(s)$ for all $s\in\R$, this is a convenient relaxation adopted by most prior works, especially due to its connection to uniform convergence in statistical learning. See, e.g., \citep{rakhlin2014statistical}.
\end{example}

This work presents computationally efficient algorithms achieving general and sharp \emph{upper bound functions} on $\reg_T(u)$ and $\loss_T$. En route, we introduce the techniques of \emph{Stein's method} \citep{stein1972bound,stein1986approximate} to adversarial online learning and demonstrate intriguing algorithmic connections between sharp rates and \emph{central limit theorems} (CLTs). 

\subsection{Motivation}\label{subsection:motivation}

To concretely motivate our results, we start from a central problem of the field: characterizing the conditions on $\psi_T^*$ for the achievability of Eq.\eqref{eq:loss_bound} against a large class of adversaries. In the special case of Boolean adversaries, the following landmark result is due to Cover \citep[Statement V]{cover1966behavior}. 

\begin{theorem}[Cover's characterization, adapted]\label{theorem:cover}
Assume that $g_t\in\{-1,1\}$ for all $t\in[1:T]$, and define $\rs(n)$ as the distribution of the sum of $n$ independent Rademacher random variables. Then, for all convex and $1$-Lipschitz function $\psi^*_T:\R\rightarrow(-\infty,\infty]$, there exists an algorithm achieving the total loss bound Eq.\eqref{eq:loss_bound} if and only if
\begin{equation}\label{eq:achievability_cover}
\E_{X\sim \rs(T)}[\psi_T^*(X)]\leq 0.
\end{equation}
In particular, the corresponding algorithm outputs the expectation of a discrete derivative, 
\begin{equation}\label{eq:cover}
x_t=-\half\E_{X\sim\rs(T-t)}\spar{\psi^*_T\rpar{\sum_{i=1}^{t-1}g_i+X+1}-\psi^*_T\rpar{\sum_{i=1}^{t-1}g_i+X-1}},\quad\forall t\in[1:T].
\end{equation}
\end{theorem}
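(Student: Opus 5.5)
We prove both directions with the potential $V_t(s)\defeq\E_{X\sim\rs(T-t)}[\psi^*_T(s+X)]$ for $t\in[0:T]$, where $s$ is the running sum $S_t\defeq\sum_{i=1}^t g_i$. Since $X\sim\rs(T-t+1)$ has the law of $X'+\epsilon$ with $X'\sim\rs(T-t)$ and $\epsilon$ an independent Rademacher sign, we have the backward recursion
\begin{equation*}
V_{t-1}(s)=\half\spar{V_t(s+1)+V_t(s-1)},\qquad V_T=\psi^*_T.
\end{equation*}
Write $D_t(s)\defeq\half[V_t(s+1)-V_t(s-1)]$. The algorithm in Eq.\eqref{eq:cover} is then exactly $x_t=-D_t(S_{t-1})$.

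\textbf{Sufficiency.} First, $x_t\in[-1,1]$: $V_t$ is an average of translates of the $1$-Lipschitz function $\psi^*_T$, so it is $1$-Lipschitz, and hence $|D_t|\leq 1$. Next, we show that $\loss_t-V_t(-S_t)$ does not increase. For $g_t=\pm1$, set $s=-S_{t-1}$. Then $V_t(-S_t)=V_t(s\mp1)$ and $g_tx_t=\mp D_t(-s)$. Note that $x_t$ as defined uses $\psi^*_T(S_{t-1}+X\pm1)$, which matches $V$ evaluated at $+S$. I would therefore first reparametrize with $\phi(s)\defeq\psi^*_T(-s)$, so that the algorithm is stated in terms of $-S$. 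Equivalently, I would check directly that $g_tx_t+V_t(\cdot)$ equals $V_{t-1}$, and fix the sign convention so that
\begin{equation*}
g_tx_t+V_t(s-g_t)=\half\spar{V_t(s+1)+V_t(s-1)}=V_{t-1}(s).
\end{equation*}
This identity holds with equality for both values of $g_t$, since $V_t(s\mp1)=V_{t-1}(s)\mp D_t(s)$. Telescoping from $t=1$ to $T$ gives $\loss_T+\psi^*_T(-S_T)=V_0(0)=\E_{X\sim\rs(T)}[\psi^*_T(X)]\leq 0$, using the symmetry of $\rs(T)$. This is exactly Eq.\eqref{eq:loss_bound}. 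The only delicate point here is the bookkeeping of signs between $S$ and $-S$, which is resolved by the symmetry of $X$.

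\textbf{Necessity.} Suppose some algorithm achieves Eq.\eqref{eq:loss_bound} against every Boolean adversary. Let the adversary draw $g_t$ i.i.d.\ uniform on $\{-1,1\}$, independently of the past. Then $\E[g_tx_t]=0$, because $x_t$ depends only on $g_{1:t-1}$. Taking expectations in Eq.\eqref{eq:loss_bound} gives $0=\E[\loss_T]\leq-\E[\psi^*_T(-S_T)]=-\E_{X\sim\rs(T)}[\psi^*_T(X)]$, which is Eq.\eqref{eq:achievability_cover}. Since the bound holds for every deterministic sequence, it holds almost surely under the randomized adversary, so the averaging step is legitimate. The adaptive-adversary formalism (each $l_t$ a function of $x_{1:t}$) is not an issue, because deterministic sequences are a special case.

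The main obstacle I expect is sign consistency and the case where $\psi^*_T$ takes the value $\infty$. For the latter, $1$-Lipschitzness forces $\psi^*_T$ to be finite everywhere unless it is identically $\infty$, and in that case the condition fails and the claim is vacuous. Once those two points are handled, the martingale identity above makes both directions essentially one line each.
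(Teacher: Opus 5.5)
The paper gives no proof of its own beyond the gambling heuristic, which is exactly your necessity step and which it reuses in the proof of Theorem~\ref{theorem:loss_lower}. Your dynamic-programming route (backward recursion for $V_t$, an exact one-step identity, telescoping) is the standard way to fill this in. Necessity, the recursion, and $|D_t|\le 1$ are fine. The gap is the sign step you flagged, and it is \emph{not} resolved by the symmetry of $X$.

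Your identity $g_tx_t+V_t(s-g_t)=V_{t-1}(s)$ with $s=-S_{t-1}$ forces $x_t=D_t(-S_{t-1})$. But Eq.~\eqref{eq:cover} is $x_t=-D_t(S_{t-1})$. With that choice, your own computation gives $g_tx_t+V_t(-S_t)=V_{t-1}(s)\mp[D_t(s)+D_t(-s)]$ for $g_t=\pm1$. This is at most $V_{t-1}(s)$ for both signs only if $D_t(s)=-D_t(-s)$, and nothing in the hypotheses gives that.

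Symmetry of $X$ only yields $V_t(-y)=\E_{X\sim\rs(T-t)}[\phi(y+X)]$ with $\phi(s)\defeq\psi^*_T(-s)$. It turns the correct algorithm into Eq.~\eqref{eq:cover} applied to $\phi$, not to $\psi^*_T$. Indeed, telescoping the potential $\loss_t+V_t(S_t)$ under $x_t=-D_t(S_{t-1})$ gives the exact identity $\loss_T=-\psi^*_T(\sum_{t=1}^Tg_t)+\E_{X\sim\rs(T)}[\psi^*_T(X)]$. That is the bound with the wrong sign inside $\psi^*_T$.

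A concrete counterexample: $\psi^*_T(s)=s$ satisfies Eq.~\eqref{eq:achievability_cover}, and Eq.~\eqref{eq:loss_bound} reads $\loss_T\le\sum_{t=1}^Tg_t$, attained by $x_t\equiv 1$. Yet Eq.~\eqref{eq:cover} outputs $x_t\equiv-1$, giving $\loss_T=-\sum_{t=1}^Tg_t$. This violates the bound whenever $\sum_{t=1}^Tg_t<0$, already for $T=1$, $g_1=-1$.

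So the ``in particular'' clause cannot be proved as displayed. It holds for even $\psi^*_T$, which covers the case of Example~\ref{example:worst}. In general, the algorithm your telescoping certifies is $x_t=D_t(-S_{t-1})=\half\E_{X\sim\rs(T-t)}[\psi^*_T(-S_{t-1}+X+1)-\psi^*_T(-S_{t-1}+X-1)]$. Equivalently, it is Eq.~\eqref{eq:cover} with $\psi^*_T$ replaced by $s\mapsto\psi^*_T(-s)$. With that algorithm your sufficiency argument is complete, and the if-and-only-if statement stands.

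This correction is also consistent with the rest of the paper. In Theorem~\ref{theorem:main} with $\rho_T=0$ one has $\bar\psi^*_T(y)=h(-y)-\mathrm{const}$, so $h$ plays the role of $\psi^*_T(-\cdot)$, while Algorithm~\ref{algorithm:main} evaluates $h'$ at $+s_{t-1}$. You should state this fix explicitly rather than assert that the sign bookkeeping works out.
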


The necessity of Eq.\eqref{eq:achievability_cover} is straightforward, since any gambler cannot expect to make profit when the nature is uniformly random. Truly remarkable is the sufficiency of Eq.\eqref{eq:achievability_cover}, proved constructively by Eq.\eqref{eq:cover}. The construction reveals a fundamental dynamic programming principle which has inspired numerous subsequent works connecting repeated games, stochastic analysis and partial differential equations (PDEs) \citep{cesa2006prediction,rakhlin2014statistical,karlin2017game,drenska2020prediction,harvey2023optimal}. Based on that Cover proved the following in the same paper: still assuming $g_t\in\{-1,1\}$ for all $t$, an instantiation of Eq.\eqref{eq:cover} achieves the uniform regret bound $\reg^\unif_T\leq \sqrt{\frac{2}{\pi}T}+1$, where the leading constant $\sqrt{\frac{2}{\pi}}$ cannot be improved further. This is better than the best known $\reg^\unif_T\leq\sqrt{T}$ bound of \emph{online gradient descent} (OGD) \citep{zinkevich2003online}, and when adapted to the problem of \emph{learning from expert advice} (LEA) with two experts, it is also better than the uniform regret bound of \emph{multiplicative weight update} (MWU) \citep{littlestone1994weighted,arora2012multiplicative}. 

However, a crucial limitation of Cover's algorithm is the computation. While OGD and MWU both take $O(1)$ time per round, exactly evaluating Eq.\eqref{eq:cover} requires sweeping over the size-($T-t+1$) support of $\rs(T-t)$ therefore is $O(T)$ time. Such a gap becomes more evident if $g_t\in\{-\eps_t,\eps_t\}$ for some distinct $\eps_1,\ldots,\eps_T\in\R_{>0}$: the distribution underlying the accordingly generalized Eq.\eqref{eq:cover} has support size $2^{T-t}$, making the computation of $x_t$ exponential-time. Furthermore, Cover's algorithm requires the prior knowledge of $\abs{g_t}$ for all $t$, whereas OGD and MWU require only upper bounds on these $\abs{g_t}$, at most.\footnote{Adaptive versions of OGD and MWU such as \textsc{AdaGrad} \citep{duchi2011adaptive} do not require a priori upper bounds on $\abs{g_t}$. The price to pay is a larger leading constant in the regret bound. These are orthogonal to the focus of our work.} The intuition is that some kind of continuous approximation is necessary, but a general, sharp and analytically scalable solution remains challenging. 
\begin{itemize}
\item A natural attempt motivated by CLTs would be replacing the distribution $\rs(T-t)$ in Eq.\eqref{eq:cover} by the normal $\calN(0,T-t)$, such that computing the Gaussian integral $\E_{X\sim\calN(0,T-t)}[\psi^*_T(s+X)]$ can be assumed to take $O(1)$ time. The Lipschitzness of $\psi^*_T$ gives the normal approximation error $|\E_{X\sim\calN(0,T-t)}[\psi^*_T(s+X)]-\E_{X\sim\rs(T-t)}[\psi^*_T(s+X)]|=O(1),\forall s\in\R$ \citep[Theorem~2]{zhu2014two}, but simply combining it with Eq.\eqref{eq:cover} introduces $O(T)$ cumulative error in the associated regret and total loss bounds. 
\item Existing works \citep{kobzar2020a_new,greenstreet2022efficient} refined this idea for the special case of Example~\ref{example:worst}. Building on the relation between $(\psi^\unif_T)^*$ and the absolute value function, the proposed algorithms are based on querying the \emph{error function} $\erf:\R\rightarrow\R$ which is computationally efficient. Via specialized Taylor-approximation-based proofs, they also match the leading $\sqrt{\frac{2}{\pi}T}$ term in Cover's uniform regret bound against the larger class of $g_t\in[-1,1]$ adversaries.
\end{itemize}

\begin{figure}[t]
\includegraphics[width=\textwidth]{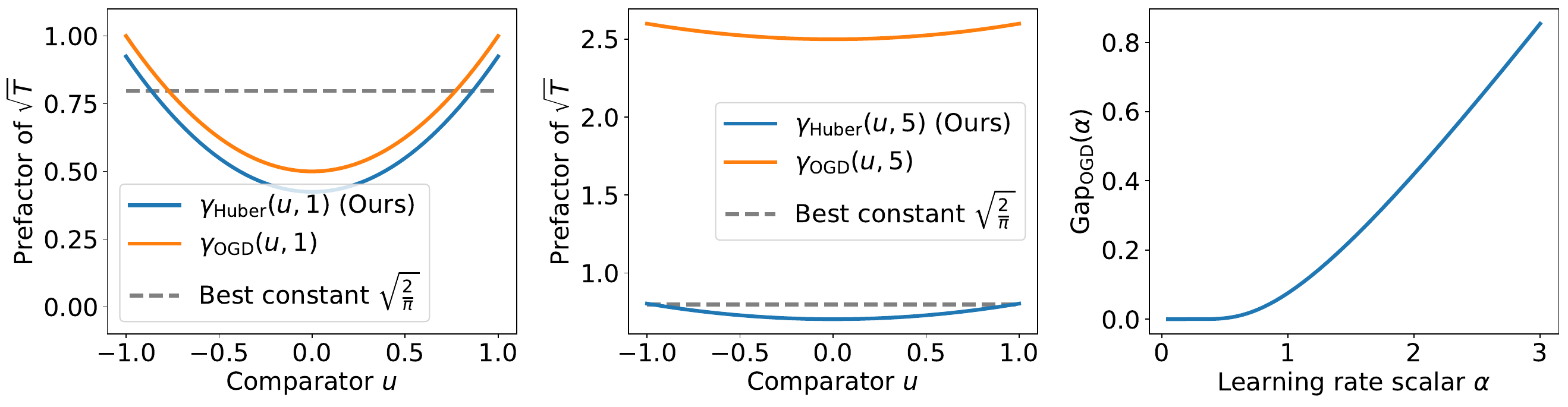}
\caption{Comparison of the prefactors of $\sqrt{T}$ in the regret bounds of our algorithm (denoted by $\gamma_{\mathrm{Huber}}(u,\alpha)$), OGD (denoted by $\gamma_{\mathrm{OGD}}(u,\alpha)$), and Cover's algorithm (the optimal $u$-independent prefactor $\sqrt{\frac{2}{\pi}}$; computationally efficient versions are provided by \citep{kobzar2020a_new,greenstreet2022efficient}). $u$ is the comparator and $\alpha$ is the scaling factor of the learning rate; see Section~\ref{subsection:huber} for definitions. Left: with $\alpha=1$, $\gamma_{\mathrm{Huber}}(u,\alpha)$ and $\gamma_{\mathrm{OGD}}(u,\alpha)$ are compared as functions of $u$, lower is better. Our regret bound dominates that of OGD for all $u\in[-1,1]$, while the optimal $u$-independent bound does not. Middle: with $\alpha=5$, the improvement over OGD becomes more significant. Right: the margin of improvement $\mathrm{Gap}_{\mathrm{OGD}}(\alpha)\defeq\gamma_{\mathrm{OGD}}(u,\alpha)-\gamma_{\mathrm{Huber}}(u,\alpha)$ as a function of $\alpha$. 
}
\label{figure:ogd}
\end{figure}

An immediate follow-up question is whether other \emph{additively sharp} guarantees beyond Example~\ref{example:worst} can be efficiently realized. Here ``additively sharp'' means that the obtained upper bound on $\loss_T$ should be away from a probabilistic achievability boundary (e.g., Eq.\eqref{eq:achievability_cover}) by additive, rather than multiplicative, lower order factors. As motivated previously, this corresponds to the efficient and near-optimal realization of general performance tradeoffs, which is of central importance in OLO. Moreover, both OGD and MWU are by default equipped with certain comparator-dependent regret bound $\reg_T(u)\leq \psi_T(u)$, therefore instead of beating their respective relaxation $\reg^\unif_T\leq\sup_{u\in[-1,1]}\psi_T(u)$ which is certainly a loose baseline, it is more sensible to compare the regret bound to such $\psi_T$ functions and aim for pointwise dominance (see Figure~\ref{figure:ogd}, Left). By the loss-regret duality, this will ultimately ensure a dominance on the corresponding $\loss_T$ upper bounds. 

In this work we present a solution to this problem, among others. The connection to normal approximation motivates the use of Stein's method, a classical analytical framework underlying various nonasymptotic, quantitative versions of CLTs. 

\subsection{Summary of Results}\label{subsection:results}

\paragraph{Quantitative} This subsection assumes $g_t\in[-1,1]$ for all $t$ unless stated otherwise. Our main result is a normal-approximation-based characterization of performance tradeoffs in OLO. 

\begin{theorem}[Theorem~\ref{theorem:main} and \ref{theorem:loss_lower}, informal]\label{theorem:informal}
Let $\psi^*_T:\R\rightarrow\R$ be an arbitrary convex and $1$-Lipschitz function satisfying $\E_{X\sim \calN(0,T)}[\psi_T^*(X)]=0$. There exists an $O(1)$-time-per-round algorithm (Algorithm~\ref{algorithm:main}) guaranteeing
\begin{equation*}
\loss_T\leq -\psi^*_T\rpar{-\sum_{t=1}^Tg_t}+O(\log T),
\end{equation*}
while it is impossible for any algorithm to guarantee $\loss_T\leq -\psi^*_T(-\sum_{t=1}^Tg_t)-\Omega(1)$.
\end{theorem}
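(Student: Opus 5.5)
Both parts run through a single potential. Let $Z\sim\calN(0,1)$ and define the heat-smoothed potential
\begin{equation*}
V(t,s)\defeq \E\spar{\psi^*_T\rpar{s+\sqrt{T-t}\,Z}},\qquad S_t\defeq -\sum_{i=1}^t g_i .
\end{equation*}
Then $V$ solves the backward heat equation $\partial_t V+\half\partial_{ss}V=0$, with $V(T,s)=\psi^*_T(s)$ and $V(0,0)=\E_{X\sim\calN(0,T)}[\psi^*_T(X)]=0$.

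\textbf{Algorithm.} Following the structure of Cover's rule Eq.\eqref{eq:cover}, with the Rademacher sum replaced by the Gaussian, I would set
\begin{equation*}
x_t=-\partial_s V(t,S_{t-1}),
\end{equation*}
which lies in $[-1,1]$ because $\psi^*_T$ is $1$-Lipschitz. For each round I want to show the one-step inequality
\begin{equation*}
g_tx_t\le -\spar{V(t,S_t)-V(t-1,S_{t-1})}+\epsilon_t .
\end{equation*}
Telescoping then gives $\loss_T\le V(0,0)-V(T,S_T)+\sum_t\epsilon_t=-\psi^*_T(S_T)+\sum_t\epsilon_t$.

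\textbf{Per-round error.} Taylor-expand in $s$, writing $V(t,S_t)=V(t,S_{t-1}-g_t)$. The linear term cancels exactly against $g_tx_t$. What remains is
\begin{equation*}
\epsilon_t=\rpar{V(t,S_{t-1})-V(t-1,S_{t-1})}+\half g_t^2\,\partial_{ss}V(t,\xi)+\text{third order}.
\end{equation*}
By the heat equation, the time increment equals $-\half\int_{t-1}^{t}\partial_{ss}V(\tau,S_{t-1})\,d\tau$. Convexity gives $\partial_{ss}V\ge0$ and $g_t^2\le1$, so the second-order terms at worst leave a mismatch of the form
\begin{equation*}
\half\spar{\partial_{ss}V(t,\xi)-\partial_{ss}V(\tau,S_{t-1})}.
\end{equation*}
This mismatch is controlled by third derivatives of $V$. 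Gaussian smoothing of a $1$-Lipschitz function gives $|\partial_{sss}V(t,\cdot)|\lesssim 1/(T-t)$ and $|\partial_t\partial_{ss}V|\lesssim (T-t)^{-3/2}$. Hence $\epsilon_t=O(1/(T-t))$ for $t\le T-1$, and the last round contributes $O(1)$. Summing gives the $O(\log T)$ term. This is the Röllin-style Stein argument: the Stein equation solution is precisely $V$, and one pays only for third-derivative smoothness.

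\textbf{Main obstacle.} The hard step is making this rigorous for arbitrary $g_t\in[-1,1]$ rather than $\pm1$, and handling the non-smooth final round. I would try the following:
\begin{itemize}
\item Use convexity to discard the favorable sign when $g_t^2<1$. Writing $\half g_t^2\partial_{ss}V-\half\partial_{ss}V\le0$ plus derivative-difference terms, the deficit only helps.
\item Near $t=T$, bound the remaining rounds crudely by $O(1)$ using Lipschitzness.
\item To avoid needing third derivatives of $\psi^*_T$ itself, move all derivatives onto the Gaussian kernel via integration by parts.
\end{itemize}

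\textbf{Lower bound.} Let the adversary play i.i.d.\ Rademacher $g_t$. Any algorithm then has $\E[\loss_T]=0$, so guaranteeing $\loss_T\le-\psi^*_T(S_T)-c$ would force $\E_{X\sim\rs(T)}[\psi^*_T(X)]\le -c$. Taking expectations this way is exactly the necessity direction of Theorem~\ref{theorem:cover}. Comparing to the Gaussian via the $O(1)$ Wasserstein normal-approximation bound for Lipschitz functions yields
\begin{equation*}
\E_{\rs(T)}[\psi^*_T]\ge \E_{\calN(0,T)}[\psi^*_T]-O(1)=-O(1).
\end{equation*}
So no algorithm can guarantee improvement by more than a constant, which rules out $-\Omega(1)$ with a large enough constant. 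A sharper version might use that $\E_{\rs(T)}[\psi^*_T]\ge \E_{\calN(0,T)}[\psi^*_T]-o(1)$ for convex $\psi^*_T$, via convex ordering.
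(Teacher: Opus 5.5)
Your lower bound is the paper's argument: an i.i.d.\ Rademacher adversary forces $\E[\loss_T]=0$, and Lemma~\ref{lemma:wasserstein_iid} moves $\E_{X\sim\rs(T)}[\psi^*_T(X)]$ to the Gaussian normalization up to an absolute constant.

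For the upper bound you take a genuinely different route, and it works after one sign fix. With $S_t=-\sum_{i\le t}g_i$, the linear term of $V(t,S_{t-1}-g_t)$ is $-g_t\partial_sV(t,S_{t-1})$. So the cancellation in $g_tx_t+V(t,S_t)-V(t-1,S_{t-1})$ needs $x_t=+\partial_sV(t,S_{t-1})$. In the paper's variables this is $x_t=-\E_Z[h'(s_{t-1}+\sqrt{T-t}Z)]$ with $h(x)=\psi^*_T(-x)$. With your minus sign the linear terms double instead of cancelling, and the rule is exploited by $g_t\equiv 1$.

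With the sign corrected, your per-round bound goes through:
\begin{itemize}
\item The second-order part is $\frac{g_t^2}{2}A-\frac12B$, where $A$ is a Taylor average of $\partial_{ss}V(t,\cdot)\ge0$ and $B$ is a time average of $\partial_{ss}V(\tau,S_{t-1})$. Since $g_t^2\le1$, this is at most $\frac12(A-B)$.
\item Moving derivatives onto the Gaussian kernel gives $|\partial_s^3V(t,\cdot)|\le\E|Z^2-1|/(T-t)$, and on $[t-1,t]$ also $|\partial_t\partial_{ss}V|=\frac12|\partial_s^4V|\le\E|Z^3-3Z|/(2(T-t)^{3/2})$.
\item Hence the error is $O(1/(T-t))$ per round for $t\le T-1$, and $O(1)$ at $t=T$ by Lipschitzness alone.
\end{itemize}

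What you have is the classical potential-method (discrete It\^o) analysis of the heat-gradient rule. It is essentially the $c_t$-free rule $\tilde x_t$ of Section~\ref{section:algorithm}, evaluated at variance $T-t$ rather than $T-t+1$. It is not R\"ollin's argument, and the Stein solution is not $V$: $f_{\mu,\sigma,h}$ solves the first-order equation \eqref{eq:stein}, and by Eq.~\eqref{eq:ou_rep} it is an exponentially weighted Ornstein--Uhlenbeck average of $-h'$.

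The paper instead telescopes over the Lindeberg interpolants $s_t+\rho_tZ$. It applies the Stein equation for $\calN(s_{t-1},\rho_{t-1}^2)$ together with Stein's lemma, so the Gaussian part of each step cancels exactly, and defines $x_t$ as the leftover zeroth-order coefficient. Its $\triangle$ term plays the role of your $(g_t^2-1)$ term, discarded via $f'\le0$ (the analogue of your $\partial_{ss}V\ge0$). Its $\Diamond$ plays the role of your spatial third-derivative mismatch. There is no separate time-discretization error.

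The two routes buy different things:
\begin{itemize}
\item Yours is more elementary, and shows that the plain heat-gradient rule (one Gaussian integral per round) already achieves $O(\log T)$ against $|g_t|\le1$. However, it proves the existence claim with a different algorithm, not the parenthetical claim about Algorithm~\ref{algorithm:main}.
\item The paper's identity holds for unbounded adversaries and arbitrary predictable schedules $\rho_t$, with an explicit misspecification cost. That generality is what yields the noisy-feedback extension and the rederivation of R\"ollin's CLT.
\end{itemize}

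Finally, your closing convex-ordering aside fails as stated. The unbounded $\calN(0,T)$ cannot be dominated in convex order by the bounded $\rs(T)$; take $f(x)=(x-T)_+$. The aside is not needed for the statement.
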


Compared to Cover's characterization (Theorem~\ref{theorem:cover}), the above result overcomes the computational limitation of dynamic programming at the price of an $O(\log T)$ gap between the upper and lower bounds. Since the $\psi^*_T$ function typically grows with rate $\Theta(\sqrt{T})$, such a gap is on a lower order, meaning that the upper bound is ``additively sharp''. Based on this master theorem, the following corollaries and extensions are obtained. 
\begin{itemize}
\item \emph{Dominating OGD and MWU.}\quad For any hyperparameter $\alpha\in\R_{>0}$, a special case of our algorithm guarantees $\reg_T(u)\leq \gamma_{\mathrm{Huber}}(u,\alpha)\sqrt{T}+O(\log T)$, where the prefactor of $\sqrt{T}$ is given by
\begin{equation*}
\gamma_{\mathrm{Huber}}(u,\alpha)=\frac{u^2}{2\alpha}+\rpar{\alpha+\frac{1}{\alpha}} \Phi\rpar{\frac{1}{\alpha}}+\phi\rpar{\frac{1}{\alpha}}-\half\alpha-\frac{1}{\alpha}.
\end{equation*}
This is compared to the OGD baseline with constant learning rate $\alpha/\sqrt{T}$, whose best known regret bound has the classical $\sqrt{T}$-prefactor $\gamma_{\mathrm{OGD}}(u,\alpha)=\half(\alpha^{-1}u^2+\alpha)$. We show that $\gamma_{\mathrm{Huber}}(u,\alpha)<\gamma_{\mathrm{OGD}}(u,\alpha)$ for all $u\in[-1,1]$ and $\alpha\in\R_{>0}$, and furthermore, $\lim_{\alpha\rightarrow\infty}\gamma_{\mathrm{Huber}}(u,\alpha)=\sqrt{\frac{2}{\pi}}$ while $\lim_{\alpha\rightarrow\infty}\gamma_{\mathrm{OGD}}(u,\alpha)=\infty$. Combined with Eq.\eqref{eq:loss_bound}, it means the total loss upper bound of our algorithm dominates that of OGD, and their gap can be arbitrarily large. In contrast, improving the uniform regret bound of OGD as in \citep{kobzar2020a_new,greenstreet2022efficient} does not warrant such a dominance. Details are provided in Section~\ref{section:dominate_ogd}, and see Figure~\ref{figure:ogd} for visualizations. 

The same argument can be applied to improve the MWU baseline. Notably, a quantitative separation is obtained by comparing the regret upper bound of our algorithm to an existing regret lower bound of MWU \citep{gravin2017tight}. See Appendix~\ref{section:dominate_mwu}. 

\item \emph{Loss versus uniform regret.}\quad Within performance tradeoffs in OLO, a notable special case first studied by Even-Dar et al. \citep{even2008regret} is the optimal ``two-point'' tradeoff between $\loss_T$ and $\reg^\unif_T$, i.e., the regret with respect to the average comparator $u=0$ versus the best comparator $u\in\{-1,1\}$. With $\Phi:\R\rightarrow(0,1)$ being the standard normal CDF, we show that for any constant $\eps\in(0,\sqrt{\frac{\pi}{2}}]$, an instantiation of our algorithm guarantees both $\loss_T\leq \eps\sqrt{T}+O(\log T)$ and $\reg^\unif_T\leq \gamma(\eps)\sqrt{T}+O(\log T)$, where $\gamma(\eps)\in[\sqrt{\frac{\pi}{2}},\infty)$ is the unique solution of the equation $\int_{-\infty}^{\eps-\gamma(\eps)}\Phi(x)\diff x=\frac{\eps}{2}$ and cannot be improved further. This improves upon the state-of-the-art approach combining an unconstrained ``parameter-free'' OLO algorithm \citep{zhang2022pde} with a constrained-to-unconstrained reduction \citep{cutkosky2018black}. See Appendix~\ref{section:soft_thresholded}.

\item \emph{Noisy feedback.}\quad Existing additively sharp guarantees as well as our results above are based on the assumption of deterministically bounded adversaries. To build an even stronger parallel relation with CLTs, we further allow the adversary to randomize on an unbounded domain, thereby giving sharp in-expectation performance guarantees for OLO with \emph{noisy feedback}. Despite the differences in their settings, the obtained result can be viewed as the algorithmic analogue of a nonasymptotic Wasserstein martingale CLT \citep{rollin2018quantitative}, which is of notable conceptual value. See Appendix~\ref{section:stochastic}.
\end{itemize}

\paragraph{Methodological} Our results are enabled by introducing Stein's method \citep{stein1972bound} to OLO. For context, Stein's method refers to a collection of differential-operator-based techniques characterizing the distance of a distribution to a reference, such as the normal. Within probability theory, it is a canonical framework for proving quantitative limit theorems \citep{chen2010normal,nourdin2012normal}, often being simpler than the Taylor-approximation-based alternatives. Our key observation is that the strengths of Stein's method are highly congruent with the analytical difficulties in OLO, such that the proofs of certain probabilistic limit theorems, which are ``descriptive'' in nature, can be operationalized as ``prescriptive'' OLO algorithms with sharp rates. Specifically for this work, our construction parallels the remarkably clean proof of a Wasserstein martingale CLT due to R{\"o}llin \citep{rollin2018quantitative}, while we believe the applicability of this idea extends much further. See Section~\ref{section:analysis}. 

\subsection{Organization}

Section~\ref{section:algorithm} introduces the template of our algorithm, followed by the analysis in Section~\ref{section:analysis} which is the crux of this work. Section~\ref{section:dominate_ogd} presents an instantiation of the algorithm improving upon OGD. Section~\ref{section:conclusion} concludes this work and discusses future directions. Other quantitative results and discussions of related works are deferred to the appendix; see the beginning of the appendix for its organization. 

\paragraph{Notation} $Z$ represents the standard normal random variable with cumulative distribution function (CDF) $\Phi$ and probability density function (PDF) $\phi$. More generally, $\Phi_{\mu,\sigma}$ and $\phi_{\mu,\sigma}$ represent the CDF and PDF of $\calN(\mu,\sigma^2)$. For all Lebesgue-measurable $f:\R\rightarrow\R$, let $\norm{f}_\infty\defeq\mathrm{ess}\sup_{x\in\R}\abs{f(x)}$ be its essential supremum norm (with respect to the Lebesgue measure). Let $\Pi_{[a,b]}(x)=\argmin_{u\in[a,b]}\abs{u-x}$ be the projection function to the interval $[a,b]$. For integers $a\leq b$, $[a:b]$ and $x_{a:b}$ respectively denote the tuples $[a,\ldots,b]$ and $[x_a,\ldots,x_b]$. Let $\partial_1$ and $\partial_2$ be the partial derivatives of a function with respect to the first and the second argument; similarly, $\partial_{11},\partial_{12},\partial_{22}$ are the second order partial derivatives. 

\section{Algorithm}\label{section:algorithm}

\paragraph{Stein equation} We begin by introducing the key element of Stein's method, \emph{Stein equation} and its solution. Appendix~\ref{section:background_stein} provides the minimal technical background to keep the present work self-contained, while interested readers are referred to several canonical resources for comprehensive treatments \citep{chen2010normal,ross2011fundamentals,nourdin2012normal}. 

\begin{definition}[Solution of Stein equation]\label{definition:stein_equation}
Let $h:\R\rightarrow\R$ be a convex and $1$-Lipschitz function. For all $\mu\in\R$ and $\sigma\in\R_{>0}$, consider the linear ordinary differential equation
\begin{equation}\label{eq:stein}
\sigma^2f'(x)-(x-\mu)f(x)=h(x)-\E_{Z\sim \calN(0,1)}[h(\mu+\sigma Z)],\quad \forall x\in\R,
\end{equation}
called the Stein equation associated with the distribution $\calN(\mu,\sigma^2)$ and the target function $h$. It has a unique bounded solution, and we denote it by $f_{\mu,\sigma,h}:\R\rightarrow\R$. 
\end{definition}

The function $f_{\mu,\sigma,h}$ can be expressed as integrals using $h$, but due to the modularity of Stein's method, our analysis is performed at an abstract level without using $h$-specific structures. This is a major benefit of Stein's method over more granular alternatives. 

We also note the following properties of $f_{\mu,\sigma,h}$. Due to the Lipschitzness of $h$, the derivative $f'_{\mu,\sigma,h}$ is Lipschitz, therefore the second derivative $f''_{\mu,\sigma,h}$ exists almost everywhere \citep[Proposition~3.5.1]{nourdin2012normal}. Likewise, $h'$ exists almost everywhere and $\norm{h'}_\infty$ equals the minimum Lipschitz constant of $h$. A concept called \emph{Stein factors} \citep{gaunt2025stein} refers to upper bounds on $||f_{\mu,\sigma,h}||_\infty$, $||f'_{\mu,\sigma,h}||_\infty$ and $||f''_{\mu,\sigma,h}||_\infty$ expressed using $\norm{h'}_\infty$; see Lemma~\ref{lemma:stein_magnitude}. We will immediately use $\norm{f_{\mu,\sigma,h}}_\infty\leq\norm{h'}_\infty\leq 1$.

\begin{algorithm}[ht]
\caption{1D fixed-time bounded-domain OLO via Stein's method.\label{algorithm:main}}
\begin{algorithmic}[1]
\REQUIRE A convex and $1$-Lipschitz function $h:\R\rightarrow\R$. 
\STATE Define $s_0\defeq 0$, and pick an arbitrary $\rho_0\in\R_{>0}$. 
\FOR{$t=1,\ldots,T$}
\STATE Pick an arbitrary $\rho_t\in[0,\rho_{t-1}]$, and we further require $\rho_t>0$ if $t<T$. Define $c_t\defeq\rho^2_{t-1}-\rho^2_t\in\R_{\geq 0}$. 
\STATE Based on the function $f_{\mu,\sigma,h}$ from Definition~\ref{definition:stein_equation}, output
\begin{equation}\label{eq:algorithm}
x_{t}=\E_{Z\sim\calN(0,1)}\spar{f_{s_{t-1},\rho_{t-1},h}(s_{t-1}+\rho_tZ)}.
\end{equation}
This is a valid output on the domain $[-1,1]$ since $||f_{s_{t-1},\rho_{t-1},h}||_\infty\leq 1$. 

\STATE Observe the loss gradient $g_t\in\R$ and define $s_{t}\defeq s_{t-1}+g_t\in\R$.
\ENDFOR
\end{algorithmic}
\end{algorithm}

\paragraph{Algorithm} With the above, we present Algorithm~\ref{algorithm:main}. A number of remarks are in order. 
\begin{itemize}
\item The function $h$ serves as a proxy of $\psi^*_T$ from the targeted Eq.\eqref{eq:loss_bound}, specifying the desirable tradeoff on the total loss. The algorithm is invariant to vertical shifts on $h$, i.e., $h(x)\leftarrow h(x)+\mathrm{constant}$ for all $x\in\R$. 
\item The parameter $\rho_t$ serves as a guess on the adversary's ``standard deviation to go'', $\sqrt{\sum_{i=t+1}^Tg_i^2}$, and it is allowed to depend on $g_{1:t-1}$. Requiring $\rho_t>0$ for all $t<T$ is to ensure the well-posedness of Eq.\eqref{eq:algorithm}. Analogously, the induced $c_t=\rho^2_{t-1}-\rho^2_t$ is a guess on the adversary's instantaneous variance, $g_t^2$. 

For intuition, if it is known that $\abs{g_t}\leq 1$ for all $t$, then one could simply set $\rho_t=\sqrt{T-t}$ thus $c_t=1$. We remark that in its full generality, hard constraints on the adversary are not required, as the associated performance guarantee (Theorem~\ref{theorem:main}) will hold against arbitrary unconstrained adversaries, taking the misspecification error of $\rho_{0:T}$ into account.
\end{itemize}

While the output Eq.\eqref{eq:algorithm} itself is sufficient for our analysis, its reliance on $f_{\mu,\sigma,h}$ may obfuscate the intuition. The following lemma spells out the explicit dependence of $x_t$ on the function $h$, and based on that, we discuss the intuition by analyzing its small-$c_t$ approximation. 

\begin{lemma}[Lemma~\ref{lemma:equivalent}, simplified]\label{lemma:equivalent_short}
Let $\mathrm{Exp}(1)$ be the exponential distribution with parameter $1$. The output Eq.\eqref{eq:algorithm} of Algorithm~\ref{algorithm:main} is equivalent to
\begin{equation*}
x_{t}=-\E_{Z\sim\calN(0,1);\tau\sim\mathrm{Exp}(1)}\spar{h'\rpar{s_{t-1}+\sqrt{\rho^2_{t-1}-e^{-2\tau} c_t}Z}}.
\end{equation*}
\end{lemma}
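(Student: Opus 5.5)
The plan is to derive an explicit integral representation of the bounded Stein solution $f_{\mu,\sigma,h}$ through the Ornstein--Uhlenbeck semigroup. We then substitute $x=s_{t-1}+\rho_t Z$ and merge the two independent Gaussians into one.

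\emph{Step 1: reduce to the standard normal.} Set $\tilde h(y)\defeq h(\mu+\sigma y)$ and let $\tilde f$ be the bounded solution of $\tilde f'(y)-y\tilde f(y)=\tilde h(y)-\E[\tilde h(Z)]$. A direct substitution shows that $x\mapsto \sigma^{-1}\tilde f((x-\mu)/\sigma)$ solves Eq.\eqref{eq:stein}: $\sigma^2$ times its derivative is $\tilde f'$, and $(x-\mu)$ times it is $y\tilde f$. It is also bounded, so by the uniqueness in Definition~\ref{definition:stein_equation} it equals $f_{\mu,\sigma,h}$. Note that $\tilde h'=\sigma h'(\mu+\sigma\,\cdot)$ almost everywhere.

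\emph{Step 2: semigroup representation.} Let $P_\tau k(y)=\E[k(e^{-\tau}y+\sqrt{1-e^{-2\tau}}Z)]$ and define $g\defeq-\int_0^\infty P_\tau(\tilde h-\E\tilde h)\,\diff\tau$. This integral converges since $\tilde h$ is Lipschitz, so $|P_\tau\tilde h(y)-\E\tilde h(Z)|=O(e^{-\tau}(1+|y|))$. The standard generator identity $\frac{\diff}{\diff\tau}P_\tau k=(k''-y k')\circ$ (applied to $P_\tau k$) gives $g''-yg'=\tilde h-\E\tilde h$. Differentiating under the integral, with $\frac{\diff}{\diff y}P_\tau k=e^{-\tau}P_\tau k'$, gives $\tilde f\defeq g'=-\int_0^\infty e^{-\tau}P_\tau\tilde h'\,\diff\tau$. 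This $\tilde f$ is bounded by $\|\tilde h'\|_\infty$ and satisfies $\tilde f'-y\tilde f=\tilde h-\E\tilde h$, so it is the bounded solution. One could alternatively verify this directly, using Gaussian integration by parts $\E[k(a+bZ)Z]=b\E[k'(a+bZ)]$ together with the ODE. Undoing the scaling from Step 1 yields
\begin{equation*}
f_{\mu,\sigma,h}(x)=-\int_0^\infty e^{-\tau}\,\E_{Z'}\spar{h'\rpar{\mu+e^{-\tau}(x-\mu)+\sigma\sqrt{1-e^{-2\tau}}Z'}}\diff\tau .
\end{equation*}

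\emph{Step 3: plug in.} Take $\mu=s_{t-1}$, $\sigma=\rho_{t-1}$ and $x=s_{t-1}+\rho_tZ$ with $Z\perp Z'$. The argument of $h'$ becomes $s_{t-1}$ plus a centered Gaussian with variance
\begin{equation*}
e^{-2\tau}\rho_t^2+\rho_{t-1}^2(1-e^{-2\tau})=\rho_{t-1}^2-e^{-2\tau}c_t .
\end{equation*}
Since $h'$ is bounded, Fubini applies. Recognizing $e^{-\tau}\diff\tau$ as the law of $\tau\sim\mathrm{Exp}(1)$ then gives the claimed formula.

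\emph{Main obstacle.} The delicate point is rigor around $h'$, which exists only almost everywhere. We must justify differentiating $P_\tau\tilde h$ and applying the generator identity for a merely Lipschitz $\tilde h$. The plan is to use the Gaussian smoothing: for $\tau>0$, $P_\tau\tilde h$ is $C^\infty$ and its derivative equals $e^{-\tau}P_\tau\tilde h'$ because the Gaussian law is absolutely continuous. Similarly, in the final formula the variance $\rho_{t-1}^2-e^{-2\tau}c_t\ge\rho_t^2$ is positive except possibly at $\tau=0$ when $\rho_t=0$, which is a null set. Hence the expectation of $h'$ is well defined regardless of the choice of version of $h'$.
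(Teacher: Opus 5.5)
Your proof is correct, but for the main (nondegenerate) case it takes a different route from the paper. The paper uses the OU representation Eq.~\eqref{eq:ou_rep} only in the degenerate case $\rho_t=0$, where $x_t=f_{s_{t-1},\rho_{t-1},h}(s_{t-1})$. For $\rho_t>0$ it instead sets $u(c)=\E_Z[f_{\mu,\sigma,h}(\mu+\sqrt{\sigma^2-c}Z)]$ and differentiates the Stein equation. It then applies Stein's lemma twice, once to $f'$ under $\calN(\mu,\sigma^2-c)$ and once to compute $u'(c)$, which gives the ODE $2cu'(c)+u(c)=-\E_Z[h'(\mu+\sqrt{\sigma^2-c}Z)]$. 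Integrating $(\sqrt{c}\,u(c))'$ yields the $\mathrm{Beta}(\frac12,1)$ form, and the $\mathrm{Exp}(1)$ form follows by the substitution $\tau\mapsto e^{-2\tau}$.

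You instead plug $x=s_{t-1}+\rho_tZ$ directly into Eq.~\eqref{eq:ou_rep} and merge the two independent Gaussians via $e^{-2\tau}\rho_t^2+(1-e^{-2\tau})\rho_{t-1}^2=\rho_{t-1}^2-e^{-2\tau}c_t$. This treats $\rho_t=0$ and $\rho_t>0$ uniformly, lands directly on the $\mathrm{Exp}(1)$ form, and avoids both second derivatives of $f_{\mu,\sigma,h}$ and the ODE in $c$. In effect, you extend the paper's Part~1 argument to all cases.

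Your re-derivation of Eq.~\eqref{eq:ou_rep} in Steps~1--2 is extra work, since the paper cites it as standard, but it is sound. Interchanging the generator with the $\tau$-integral needs $(P_\tau\tilde h)''=O(\tau^{-1/2})$ near $\tau=0$, which follows from Gaussian integration by parts as you suggest.

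Each route has its payoff. The paper's route produces natively the $\mathrm{Beta}(\frac12,1)$ representation, which is the one used for the closed forms in Appendix~\ref{section:reformulation}. It also gives the identity $u(0)=-\E_Z[h'(\mu+\sigma Z)]$, which links $x_t$ to its small-$c_t$ approximation. Yours gives a shorter, case-free argument.
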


Suppose for some $t\in[1:T]$ we have $\rho_{t-1}=\sqrt{T-t+1}$ and $c_t\approx 0$. By Lemma~\ref{lemma:equivalent_short}, $x_t\approx -\E_{Z}[h'(s_{t-1}+\sqrt{T-t+1}Z)]$, which we define as the output $\tilde x_t$ of a simpler, $c_t$-independent analogue of Algorithm~\ref{algorithm:main}. This can be justified from the following two perspectives. 

\paragraph{Universality and regularization} The fundamental difficulty in OLO originates from the uncertainty in the adversary. Consider the $t$-th round where $g_{1:t-1}$ is known but $g_{t:T}$ is not. If the algorithm is given the oracle knowledge that $g_{t:T}$ is sampled iid from a known distribution $\calD$, then the distribution of $\sum_{t=1}^Tg_t$ is also known. $x_t$ can then be assigned as the expectation of the best-in-hindsight decision, $x_t\leftarrow\E[\argmin_{u\in[-1,1]}\sum_{t=1}^Tg_tu]$. Tie-breaking of $\argmin$ is arbitrary assuming the density exists. 

The intuition of $\tilde x_t$ follows from this idea, but with two more ingredients. First, if CLT can be applied, then the distribution of $\sum_{t=1}^Tg_t$ is insensitive to $\calD$, as it can be approximated by $\calN(s_{t-1},T-t+1)$ as long as $T\gg t$. This embodies the universality principle in high-dimensional probability \citep{o2014analysis,van2014probability}. Second, the function $h$ introduces certain inductive bias, therefore the algorithm should minimize the regularized objective,
\begin{equation}\label{eq:regularized}
\tilde x_t=\E_{X\sim\calN(s_{t-1},T-t+1)}\spar{\argmin_{u\in[-1,1]}Xu+h^*(-u)}=-\E_{Z}[h'(s_{t-1}+\sqrt{T-t+1}Z)].
\end{equation}
Picking $h(s)=|s|$ gives a constant convex conjugate $h^*$, corresponding to the case without regularization. 

\paragraph{Continuous time dynamics} Another interpretation of $\tilde x_t$ is via the \emph{potential method} in online learning \citep{cesa2006prediction,orabona2025modern}. Let $\varphi_T:(-\infty,T]\times\R\rightarrow\R$ be the \emph{heat potential}
\begin{equation*}
\varphi_T(t,s)\defeq\E_Z[h(s+\sqrt{T-t}Z)],
\end{equation*}
which is the unique solution of the \emph{backward heat equation} (BHE) $(\partial_1+\half\partial_{22})\varphi_T=0$ under the terminal condition $\varphi_T(T,s)=h(s)$. It is straightforward to verify $\tilde x_t=-\partial_2\varphi_T(t-1,s_{t-1})$. This is the continuous time limit of the potential method, whose idea underlies a growing stream of recent works on better-than-big-O rates in OLO \citep{drenska2020prediction,kobzar2020a_new,greenstreet2022efficient,zhang2022pde,harvey2023optimal,harvey2024continuous,zhang2024improving}. It is also connected to Eq.\eqref{eq:regularized} via the equivalence between the potential method and the \emph{Follow the Regularized Leader} (FTRL) algorithm on linear losses \citep[Section~7.3]{orabona2025modern}. 

Going from $\tilde x_t$ to $x_t$, the key challenge is rigorously accounting for nonzero $c_t$. In particular, the bottleneck has been the \emph{discretization argument}: showing that a suitable modification of $\tilde x_t$ (equivalently, a numerical scheme on the BHE) works well against discrete time adversaries. Stein's method offers substantial advantages for this purpose. 

\paragraph{Computational complexity} By Lemma~\ref{lemma:equivalent_short}, $x_t$ can be implemented by a two-dimensional Gaussian quadrature whose time complexity is independent of $t$ and $T$. Assuming this computational subroutine is exact (which is the convention of the field), the conclusion is that Algorithm~\ref{algorithm:main} takes $O(1)$ time per round. In addition, the $h$ function of interest is sometimes simple enough such that $x_t$ can be explicitly reduced to common Gaussian-integral-type special functions. Examples include the error function and the \emph{Owen's T function}, for which fast and stable specialized implementations exist. This is demonstrated through concrete special cases in Appendix~\ref{section:reformulation}. 

\section{Analysis}\label{section:analysis}

The main result of this work is Theorem~\ref{theorem:main}, the total loss upper bound of Algorithm~\ref{algorithm:main}. An equivalent regret bound follows from the loss-regret duality (Corollary~\ref{corollary:regret} in Appendix~\ref{section:omitted_proofs}). Applications and extensions are presented in Section~\ref{section:dominate_ogd}, Appendix~\ref{section:dominate_mwu}, Appendix~\ref{section:soft_thresholded} and Appendix~\ref{section:stochastic}. 

\begin{theorem}[Main result; upper bound on $\loss_T$]\label{theorem:main}
Against all adversaries (potentially unbounded a priori), Algorithm~\ref{algorithm:main} given any convex and $1$-Lipschitz function $h$ guarantees
\begin{equation*}
\loss_T\leq \underbrace{-\E_{Z} \spar{h\rpar{\sum_{t=1}^Tg_t+\rho_T Z}}+\E_{Z}[h(\rho_0Z)]}_{\eqdef -\bar\psi^*_T(-\sum_{t=1}^Tg_t)}+\underbrace{\sum_{t=1}^T\rpar{\sqrt{\frac{2}{\pi}}\frac{\max\{g_t^2-c_t,0\}}{\rho_{t-1}}+\frac{2c_t\abs{g_t}+\abs{g_t^3}}{\rho_{t-1}^2}}}_{\eqdef \err_T}.
\end{equation*}
\end{theorem}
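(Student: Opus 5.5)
The plan is a telescoping potential argument in the style of R\"ollin's martingale CLT proof, with the Stein equation doing the discretization. Define the potential $\Phi_t\defeq\E_Z[h(s_t+\rho_tZ)]$ for $t\in[0:T]$. Then $\Phi_0=\E_Z[h(\rho_0Z)]$ and $\Phi_T=\E_Z[h(\sum_{t=1}^Tg_t+\rho_TZ)]$, so the claimed bound is
\begin{equation*}
\loss_T\leq -(\Phi_T-\Phi_0)+\err_T .
\end{equation*}
It therefore suffices to prove the per-round inequality $g_tx_t+\Phi_t-\Phi_{t-1}\leq \err_t$, where $\err_t$ is the $t$-th summand of $\err_T$, and then sum over $t$.

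Fix $t$ and write $f\defeq f_{s_{t-1},\rho_{t-1},h}$ and $Y\defeq s_{t-1}+\rho_tZ$. By Eq.\eqref{eq:algorithm}, $x_t=\E[f(Y)]$. Since $\rho_{t-1}^2=\rho_t^2+c_t$, we have $\Phi_{t-1}=\E_Z[h(s_{t-1}+\rho_{t-1}Z)]$, which is exactly the centering constant in the Stein equation \eqref{eq:stein} with $\mu=s_{t-1}$ and $\sigma=\rho_{t-1}$. Evaluating \eqref{eq:stein} at $x=Y+g_t=s_t+\rho_tZ$ and taking expectations gives
\begin{equation*}
\Phi_t-\Phi_{t-1}=\E\spar{\rho_{t-1}^2f'(Y+g_t)-(\rho_tZ+g_t)f(Y+g_t)}.
\end{equation*}
Gaussian integration by parts gives $\E[\rho_tZf(Y+g_t)]=\rho_t^2\E[f'(Y+g_t)]$; this is valid because $f'$ is Lipschitz, and it is trivially true when $\rho_T=0$. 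Hence $\Phi_t-\Phi_{t-1}=\E[c_tf'(Y+g_t)-g_tf(Y+g_t)]$. This is R\"ollin's ``variance splitting'' trick, and it yields the exact identity
\begin{equation*}
g_tx_t+\Phi_t-\Phi_{t-1}=\E\spar{c_tf'(Y+g_t)-g_t\rpar{f(Y+g_t)-f(Y)}}.
\end{equation*}

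Next I Taylor expand, using the Lipschitzness of $f'$. We have $|f'(Y+g_t)-f'(Y)|\leq|g_t|\norm{f''}_\infty$ and $|f(Y+g_t)-f(Y)-g_tf'(Y)|\leq\half g_t^2\norm{f''}_\infty$. Substituting, the right-hand side is at most
\begin{equation*}
(c_t-g_t^2)\E[f'(Y)]+\rpar{c_t|g_t|+\half|g_t|^3}\norm{f''}_\infty .
\end{equation*}
Now I invoke the Stein factors (Lemma~\ref{lemma:stein_magnitude}) with $\norm{h'}_\infty\leq1$:
\begin{equation*}
\norm{f'}_\infty\leq\sqrt{\tfrac{2}{\pi}}\,\frac{1}{\rho_{t-1}},\qquad \norm{f''}_\infty\leq\frac{2}{\rho_{t-1}^2}.
\end{equation*}
The second bound turns the Taylor remainder into exactly $(2c_t|g_t|+|g_t|^3)/\rho_{t-1}^2$. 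For the leading term, if $g_t^2\geq c_t$ then $(c_t-g_t^2)\E f'(Y)\leq(g_t^2-c_t)\norm{f'}_\infty$, which gives the $\sqrt{2/\pi}\max\{g_t^2-c_t,0\}/\rho_{t-1}$ term. If instead $c_t>g_t^2$, I need the term to be nonpositive, i.e.\ $f'\leq0$.

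The main obstacle is therefore the monotonicity claim $f_{\mu,\sigma,h}'\leq 0$ for convex $h$ (for $t=T$, $Y$ is deterministic, so this must hold pointwise and not merely in expectation). I would first try the explicit semigroup representation behind Lemma~\ref{lemma:equivalent_short}:
\begin{equation*}
f_{\mu,\sigma,h}(x)=-\int_0^\infty e^{-\tau}\,\E_Z\spar{h'\rpar{\mu+e^{-\tau}(x-\mu)+\sqrt{1-e^{-2\tau}}\,\sigma Z}}\diff\tau .
\end{equation*}
This is the Ornstein--Uhlenbeck (Mehler) form of the Stein solution, and it is consistent with Lemma~\ref{lemma:equivalent_short}. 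Since $h'$ is nondecreasing by convexity, the integrand is nondecreasing in $x$, so $f$ is nonincreasing and $f'\leq0$ almost everywhere. Should this representation need care with the normalization, I would fall back on differentiating the Stein equation and applying the maximum-principle argument used for Stein factors, showing that $f'$ solves a Stein-type equation with the nonnegative target $h''$. Once $f'\leq 0$ is established, summing the per-round bound over $t\in[1:T]$ completes the proof.
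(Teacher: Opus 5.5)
Your proposal is correct and follows essentially the same route as the paper. The shared steps are: the Lindeberg-style telescoping of $\E_Z[h(s_t+\rho_tZ)]$; the Stein equation evaluated at $s_t+\rho_tZ$, with Stein's lemma absorbing the $\rho_t^2$ part of the variance; a second-order Taylor expansion around $s_{t-1}+\rho_tZ$; the Stein factors for $f'$ and $f''$; and the monotonicity $f'\leq 0$ obtained from the Ornstein--Uhlenbeck representation \eqref{eq:ou_rep}, as in Lemma~\ref{lemma:monotonicity}. The only point to tighten is that "nonincreasing" together with $f\in C^1$ gives $f'\leq 0$ everywhere, not merely almost everywhere, and that pointwise version is exactly what the deterministic $t=T$ case needs.
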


To parse this result, we divide the RHS into two parts, respectively defined as $-\bar\psi^*_T(-\sum_{t=1}^Tg_t)$ and $\err_T$. 
\begin{itemize}
\item $-\bar\psi^*_T(-\sum_{t=1}^Tg_t)$ is the primary component of the bound due to the continuous time dynamics of the OLO game. The most interpretable case is when $\rho_T=0$ and $h$ is even, meaning that the function $\bar\psi^*_T$ is a vertical shift of $h$. Generally, $\bar\psi^*_T$ mirrors $\psi^*_T$ from the targeted total loss bound Eq.\eqref{eq:loss_bound}: it is convex, $1$-Lipschitz, and analogous to Cover's achievability condition Eq.\eqref{eq:achievability_cover} we have $\E_{X\sim \calN(0,\rho_0^2-\rho_T^2)}[\bar\psi_T^*(X)]=0$. Achieving $\loss_T\leq -\bar\psi^*_T(-\sum_{t=1}^Tg_t)$ would be ideal, but as discussed previously, additional discretization error is necessary. Controlling this error has been the bottleneck in prior works. 
\item The key value of Theorem~\ref{theorem:main} is showing that the discretization error $\err_T$ of Algorithm~\ref{algorithm:main} can be controlled uniformly over all convex and Lipschitz $h$ functions. Moreover, $\err_T$ is a lower order term against many constrained adversary classes, thereby addressing the aforementioned bottleneck. For example, against the simple adversary class satisfying $\abs{g_t}\leq 1$ for all $t$, setting $h(x)=\abs{x}$ and $\rho_t=\sqrt{T-t}$ in Algorithm~\ref{algorithm:main} gives $\bar\psi^*_T(s)=\abs{s}-\sqrt{\frac{2}{\pi}T}$ for all $s\in\R$ while $\err_T=O(\log T)$. This corresponds to the particular performance tradeoff in Example~\ref{example:worst}; also see Section~\ref{subsection:absolute} for the equivalent argument on the regret. 
\end{itemize}

The following result shows that against adversaries satisfying $\abs{g_t}\leq 1$, Theorem~\ref{theorem:main} with $\rho_t=\sqrt{T-t}$ is optimal modulo the additive $O(\log T)$ factor. The proof is based on the standard nonasymptotic Wasserstein CLT for iid sums (e.g., \citep[Theorem~3.2]{ross2011fundamentals}), presented in Appendix~\ref{section:omitted_proofs}. 

\begin{restatable}[Lower bound on $\loss_T$]{theorem}{lossLower}\label{theorem:loss_lower}
There exists an absolute constant $c>0$ such that the following holds. For any OLO algorithm and any $1$-Lipschitz function $h:\R\rightarrow\R$, there exists a Boolean adversary ($g_t\in\{-1,1\}$ for all $t$) inducing
\begin{equation*}
\loss_T> -h\rpar{\sum_{t=1}^Tg_t}+\E_{Z}[h(\sqrt{T}Z)]-c.
\end{equation*}
\end{restatable}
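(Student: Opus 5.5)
We prove the statement by a probabilistic method: take the adversary to be i.i.d.\ Rademacher, independent of the algorithm. Let $\epsilon_1,\ldots,\epsilon_T$ be independent Rademacher variables and set $g_t=\epsilon_t$ regardless of the algorithm's decisions. This is a legitimate Boolean adversary once the randomness is fixed; the argument below then shows that some realization of the signs works. Since the algorithm picks $x_t=a_t(g_{1:t-1})$ before $g_t$ is revealed, $x_t$ is measurable with respect to $\epsilon_{1:t-1}$. Hence $\E[g_tx_t]=\E[x_t\,\E[\epsilon_t\mid\epsilon_{1:t-1}]]=0$, and so $\E[\loss_T]=0$ for every algorithm, deterministic or not. (If the algorithm is randomized, we condition on its internal randomness first.)

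Next we compare with the right-hand side. Write $S_T=\sum_{t=1}^T\epsilon_t$, so $S_T\sim\rs(T)$. By linearity,
\begin{equation*}
\E\spar{\loss_T+h(S_T)-\E_Z[h(\sqrt{T}Z)]}=\E[h(S_T)]-\E_Z[h(\sqrt{T}Z)].
\end{equation*}
Because $h$ is $1$-Lipschitz, the right-hand side is at least $-W_1(\rs(T),\calN(0,T))$, where $W_1$ is the Wasserstein-$1$ distance; this follows from Kantorovich--Rubinstein duality, or simply from any coupling. Write $S_T=\sqrt{T}\,W$ with $W=T^{-1/2}\sum_t\epsilon_t$. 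The standard nonasymptotic Wasserstein CLT for i.i.d.\ sums \citep[Theorem~3.2]{ross2011fundamentals} gives
\begin{equation*}
W_1(W,Z)\le \frac{1}{\sqrt{T}}\E|\epsilon_1|^3+\frac{\sqrt{2}}{\sqrt{\pi}}\frac{\sqrt{\E\epsilon_1^4}}{\sqrt{T}}=\frac{C_0}{\sqrt{T}}
\end{equation*}
for an absolute constant $C_0$. Scaling by $\sqrt{T}$ yields $W_1(\rs(T),\calN(0,T))\le C_0$, uniformly in $T$. Alternatively, we may cite the $O(1)$ normal-approximation bound \citep[Theorem~2]{zhu2014two} mentioned in the introduction.

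Combining the two steps, $\E[\loss_T+h(S_T)-\E_Z[h(\sqrt{T}Z)]]\ge -C_0>-c$ for any $c>C_0$, say $c=C_0+1$. A random variable with expectation greater than $-c$ must exceed $-c$ with positive probability. So some sign sequence $(g_t)\in\{-1,1\}^T$ satisfies $\loss_T>-h(\sum_tg_t)+\E_Z[h(\sqrt{T}Z)]-c$. This sign sequence is a fixed deterministic sequence, and a fixed sequence is a valid adversary in the protocol (with $l_t$ constant in $x_{1:t}$). This is the strict inequality claimed. Strictness comes from choosing $c$ strictly larger than $C_0$, which handles the edge case where the expectation equals $-C_0$ exactly.

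The only nontrivial ingredient is the uniform-in-$T$ Wasserstein bound of the CLT step. Everything else is the martingale identity $\E[\loss_T]=0$. One subtlety remains for randomized algorithms: the chosen sign sequence may depend on the algorithm's internal randomness. Since the protocol defines algorithms deterministically through the maps $a_t$, this causes no issue, and the sign sequence is simply chosen for the given $(a_t)$.
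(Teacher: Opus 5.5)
Your proof is correct and is essentially the paper's argument. The i.i.d.\ Rademacher adversary forces $\E[\loss_T]=0$ (the paper phrases this as a $0<0$ contradiction rather than as a probabilistic-method existence claim, which is equivalent), and the uniform-in-$T$ Wasserstein bound $1+\sqrt{2/\pi}$ from \citep[Theorem~3.2]{ross2011fundamentals} then replaces $\E[h(S_T)]$ by $\E_Z[h(\sqrt{T}Z)]$ at an additive constant cost.
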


\paragraph{Technical sketch} The crux of this work is the proof of Theorem~\ref{theorem:main}, which is short and considerably different from all known analyses in OLO. It is inspired by the work of R{\"o}llin \citep{rollin2018quantitative} combining Lindeberg's method and Stein's method to prove a quantitative (i.e., nonasymptotic) Wasserstein martingale CLT; that is, bounding the $1$-Wasserstein distance between a martingale and an appropriately scaled normal distribution. 

By its dual representation, the $1$-Wasserstein distance is an \emph{integral probability metric} over the class of $1$-Lipschitz functions. Fixing any function $h$ in this class, our key observation is that it is still valid to apply R{\"o}llin's argument on arbitrary data sequences generated by the OLO adversary. While such sequences are not martingales, the difference in the analysis is an additional bias term which is precisely the $\loss_T$ induced by Algorithm~\ref{algorithm:main}. In other words, Theorem~\ref{theorem:main} (and its generalization allowing randomized adversaries) may be viewed as an algorithmic analogue of R{\"o}llin's CLT, where the function $h$ associated with the algorithm serves as the \emph{witness function} measuring the statistical distance. This will be further discussed in Appendix~\ref{section:stochastic}. 

While the $1$-Wasserstein distance is defined over all $1$-Lipschitz functions, in Theorem~\ref{theorem:main} we also require $h$ to be convex, differing from R{\"o}llin's argument in the probabilistic context. This is motivated by the duality argument from Section~\ref{section:introduction}, and technically, it ensures that the function $f_{\mu,\sigma,h}$ is nonincreasing, leading to a low $\err_T$ through Eq.\eqref{eq:delta_convexity} in the proof. Nonconvex performance tradeoffs in OLO is generally a more delicate subject, which we further discuss in Appendix~\ref{section:nonconvex}.

\begin{proof}[Proof of Theorem~\ref{theorem:main}]
The proof is divided into the following steps. It uses several standard technical lemmas of Stein's method, provided in Appendix~\ref{section:background_stein}.

\paragraph{Step 1} The first step is the telescoping argument from Lindeberg's method. Let $Z_0,\ldots,Z_T$ be a sequence of iid standard normal random variables. For all $t\in[0:T]$, define
\begin{equation*}
m_t\defeq \rho_TZ_0+\sum_{i=1}^T\rpar{g_i\bm{1}[i\leq t]+\sqrt{c_i}Z_i\bm{1}[i> t]}\stackrel{d}{=}s_t+\rho_tZ_t,
\end{equation*}
where $\stackrel{d}{=}$ denotes the equality of marginal distributions (with respect to the randomness of $Z_{0:T}$). Starting from the telescopic sum $h(m_T)-h(m_0)=\sum_{t=1}^T\rpar{h(m_t)-h(m_{t-1})}$, we take the expectation with respect to $Z_{0:T}$ to obtain 
\begin{equation}
\E_{Z_T} \spar{h\rpar{\sum_{t=1}^Tg_t+\rho_T Z_T}}=\E_{Z_0}\spar{h(\rho_0Z_0)}+\sum_{t=1}^T\rpar{\E_{Z_t}\spar{h\rpar{s_t+\rho_tZ_t}}-\E_{Z_{t-1}}\spar{h\rpar{s_{t-1}+\rho_{t-1}Z_{t-1}}}}.\label{eq:telescopic}
\end{equation}
From this point the subscript of $Z$ is dropped. 

\paragraph{Step 2} The next step is using the Stein equation and \emph{Stein's lemma} (Lemma~\ref{lemma:stein_basic}) to simplify the expression under the sum for each $t\in[1:T]$. Since $f_{\mu,\sigma,h}$ satisfies the Stein equation Eq.\eqref{eq:stein}, 
\begin{equation*}
\rho_{t-1}^2f'_{s_{t-1},\rho_{t-1},h}(x)-(x-s_{t-1})f_{s_{t-1},\rho_{t-1},h}(x)=h(x)-\E_{Z}\spar{h(s_{t-1}+\rho_{t-1}Z)},\quad \forall x\in\R.
\end{equation*}
Plugging in $x\leftarrow s_t+\rho_tZ$ and taking the expectation with respect to $Z$, 
\begin{align*}
&\E_{Z}\spar{h(s_t+\rho_tZ)}-\E_{Z}\spar{h(s_{t-1}+\rho_{t-1}Z)}\\
=~&\E_{Z}\spar{\rho_{t-1}^2f'_{s_{t-1},\rho_{t-1},h}(s_t+\rho_tZ)-(s_t+\rho_tZ-s_{t-1})f_{s_{t-1},\rho_{t-1},h}(s_t+\rho_tZ)}\\
=~&\E_{Z}\spar{c_tf'_{s_{t-1},\rho_{t-1},h}(s_t+\rho_tZ)-g_tf_{s_{t-1},\rho_{t-1},h}(s_t+\rho_tZ)}\\
&\quad +\underbrace{\E_{Z}\spar{\rho_{t}^2f'_{s_{t-1},\rho_{t-1},h}(s_t+\rho_tZ)-\rho_tZf_{s_{t-1},\rho_{t-1},h}(s_t+\rho_tZ)}}_{=0}.\tag{Lemma~\ref{lemma:stein_basic}}
\end{align*}
On the last line, the integrability assumption required by Lemma~\ref{lemma:stein_basic} follows from Lemma~\ref{lemma:stein_magnitude} (Stein factors).

\paragraph{Step 3} The final step is plugging in Stein factors and exploiting the convexity of $h$. Recall that $f'_{s_{t-1},\rho_{t-1},h}$ is absolutely continuous. By Taylor's theorem with integral remainder, 
\begin{align*}
f'_{s_{t-1},\rho_{t-1},h}(s_t+\rho_tZ)&=f'_{s_{t-1},\rho_{t-1},h}(s_{t-1}+\rho_tZ)+g_t\int_0^1f''_{s_{t-1},\rho_{t-1},h}(s_{t-1}+\rho_tZ+\lambda g_t)\diff\lambda,\\
f_{s_{t-1},\rho_{t-1},h}(s_t+\rho_tZ)&=f_{s_{t-1},\rho_{t-1},h}(s_{t-1}+\rho_tZ)+g_t f'_{s_{t-1},\rho_{t-1},h}(s_{t-1}+\rho_tZ)\\
&\quad +g_t^2\int_0^1(1-\lambda)f''_{s_{t-1},\rho_{t-1},h}(s_{t-1}+\rho_tZ+\lambda g_t)\diff\lambda.
\end{align*}
Therefore, combining the above, 
\begin{align*}
&\E_{Z}\spar{h(s_t+\rho_tZ)}-\E_{Z}\spar{h(s_{t-1}+\rho_{t-1}Z)}\\
=~&-g_t\underbrace{\E_{Z}\spar{f_{s_{t-1},\rho_{t-1},h}(s_{t-1}+\rho_tZ)}}_{=x_{t}} +\underbrace{(c_t-g_t^2)\E_{Z}\spar{f'_{s_{t-1},\rho_{t-1},h}(s_{t-1}+\rho_tZ)}}_{\eqdef\triangle}\\
&+\underbrace{\E_{Z}\spar{c_t g_t\int_0^1f''_{s_{t-1},\rho_{t-1},h}(s_{t-1}+\rho_tZ+\lambda g_t)\diff\lambda-g_t^3\int_0^1(1-\lambda)f''_{s_{t-1},\rho_{t-1},h}(s_{t-1}+\rho_tZ+\lambda g_t)\diff\lambda}}_{\eqdef\Diamond}.
\end{align*}
For the terms $\triangle$ and $\Diamond$ on the RHS, we use Lemma~\ref{lemma:stein_magnitude} (Stein factors) and Lemma~\ref{lemma:monotonicity} ($f'_{s_{t-1},\rho_{t-1},h}$ is non-positive), 
\begin{align}
\triangle&\leq \sqrt{\frac{2}{\pi}}\rho^{-1}_{t-1}\cdot\max\{g_t^2-c_t,0\},\label{eq:delta_convexity}\\
\Diamond&\leq \frac{2c_t\abs{g_t}}{\rho_{t-1}^2}+\frac{2\abs{g_t^3}}{\rho_{t-1}^2}\int_0^1(1-\lambda)\diff\lambda= \frac{2c_t\abs{g_t}+\abs{g_t^3}}{\rho_{t-1}^2}.\nonumber
\end{align}

Combining all three steps completes the proof.
\end{proof}

\section{Application: Dominating OGD}\label{section:dominate_ogd}

This section instantiates our results with specific $h$ functions. We will assume $\abs{g_t}\leq 1$ for simplicity, therefore setting $\rho_t=\sqrt{T-t}$ gives $\err_T=O(\log T)$. Proofs are deferred to Appendix~\ref{section:omitted_proofs}, and the closed forms of the output $x_t$ are presented in Appendix~\ref{section:reformulation}. 

\subsection{Absolute Value Function}\label{subsection:absolute}

As a warm-up, picking $h(x)=\abs{x}$ corresponds to minimizing the uniform regret in Example~\ref{example:worst}. The following regret bound matches existing ones \citep{kobzar2020a_new,greenstreet2022efficient} in the optimal leading order term $\sqrt{\frac{2}{\pi}T}$.  

\begin{restatable}[Regret: absolute value]{corollary}{absoluteProp}\label{corollary:regret_absolute}
Assume $\abs{g_t}\leq 1$ for all $t$. If $h(x)=\abs{x}$, then Algorithm~\ref{algorithm:main} with $\rho_t=\sqrt{T-t}$ guarantees
\begin{equation*}
\reg_T(u)\leq \reg_T^\unif\leq \sqrt{\frac{2}{\pi}T}+O(\log T),\quad\forall u\in[-1,1].
\end{equation*}
\end{restatable}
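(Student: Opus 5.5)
We plan to specialize Theorem~\ref{theorem:main} to $h(x)=\abs{x}$ and $\rho_t=\sqrt{T-t}$, and then pass from the total loss bound to a uniform regret bound. The first inequality $\reg_T(u)\leq\reg^\unif_T$ is immediate from the definition in Example~\ref{example:worst}, since $\reg_T(u)$ is linear in $u$ and so is maximized at $u\in\{-1,1\}$. The remaining work is to bound $\reg^\unif_T$.

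\textbf{Main term.} With $\rho_T=0$ and $\rho_0=\sqrt{T}$ we have $\E_Z[h(\rho_TZ+\sum_t g_t)]=\abs{\sum_{t=1}^Tg_t}$. We also have $\E_Z[h(\rho_0Z)]=\sqrt{T}\,\E\abs{Z}=\sqrt{\frac{2}{\pi}T}$. Theorem~\ref{theorem:main} therefore gives
\begin{equation*}
\loss_T\leq -\abs{\sum_{t=1}^Tg_t}+\sqrt{\frac{2}{\pi}T}+\err_T .
\end{equation*}
Now $\reg^\unif_T=\max_{u\in\{-1,1\}}\bigl(\loss_T-u\sum_t g_t\bigr)=\loss_T+\abs{\sum_t g_t}$. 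Substituting the bound on $\loss_T$ cancels the absolute value, leaving $\reg^\unif_T\leq\sqrt{\frac{2}{\pi}T}+\err_T$. This is just the loss-regret duality of Lemma~\ref{lemma:duality} applied to the conjugate pair $\abs{s}-C\leftrightarrow$ constant $C$ on $[-1,1]$, but the direct computation is simpler to write.

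\textbf{Error term.} Here $c_t=\rho_{t-1}^2-\rho_t^2=1$ and $\abs{g_t}\leq1$, so $\max\{g_t^2-c_t,0\}=0$ and the $\sqrt{2/\pi}$ contribution vanishes. The remaining summand is at most $(2c_t\abs{g_t}+\abs{g_t}^3)/\rho_{t-1}^2\leq 3/(T-t+1)$. Summing over $t$ gives $\err_T\leq 3\sum_{k=1}^T 1/k\leq 3(1+\log T)=O(\log T)$.

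\textbf{Main obstacle.} There is essentially no obstacle; the only care needed is checking the hypotheses of Algorithm~\ref{algorithm:main}. We need $\rho_t>0$ for $t<T$, which holds since $\rho_t=\sqrt{T-t}\geq1$. We also need $\rho_T=0$ to be allowed, which it is, since $\rho_T$ is only required to lie in $[0,\rho_{T-1}]$. Finally, at $t=T$ the denominator is $\rho_{T-1}=1$, which is nonzero, so the harmonic sum is well defined with no division by zero.
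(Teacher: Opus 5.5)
Your proposal is correct and takes essentially the paper's route. The paper starts from Corollary~\ref{corollary:regret} (the regret form of Theorem~\ref{theorem:main} via Lemma~\ref{lemma:duality}) and notes $h^*(-u)=0$ on $[-1,1]$ and $\E_Z[h(\sqrt{T}Z)]=\sqrt{\frac{2}{\pi}T}$, which is exactly your direct computation $\reg^\unif_T=\loss_T+\abs{\sum_{t=1}^Tg_t}$. Your explicit harmonic-sum bound $\err_T\leq 3\sum_{k=1}^T 1/k$ fills in the step the paper leaves implicit.
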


\subsection{Huber Function}\label{subsection:huber}

A highlight of this work is picking $h$ as the \emph{Huber function}, whose inductive bias matches that of the OGD baseline. The following regret bound generalizes Corollary~\ref{corollary:regret_absolute} in the limit of $\eta\rightarrow\infty$. By the regret version of Theorem~\ref{theorem:loss_lower} (Corollary~\ref{corollary:regret_lower} in Appendix~\ref{section:omitted_proofs}), it cannot be improved by $\Omega(\log T)$ uniformly over $u\in[-1,1]$. 

\begin{restatable}[Regret: Huber]{corollary}{huberProp}\label{corollary:regret_huber}
Assume $\abs{g_t}\leq 1$ for all $t$. If $h(x)=\frac{\eta}{2}x^2\cdot\bm{1}[\abs{x}\leq \eta^{-1}]+(\abs{x}-\frac{1}{2\eta})\cdot\bm{1}[\abs{x}> \eta^{-1}]$ for $\eta=\frac{\alpha}{\sqrt{T}}$ and $\alpha\in\R_{>0}$, then Algorithm~\ref{algorithm:main} with $\rho_t=\sqrt{T-t}$ guarantees
\begin{equation*}
\reg_T(u)\leq\underbrace{\spar{\frac{u^2}{2\alpha}+\rpar{\alpha+\frac{1}{\alpha}} \Phi\rpar{\frac{1}{\alpha}}+\phi\rpar{\frac{1}{\alpha}}-\half\alpha-\frac{1}{\alpha}}}_{\eqdef \gamma_{\mathrm{Huber}}(u,\alpha)}\sqrt{T}+O(\log T),\quad\forall u\in[-1,1].
\end{equation*}
\end{restatable}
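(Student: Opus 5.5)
We instantiate Theorem~\ref{theorem:main} with the Huber $h$, $\rho_t=\sqrt{T-t}$ (so $c_t=1$ and $\rho_T=0$), and then pass to the regret through the loss--regret duality. Three things need to be done: bound the error term $\err_T$, convert the loss bound into a regret bound via the convex conjugate of $h$, and evaluate the Gaussian expectation $\E_Z[h(\sqrt{T}Z)]$ in closed form.

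\emph{Error term.} Since $\abs{g_t}\leq 1=c_t$, we have $\max\{g_t^2-c_t,0\}=0$. Hence each summand of $\err_T$ is at most $(2+1)/\rho_{t-1}^2=3/(T-t+1)$. Summing over $t$ gives $\err_T\leq 3\sum_{k=1}^T k^{-1}=O(\log T)$. With $\rho_T=0$ and $\rho_0=\sqrt T$, Theorem~\ref{theorem:main} then reads
\begin{equation*}
\loss_T\leq -h(S)+\E_Z[h(\sqrt T Z)]+O(\log T),\qquad S\defeq\sum_{t=1}^Tg_t .
\end{equation*}

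\emph{Conversion to regret.} Write $\reg_T(u)=\loss_T-uS$. Since $h$ is even, $-h(S)-uS=-h(-S)+u(-S)\leq\sup_{y\in\R}[uy-h(y)]=h^*(u)$. The Huber function is the Moreau envelope of $\abs{\cdot}$, so its conjugate is $h^*(u)=\frac{u^2}{2\eta}=\frac{\sqrt T u^2}{2\alpha}$ for $\abs u\leq 1$. One can check this directly: for $\abs u\leq1$ the supremum is attained at $y=u/\eta$, which lies in the quadratic region. Therefore
\begin{equation*}
\reg_T(u)\leq \frac{u^2}{2\alpha}\sqrt T+\E_Z[h(\sqrt T Z)]+O(\log T).
\end{equation*}
Equivalently, this step is Corollary~\ref{corollary:regret} applied with $\bar\psi^*_T$.

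\emph{Gaussian expectation.} Let $a=1/\alpha$. Using $\eta\sqrt T=\alpha$, we get $h(\sqrt T z)=\sqrt T\,[\frac{\alpha}{2}z^2\bm 1[\abs z\leq a]+(\abs z-\frac{a}{2})\bm 1[\abs z>a]]$. We use the standard truncated moments
\begin{equation*}
\E[Z^2\bm 1[\abs Z\leq a]]=2\Phi(a)-1-2a\phi(a),\quad \E[\abs Z\bm 1[\abs Z>a]]=2\phi(a),\quad \Pr[\abs Z>a]=2(1-\Phi(a)).
\end{equation*}
The first follows by integrating $z^2\phi(z)$ by parts. Combining these and using $\alpha a=1$,
\begin{equation*}
\frac{\E_Z[h(\sqrt TZ)]}{\sqrt T}=\frac{\alpha}{2}\rpar{2\Phi(a)-1}-\phi(a)+2\phi(a)-\frac1\alpha\rpar{1-\Phi(a)}=\rpar{\alpha+\frac1\alpha}\Phi\rpar{\frac1\alpha}+\phi\rpar{\frac1\alpha}-\frac{\alpha}{2}-\frac1\alpha .
\end{equation*}
Adding the $\frac{u^2}{2\alpha}$ term gives exactly $\gamma_{\mathrm{Huber}}(u,\alpha)\sqrt T+O(\log T)$.

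None of these steps is deep. The only places that need care are the following. First, checking that the conjugate is evaluated on $[-1,1]$, where it is purely quadratic. Second, keeping track of the truncation bookkeeping in the Gaussian integral, in particular the $-2a\phi(a)$ boundary term, which combines with $\alpha a=1$ to produce the single $+\phi(1/\alpha)$. I expect this arithmetic to be the main source of potential error, so I would double-check it by confirming that $\alpha\to\infty$ recovers $\sqrt{2/\pi}$, consistent with Corollary~\ref{corollary:regret_absolute}.
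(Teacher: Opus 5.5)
Your proposal is correct and follows essentially the same route as the paper: it uses the regret form of Theorem~\ref{theorem:main} (Corollary~\ref{corollary:regret}) with $\rho_t=\sqrt{T-t}$, the conjugate value $h^*(\pm u)=u^2/(2\eta)$ on $[-1,1]$, and a closed-form evaluation of $\E_Z[h(\sqrt{T}Z)]$. You also spell out the $O(\log T)$ bound on the discretization error and the duality step, which the paper leaves implicit, and your truncated-moment computation agrees with the paper's general-$v$ formula at $v=\sqrt{T}$.
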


Since the algorithm takes $O(1)$ time per round, we compare it to the OGD baseline with constant learning rate $\eta=\frac{\alpha}{\sqrt{T}}$ which outputs $x_t=\Pi_{[-1,1]}(x_{t-1}-\eta g_{t-1})$. Against the same adversary class, the OGD baseline guarantees $\reg_T(u)\leq \half(\alpha^{-1}u+\alpha)\sqrt{T}$ \cite[Theorem~2.13]{orabona2025modern}, where the prefactor of $\sqrt{T}$ will be denoted as $\gamma_{\mathrm{OGD}}(u,\alpha)$. To our knowledge, this is the sharpest known regret bound of OGD that depends on $u$, $\alpha$ and $T$. As $O(\log T)$ in Corollary~\ref{corollary:regret_huber} is a lower order term, we focus on the prefactors of $\sqrt{T}$ and define the \emph{margin of improvement} as
\begin{equation*}
\mathrm{Gap}_{\mathrm{OGD}}(\alpha)\defeq\gamma_{\mathrm{OGD}}(u,\alpha)-\gamma_{\mathrm{Huber}}(u,\alpha)=\rpar{\alpha+\frac{1}{\alpha}}\spar{1-\Phi\rpar{\frac{1}{\alpha}}}-\phi\rpar{\frac{1}{\alpha}}.
\end{equation*}
The following observations can be made; also see Figure~\ref{figure:ogd} in Section~\ref{section:introduction} for visualizations.
\begin{itemize}
\item Consistent with its notation, $\mathrm{Gap}_{\mathrm{OGD}}(\alpha)$ is independent of $u$, and it is always positive due to a standard \emph{Mills ratio} estimate (Lemma~\ref{lemma:mills}). Therefore Corollary~\ref{corollary:regret_huber} dominates the regret bound of OGD for all $u\in[-1,1]$, while they are both incomparable to sharp uniform regret bounds, including our Corollary~\ref{corollary:regret_absolute} and the results of \citep{kobzar2020a_new,greenstreet2022efficient}. By Eq.\eqref{eq:loss_bound}, this ensures that the $\loss_T$ upper bound of our algorithm is lower than that of OGD regardless of $\sum_{t=1}^Tg_t$ -- that is, regardless of the difficulty of the adversary. 

\item $\mathrm{Gap}_{\mathrm{OGD}}(\alpha)$ grows unboundedly with $\alpha$. For all $u\in[-1,1]$, $\gamma_{\mathrm{OGD}}(u,\alpha)$ diverges as $\alpha\rightarrow\infty$, while
\begin{equation*}
\lim_{\alpha\rightarrow\infty}\gamma_{\mathrm{Huber}}(u,\alpha)=\lim_{\alpha\rightarrow\infty}\alpha\spar{\Phi\rpar{\frac{1}{\alpha}}-\Phi(0)}+\phi\rpar{\frac{1}{\alpha}}=2\phi(0)=\sqrt{\frac{2}{\pi}},
\end{equation*}
recovering Corollary~\ref{corollary:regret_absolute}. 

\item To reason about the ``effective learning rate'' of our algorithm, consider $\tilde x_t$ from Eq.\eqref{eq:regularized} which is the small-$c_t$ approximation of the output $x_t$. Suppose for simplicity that $s_{t-1}=0$ for some $t\in[1:T]$. By symmetry, we have $\tilde x_t=0$, meaning that $\tilde x_{t+1}$ only depends on $g_t$, $\eta$ and $\rho_t$. Linearizing it with respect to $g_t$ gives (see Remark~\ref{remark:effective_lr} for derivation)
\begin{equation*}
\tilde x_{t+1}-\tilde x_t=-\eta\cdot \erf\rpar{\frac{1}{\sqrt{2}\eta\rho_t}}g_t+o(g_t).
\end{equation*}
Treating the absolute value of the linear coefficient as the effective learning rate, we see that it is increasing with respect to $t$ and eventually reaches the nominal value $\eta$, the learning rate of the OGD baseline. This is different from typical OLO algorithms where the learning rate is constant (if $T$ is known) or decreasing (if $T$ is unknown). 
\end{itemize}

Other interesting cases of $h$ are analyzed in the appendix. 

\section{Conclusion}\label{section:conclusion}

Utilizing Stein's method, this paper presents a general, sharp and analytically scalable framework to design and analyze OLO algorithms. Inspired by R{\"o}llin's proof of a Wasserstein martingale CLT \citep{rollin2018quantitative}, our main result is a computationally efficient improvement to a seminal dynamic programming algorithm by Cover \citep{cover1966behavior}. Starting from there, we demonstrate a number of more specific quantitative benefits, including ($i$) dominating the total loss upper bounds of OGD and MWU; ($ii$) achieving optimal tradeoffs between the total loss and the uniform regret; and ($iii$) handling OLO with noisy feedback. Together, they imply an intriguing algorithmic relation between adversarial online learning and probabilistic limit theorems, whose applicability might extend much further. 

This work only studies the basic setting of OLO with a one-dimensional bounded domain and a known time horizon. Generalizing such conditions (that is, achieving additively sharp rates while allowing general dimension, unknown time horizon or unbounded domain) would be a highly valuable direction for future works. Other than OLO, the idea of universality may also benefit online learning problems with curved losses or bandit feedback, where little is known about better-than-big-O optimality. Finally, besides giving improvements to existing problems, we hope the connection to probabilistic limit theorems could help motivate and answer new problems in adversarial online learning, such as operationalizing the Kolmogorov distance versions of the martingale CLT. 

\section*{Acknowledgment}

This work is assisted by AI tools for literature search, preliminary calculation and writing. The authors take full responsibility for the originality and correctness of the results. 

\bibliography{Stein_OL}

\newpage
\appendix
\section*{Appendix}

\paragraph{Organization} Appendix~\ref{section:related} discusses related works, followed by the preliminaries of Stein's method in Appendix~\ref{section:background_stein}. Appendix~\ref{section:dominate_mwu} continues the argument in Section~\ref{section:dominate_ogd}, showing that a special case of Algorithm~\ref{algorithm:main} improves upon MWU. Appendix~\ref{section:soft_thresholded} shows that Algorithm~\ref{algorithm:main} can attain a continuum of optimal two-point tradeoffs between $\loss_T$ and $\reg^\unif_T$. Appendix~\ref{section:stochastic} extends our results to OLO with noisy feedback and further compares our result to R{\"o}llin's martingale CLT. Appendix~\ref{section:nonconvex} discusses the generalization to nonconvex performance tradeoffs in OLO. Appendix~\ref{section:omitted_proofs} contains omitted proofs. Finally, Appendix~\ref{section:reformulation} presents the closed-form expressions of $f_{\mu,\sigma,h}$ and $x_t$ in special cases, thus improving the concreteness of our algorithm. 

\section{Related Works}\label{section:related}

From a quantitative perspective, our work is related to two general research themes in online learning: ($i$) achieving leading-constant-optimal upper bounds on the uniform regret; and ($ii$) achieving comparator-dependent regret bounds. Besides those, from a methodological perspective, our work touches upon the deep connection between adversarial online learning and martingale inequalities. The latter has been studied by a line of works whose objectives and techniques are very different from ours. 

\paragraph{Sharp uniform regret bounds} Consider the uniform regret $\reg^\unif_T$ from Example~\ref{example:worst} which is the maximum of the regret over all comparators. While most works focus on the conventional big-O optimality when bounding the $\reg^\unif_T$, here we note several exceptional settings where sharper bounds have been achieved. Specifically for $\reg^\unif_T$, this precisely means achieving the optimal constant multiplier on the leading order term of the bound, without being subsumed by big O. 

As discussed in Section~\ref{subsection:motivation}, for the known-horizon version of OLO on a one-dimensional bounded domain (i.e., the setting of this work), \citep{cover1966behavior} gave the first algorithm achieving the optimal leading constant, and computational efficient improvements were later given by \citep{kobzar2020a_new,greenstreet2022efficient}. Generalizing the domain to higher dimensions while still achieving the leading constant optimality is however a delicate subject requiring a careful treatment of the shape of the domain. 
\begin{itemize}
\item The simple case is the $d$-dimensional Euclidean norm ball: for any $d\geq 2$, \citep{abernethy2008optimal} showed that OGD with constant learning rate achieves the optimal constant multiplier in front of the leading order term $\sqrt{T}$. 
\item The difficult case is the $d$-dimensional probability simplex (that is, the LEA problem with $d$ experts): MWU with constant learning rate is leading-constant-optimal in the regime of both $d\rightarrow\infty$ and $T\rightarrow\infty$ \citep[Theorem~3.7]{cesa2006prediction}. Only taking $T\rightarrow\infty$, precisely leading-constant-optimal algorithms for $d=2,3,4$ are give by \citep{cover1966behavior}, \citep{gravin2016towards} and \citep{bayraktar2020finite} respectively. The regime in between remains open. 
\end{itemize}
In a different direction, \citep{harvey2023optimal} achieved leading constant optimality for the \emph{anytime} version of OLO (i.e., $T$ is unknown by the algorithm) on a one-dimensional bounded domain. A common technical theme shared by these recent progresses is a continuous time perspective on the OLO problem, pioneered by \citep{zhu2014two,drenska2020prediction,harvey2023optimal} and realized by tools from PDE and stochastic analysis. The bottleneck is the discretization argument on the PDE, discussed in Section~\ref{section:algorithm}, for which Stein's method offers analytical advantages. 

\paragraph{Parameter-free OLO} Going beyond the uniform regret, there is a subfield within OLO studying the realization of $u$-dependent upper bounds on $\reg_T(u)$. Depending on the context, this is often called \emph{parameter-free online learning} or \emph{comparator-adaptive online learning}. See \citep[Section~10]{orabona2025modern} for an exposition. 

A key motivation of this topic originates from the practical need to handle unbounded domains. In one dimension, it means the domain of the algorithm is $\R$ as opposed to $[-1,1]$. The definition of $\reg_T^\unif$ becomes vacuous, therefore the typical quantitative objective is to achieve a $u$-dependent upper bound on $\reg_T(u)$ whose dependence on both $\abs{u}\rightarrow\infty$ and $T\rightarrow\infty$ is asymptotically optimal. Based on a similar technical backbone as \citep{harvey2023optimal}, an algorithm with the optimal leading constant is given by \citep{zhang2022pde}. Such results are also meaningful for the settings with a bounded domain, due to a projection technique of \citep{cutkosky2018black} showing that if a particular $u$-dependent regret bound is achievable in the unconstrained setting for all $u\in\R$, then it is also achievable in the constrained setting for all $u$ within the domain of the algorithm. 

Compared to the state of the art in parameter-free OLO, our work has a subtle but important difference: rather than considering both $\abs{u}\rightarrow\infty$ and $T\rightarrow\infty$, we keep $u$ fixed on a bounded domain and only consider the asymptotic regime of $T\rightarrow\infty$. Via the loss-regret duality (Lemma~\ref{lemma:duality}), this is closely related to the objectives of \citep{cover1966behavior}, which we use to motivate our techniques. Appendix~\ref{section:soft_thresholded} shows that an application of such techniques improves upon the combination of \citep{zhang2022pde} and \citep{cutkosky2018black}. 

\paragraph{Connection to martingale concentration} Conceptually, our work can be viewed as another evidence of the deep connection between adversarial online learning and martingale inequalities. In a broader context, such a connection has been studied by many others, including \citep{rakhlin2017equivalence,foster2018online,jun2019parameter,orabona2023tight,waudby2024estimating,agrawal2025eventually}, just to name a few. There is also an emerging field called game-theoretic probability \citep{shafer2019game} dedicated to the systematic study of this connection. 

Our work differs from the mentioned works in several major aspects. 
The most notable one is that while these works have characterized various intriguing relations between game-theoretic regret bounds and martingale \emph{concentration inequalities} (e.g., exponential tail probability bounds), we focus on such a relation with martingale CLTs on the probabilistic side. This allows us to obtain OLO algorithms with additively sharp rates which is beyond the reach of existing concentration-based arguments; for example, the seminal work of \citep{rakhlin2017equivalence} proposed a conversion from martingale concentration inequalities to the achievability of certain regret bounds, which, as noted by the authors, pays a price through the multiplying constants. Other differences include: ($i$) using Stein's method in adversarial online learning has not been considered by prior works; and ($ii$) we emphasize the somewhat unusual perspective of total loss minimization in OLO which includes uniform regret minimization as a special case, while the latter is often the starting point of the works mentioned above. 


\section{Preliminaries of Stein's Method}\label{section:background_stein}

This section presents the preliminaries of Stein's method, based on \citep{chen2010normal,nourdin2012normal}. 

\paragraph{Stein's lemma} The motivation of Stein's method comes from the celebrated Stein's lemma. We only need its special case for one-dimensional normal distributions \citep[Lemma~2.1]{chen2010normal}. 

\begin{lemma}[Stein's lemma]\label{lemma:stein_basic}
Let $X\sim\calN(\mu,\sigma^2)$. For all absolutely continuous function $f:\R\rightarrow\R$ such that $\E[\abs{f'(X)}]<\infty$, we have
\begin{equation*}
\sigma^2\E[f'(X)]=\E[(X-\mu)f(X)].
\end{equation*}
\end{lemma}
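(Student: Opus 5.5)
The plan is to prove Stein's lemma (Lemma~\ref{lemma:stein_basic}) by reducing to the standard normal and then using Gaussian integration by parts, written in Fubini form so that we only need $f$ absolutely continuous and $\E[\abs{f'(X)}]<\infty$. First we reduce: write $X=\mu+\sigma Z$ with $Z\sim\calN(0,1)$ and set $g(z)\defeq f(\mu+\sigma z)$. Then $g$ is absolutely continuous with $g'(z)=\sigma f'(\mu+\sigma z)$, so $\E[\abs{g'(Z)}]<\infty$. The claim $\sigma^2\E[f'(X)]=\E[(X-\mu)f(X)]$ becomes $\sigma\E[g'(Z)]=\sigma\E[Zg(Z)]$. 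It therefore suffices to show $\E[g'(Z)]=\E[Zg(Z)]$ for the standard normal.

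For the standard case we use the identity $\phi'(z)=-z\phi(z)$. It gives $\phi(x)=\int_x^\infty z\phi(z)\diff z=-\int_{-\infty}^x z\phi(z)\diff z$. Substituting into $\E[g'(Z)]=\int_0^\infty g'(x)\phi(x)\diff x+\int_{-\infty}^0 g'(x)\phi(x)\diff x$ produces double integrals. In the first one, exchanging the order of integration gives
\begin{equation*}
\int_0^\infty z\phi(z)\int_0^z g'(x)\diff x\diff z=\int_0^\infty z\phi(z)\rpar{g(z)-g(0)}\diff z,
\end{equation*}
where the second equality uses absolute continuity. The negative half-line is handled symmetrically. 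Adding the two pieces yields $\E[Z(g(Z)-g(0))]=\E[Zg(Z)]$, because $\E[Z]=0$.

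The only delicate step is justifying Fubini/Tonelli. We plan to bound the absolute double integral by
\begin{equation*}
\int_0^\infty\abs{g'(x)}\int_x^\infty z\phi(z)\diff z\diff x=\int_0^\infty\abs{g'(x)}\phi(x)\diff x,
\end{equation*}
which is finite by hypothesis, and similarly on the negative half-line. The same computation also shows that $\E[\abs{Z}\abs{g(Z)-g(0)}]<\infty$. Hence $\E[\abs{Zg(Z)}]<\infty$, so the right-hand side is well defined and no extra integrability assumption on $f$ is needed. This argument avoids boundary terms at infinity entirely, and with it the usual difficulty with naive integration by parts.
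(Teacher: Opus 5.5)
Your proof is correct, including the Fubini justification and your observation that $\E[|Zg(Z)|]<\infty$ follows from the same bound, so no extra integrability assumption is needed. The paper does not prove this lemma itself but cites Lemma~2.1 of \citep{chen2010normal}, whose proof is exactly your argument: write $\phi$ as a tail integral of $z\phi(z)$, swap the order of integration, and use $\E[Z]=0$. Your affine substitution $X=\mu+\sigma Z$ then supplies the general-variance version stated here.
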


\paragraph{Solution of Stein equation} Consider the Stein equation Eq.\eqref{eq:stein} with a convex $1$-Lipschitz function $h$, 
\begin{equation*}
\sigma^2f'(x)-(x-\mu)f(x)=h(x)-\E_Z[h(\mu+\sigma Z)],\quad \forall x\in\R. 
\end{equation*}
With a change of variable from the special case of $\mu=0$ and $\sigma=1$ \citep[Proposition~3.2.2 and Remark~3.2.4]{nourdin2012normal}, it is a standard fact that the unique bounded solution $f_{\mu,\sigma,h}$ is given by the two equivalent expressions,
\begin{align}
f_{\mu,\sigma,h}(x)&=\frac{1}{\sigma^2\phi_{\mu,\sigma}(x)}\int_{-\infty}^x(h(z)-\E_Z[h(\mu+\sigma Z)])\phi_{\mu,\sigma}(z)\diff z\label{eq:standard_rep}\\
&=-\frac{1}{\sigma^2\phi_{\mu,\sigma}(x)}\int_{x}^{\infty}(h(z)-\E_Z[h(\mu+\sigma Z)])\phi_{\mu,\sigma}(z)\diff z.\nonumber
\end{align}
Besides, the solution $f_{\mu,\sigma,h}$ admits another equivalent representation through the Ornstein–Uhlenbeck (OU) semigroup $P_t$,
\begin{equation*}
(P_th)(x)\defeq \E_Z\spar{h\rpar{\mu+e^{-t}(x-\mu)+\sigma\sqrt{1-e^{-2t}}Z}},
\end{equation*}
and since $h$ is differentiable almost everywhere,
\begin{align}
f_{\mu,\sigma,h}(x)&=-\frac{\partial}{\partial_x}\int_0^\infty \rpar{(P_\tau h)(x)-\E_Z[h(\mu+\sigma Z)]}\diff \tau\nonumber\\
&=-\int_0^\infty \frac{\partial}{\partial_x}(P_\tau h)(x)\diff \tau\nonumber\\
&=-\int_0^\infty e^{-\tau}\E_Z\spar{h'\rpar{\mu+e^{-\tau}(x-\mu)+\sigma\sqrt{1-e^{-2\tau}}Z}}\diff \tau.\label{eq:ou_rep}
\end{align}
This is a change of variable from \citep[Proposition~3.5.1]{nourdin2012normal}. 

\paragraph{Stein factors} When restricted to the class of Lipschitz $h$ functions, Stein factors refer to bounds on the essential supremum norm of $f_{\mu,\sigma,h}$, $f'_{\mu,\sigma,h}$ and $f''_{\mu,\sigma,h}$ via the Lipschitz constant of $h$. Here we present the version from \citep{gaunt2025stein} with a change of variable. Besides being useful for error control, it also certifies the integrability requirement of Lemma~\ref{lemma:stein_basic} in our analysis. 

\begin{lemma}[Stein factors]\label{lemma:stein_magnitude}
For all $1$-Lipschitz function $h$, 
\begin{equation*}
\norm{f_{\mu,\sigma,h}}_\infty\leq 1,\quad \norm{f'_{\mu,\sigma,h}}_\infty\leq \sqrt{\frac{2}{\pi}}\sigma^{-1},\quad \norm{f''_{\mu,\sigma,h}}_\infty\leq 2\sigma^{-2}.
\end{equation*}
\end{lemma}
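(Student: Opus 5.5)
The plan is to first reduce to the standard normal case $\mu=0$, $\sigma=1$ by scaling, and then establish the three Stein factors for the standard Stein equation.

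\emph{Reduction.} Given a $1$-Lipschitz $h$, define $\tilde h(y)\defeq\sigma^{-1}h(\mu+\sigma y)$. This function is again $1$-Lipschitz, and $\E_Z[\tilde h(Z)]=\sigma^{-1}\E_Z[h(\mu+\sigma Z)]$. Let $f_0\defeq f_{0,1,\tilde h}$ and put $f(x)\defeq f_0((x-\mu)/\sigma)$. With $y=(x-\mu)/\sigma$, a direct substitution gives
\begin{equation*}
\sigma^2f'(x)-(x-\mu)f(x)=\sigma\spar{f_0'(y)-yf_0(y)}=\sigma\spar{\tilde h(y)-\E_Z[\tilde h(Z)]}=h(x)-\E_Z[h(\mu+\sigma Z)].
\end{equation*}
Since $f$ is bounded, uniqueness in Definition~\ref{definition:stein_equation} yields $f=f_{\mu,\sigma,h}$. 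Consequently $\norm{f}_\infty=\norm{f_0}_\infty$, $\norm{f'}_\infty=\sigma^{-1}\norm{f_0'}_\infty$ and $\norm{f''}_\infty=\sigma^{-2}\norm{f_0''}_\infty$. It therefore suffices to prove $\norm{f_0}_\infty\leq1$, $\norm{f_0'}_\infty\leq\sqrt{2/\pi}$ and $\norm{f_0''}_\infty\leq 2$ for every $1$-Lipschitz target.

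\emph{First two factors.} Both follow from the OU representation Eq.\eqref{eq:ou_rep}. For the first, $|h'|\leq1$ a.e.\ gives directly
\begin{equation*}
|f_0(x)|\leq\int_0^\infty e^{-\tau}\,d\tau=1.
\end{equation*}
For the second, I differentiate under the integral in Eq.\eqref{eq:ou_rep}. This produces a factor $e^{-2\tau}$ and a term $\frac{d}{dx}\E_Z[h'(e^{-\tau}x+b_\tau Z)]$, where $b_\tau=\sqrt{1-e^{-2\tau}}$. Rather than differentiating $h'$, which may not exist, I move the derivative onto the Gaussian density by Gaussian integration by parts:
\begin{equation*}
\tfrac{d}{da}\E[h'(a+b_\tau Z)]=b_\tau^{-1}\E[Z\,h'(a+b_\tau Z)].
\end{equation*}
This is bounded in absolute value by $b_\tau^{-1}\E|Z|=b_\tau^{-1}\sqrt{2/\pi}$. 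It remains to integrate in $\tau$: with the substitution $u=e^{-2\tau}$,
\begin{equation*}
\int_0^\infty\frac{e^{-2\tau}}{\sqrt{1-e^{-2\tau}}}\,d\tau=\int_0^1\frac{du}{2\sqrt{1-u}}=1,
\end{equation*}
so $\norm{f_0'}_\infty\leq\sqrt{2/\pi}$. Differentiation under the integral is justified by dominated convergence, since the integrand bound is integrable.

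\emph{Third factor (main obstacle).} The bound $\norm{f_0''}_\infty\leq2$ is the delicate part, because a naive second integration by parts in the OU representation produces $b_\tau^{-2}$, which is not integrable at $\tau=0$. One alternative is to differentiate the Stein equation, which shows that $f_0'$ solves $g'-yg=h'+f_0$, and then apply the bounded-test-function Stein factors; however, this loses constants and gives roughly $4$ instead of $2$. I would therefore follow the classical argument of Chen--Goldstein--Shao (their Lemma 2.4), equivalently the sharpened version in \citep{gaunt2025stein}, and proceed as follows.
\begin{itemize}
\item Write $h(y)-\E h(Z)=\int_{-\infty}^{\infty}h'(z)\spar{\bm{1}[z\leq y]-\Phi(z)}dz$.
\item Insert this into Eq.\eqref{eq:standard_rep} to obtain an explicit kernel representation of $f_0''$ as $h'(y)$ plus an integral of $h'$ against a kernel involving $\Phi$, $1-\Phi$ and $\phi$.
\item Bound the total mass of that kernel by $1$ using Mills-ratio identities such as $y\Phi(y)+\phi(y)\geq0$.
\end{itemize}
This gives $|f_0''|\leq\norm{h'}_\infty(1+1)=2$. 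Given that this is a standard fact in the literature, I would cite it for the final constant and only verify the kernel-mass computation.
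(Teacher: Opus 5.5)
Your proposal is correct and takes essentially the same approach as the paper, which obtains the lemma from the standard-normal Stein factors of Gaunt (2025) through the change of variable $x\mapsto (x-\mu)/\sigma$; your rescaled target $\sigma^{-1}h(\mu+\sigma\,\cdot)$ makes that reduction explicit. Your self-contained OU-semigroup proofs of the first two bounds, and your Chen--Goldstein--Shao kernel sketch for the third (that kernel is nonnegative with total mass exactly $1$), go beyond the paper's bare citation and are sound.
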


\paragraph{Monotonicity} Finally, 
the following result states the monotonicity of $f_{\mu,\sigma,h}$ when the function $h$ is convex and Lipschitz. This is a straightforward observation from the semigroup representation Eq.\eqref{eq:ou_rep} as well as the fact that $f_{\mu,\sigma,h}\in C^1$. 

\begin{lemma}[Monotonicity]\label{lemma:monotonicity}
For all convex $1$-Lipschitz function $h$, $f'_{\mu,\sigma,h}(x)\leq 0$ for all $x\in\R$.
\end{lemma}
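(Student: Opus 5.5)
We prove Lemma~\ref{lemma:monotonicity} from the semigroup representation Eq.\eqref{eq:ou_rep}. The plan is to differentiate that formula in $x$ under the integral and expectation, using that $h'$ is nondecreasing when $h$ is convex.

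First we make sense of the derivative. Since $h$ is convex and $1$-Lipschitz, its derivative $h'$ exists almost everywhere, is nondecreasing, and satisfies $|h'|\leq 1$. Fix $\tau>0$. The map
\begin{equation*}
x\mapsto \E_Z\spar{h'\rpar{\mu+e^{-\tau}(x-\mu)+\sigma\sqrt{1-e^{-2\tau}}Z}}
\end{equation*}
is then a Gaussian convolution of a bounded monotone function. It is therefore smooth and nondecreasing in $x$. Its derivative can be computed by moving the derivative onto the Gaussian density, which gives
\begin{equation*}
\frac{e^{-\tau}}{\sigma\sqrt{1-e^{-2\tau}}}\,\E_Z\spar{Z\,h'\rpar{\mu+e^{-\tau}(x-\mu)+\sigma\sqrt{1-e^{-2\tau}}Z}}.
\end{equation*}
This is $\geq 0$ by Chebyshev's association inequality, since $Z$ and a nondecreasing function of $Z$ are positively correlated. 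Alternatively, nonnegativity follows directly from monotonicity of $h'$.

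Next we carry this to $f_{\mu,\sigma,h}$. Equation \eqref{eq:ou_rep} writes $f_{\mu,\sigma,h}(x)$ as minus an integral, over $\tau$ with weight $e^{-\tau}$, of these nondecreasing functions of $x$. It follows at once that $f_{\mu,\sigma,h}$ is nonincreasing: for $x<y$, each integrand at $y$ dominates that at $x$, and the boundedness $|h'|\leq 1$ keeps all integrals finite. A nonincreasing function has nonpositive derivative wherever the derivative exists.

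Finally we need $f'_{\mu,\sigma,h}(x)\leq 0$ at every $x$, not just almost everywhere. For this we invoke $f_{\mu,\sigma,h}\in C^1$. This holds from the Stein equation, since $f'=\sigma^{-2}\bigl((x-\mu)f+h-\E h\bigr)$ is continuous; alternatively it follows from the Lipschitzness of $f'$ noted after Definition~\ref{definition:stein_equation}. Because $f_{\mu,\sigma,h}$ is differentiable everywhere and nonincreasing, $f'_{\mu,\sigma,h}\leq 0$ everywhere.

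The main delicacy is justifying the exchange of derivative and integral in \eqref{eq:ou_rep}, since $h'$ is only defined almost everywhere. The monotonicity argument above sidesteps this: we compare values at two points directly, never differentiating under the integral, so the derivative only appears at the very end through $C^1$ regularity.
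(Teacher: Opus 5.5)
Your proof is correct and follows the paper's own argument: the paper likewise reads off that $f_{\mu,\sigma,h}$ is nonincreasing from the semigroup representation Eq.~\eqref{eq:ou_rep}, using that $h'$ is nondecreasing, and then invokes $f_{\mu,\sigma,h}\in C^1$ to conclude $f'_{\mu,\sigma,h}\leq 0$ at every point. You simply spell out the details; the Chebyshev/derivative computation is optional, and everywhere-differentiability alone would already suffice for the last step.
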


\section{Application: Dominating MWU}\label{section:dominate_mwu}

This section further develops the argument from Section~\ref{section:dominate_ogd}, showing that another special case of our algorithm improves upon the MWU baseline. In particular, it requires setting the function $h$ in Algorithm~\ref{algorithm:main} as the \emph{log-sum-exp function}. The following $\reg_T(u)$ upper bound is a corollary of Theorem~\ref{theorem:main}, and by Corollary~\ref{corollary:regret_lower}, it cannot be improved by $\Omega(\log T)$ uniformly over $u\in[-1,1]$. 

\begin{restatable}[Regret: log-sum-exp]{corollary}{lseProp}\label{corollary:regret_logcosh}
Assume $\abs{g_t}\leq 1$ for all $t$. If $h(x)=\frac{1}{\eta}\ln(\cosh(\eta x))$ for $\eta=\frac{\alpha}{\sqrt{T}}$ and $\alpha\in\R_{>0}$, then Algorithm~\ref{algorithm:main} with $\rho_t=\sqrt{T-t}$ guarantees
\begin{equation*}
\reg_T(u)\leq \underbrace{\spar{\frac{1}{2\alpha}\ln\rpar{(1+u)^{1+u}(1-u)^{1-u}}+\E_{Z}\spar{\alpha^{-1}\ln(\cosh(\alpha Z))}}}_{\eqdef\gamma_{\mathrm{LSE}}(u,\alpha)}\sqrt{T}+O(\log T),\quad\forall u\in[-1,1].
\end{equation*}
In particular, the RHS follows the convention $0^0=1$ when $u=\pm 1$. 
\end{restatable}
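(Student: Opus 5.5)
We plan to derive this as a corollary of Theorem~\ref{theorem:main} combined with the loss-regret duality (Lemma~\ref{lemma:duality}), in the same way as Corollaries~\ref{corollary:regret_absolute} and \ref{corollary:regret_huber}.

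\textbf{Step 1: total loss bound.} The function $h(x)=\eta^{-1}\ln\cosh(\eta x)$ is convex and even, and $h'(x)=\tanh(\eta x)\in(-1,1)$, so $h$ is $1$-Lipschitz and Theorem~\ref{theorem:main} applies. With $\rho_t=\sqrt{T-t}$ we have $\rho_T=0$ and $c_t=1$. Since $\abs{g_t}\leq1$, the term $\max\{g_t^2-c_t,0\}$ vanishes. The remaining error term is $\sum_t (2\abs{g_t}+\abs{g_t}^3)/(T-t+1)\leq 3\sum_{k=1}^T k^{-1}=O(\log T)$. Using that $h$ is even, we obtain
\begin{equation*}
\loss_T\leq -h\rpar{-\sum_{t=1}^Tg_t}+\E_Z[h(\sqrt{T}Z)]+O(\log T).
\end{equation*}
This is the bound Eq.\eqref{eq:loss_bound} with $\psi^*_T(s)=h(s)-\E_Z[h(\sqrt{T}Z)]$, up to an additive $O(\log T)$ that is independent of $s$.

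\textbf{Step 2: convert to regret.} Duality gives $\reg_T(u)\leq \psi_T(u)+O(\log T)$, where
\begin{equation*}
\psi_T(u)=h^*(u)+\E_Z[h(\sqrt{T}Z)].
\end{equation*}
To see this directly, write $\reg_T(u)=\loss_T-u\sum_t g_t$. Then apply $-h(-S)-uS\leq h^*(u)$ with $S=\sum_tg_t$, which follows from Fenchel--Young applied at the point $-S$.

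\textbf{Step 3: explicit computation.} For $u\in(-1,1)$ the supremum in $h^*(u)=\sup_x[ux-\eta^{-1}\ln\cosh(\eta x)]$ is attained at $\tanh(\eta x)=u$, i.e.\ $\eta x=\operatorname{artanh}(u)=\half\ln\frac{1+u}{1-u}$. Using $\cosh(\operatorname{artanh}u)=(1-u^2)^{-1/2}$, we get
\begin{equation*}
h^*(u)=\frac{1}{2\eta}\ln\rpar{(1+u)^{1+u}(1-u)^{1-u}}.
\end{equation*}
At the endpoints $u=\pm1$, take the limit. Since $h^*$ is closed, or directly because $x-\eta^{-1}\ln\cosh(\eta x)\to\eta^{-1}\ln 2$, the value there is $\eta^{-1}\ln 2$, which is consistent with the convention $0^0=1$. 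Substituting $\eta=\alpha/\sqrt T$ turns this into the first term times $\sqrt T$.

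For the second term, the substitution gives $h(\sqrt T Z)=\sqrt T\alpha^{-1}\ln\cosh(\alpha Z)$. This yields $\gamma_{\mathrm{LSE}}(u,\alpha)\sqrt T$.

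\textbf{Main obstacle.} There is no real obstacle. The only care points are the following.
\begin{itemize}
\item Correctly handling the sign and evenness in the duality step.
\item Treating the boundary $u=\pm1$.
\item Confirming that the $O(\log T)$ term of Theorem~\ref{theorem:main} is uniform in $\alpha$ and $h$. It is, since $\err_T$ does not depend on $h$.
\end{itemize}
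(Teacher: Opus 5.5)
Your proposal is correct and follows essentially the same route as the paper. The paper starts from Corollary~\ref{corollary:regret}, which is exactly the Theorem~\ref{theorem:main}-plus-duality bound you rederive in Steps 1--2, stated with $h^*(-u)$ (equal to your $h^*(u)$ since $h$ is even); it then computes the conjugate, handles $u=\pm1$ via $x\ln x\to 0$, and substitutes $h(\sqrt{T}Z)=\sqrt{T}\alpha^{-1}\ln\cosh(\alpha Z)$, just as you do.
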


While the MWU baseline has been mostly studied in the so-called ``expert problem'', its special case with two experts can be equivalently adapted to our setting. This is a well-known fact, and we provide the details in Appendix~\ref{subsection:detail_lse} for concreteness. With the constant learning rate $\eta=\frac{\alpha}{\sqrt{T}}$, the obtained OLO algorithm guarantees \citep[Section~7.5]{orabona2025modern}
\begin{equation}\label{eq:mwu}
\reg_T(u)\leq \underbrace{\spar{\frac{1}{2\alpha}\ln\rpar{(1+u)^{1+u}(1-u)^{1-u}}+\frac{1}{2}\alpha}}_{\eqdef\gamma_{\mathrm{MWU}}(u,\alpha)}\sqrt{T}.
\end{equation}
Relaxing this bound gives $\reg^\unif_T\leq [(\ln 2) \frac{1}{\alpha}+\half\alpha]\sqrt{T}$ which is tight for the MWU baseline, as \citep{gravin2017tight} showed that MWU must suffer $\reg^\unif_T\geq \sqrt{2(\ln 2)T}$ even when non-increasing (as opposed to constant) learning rates are allowed. 

\begin{figure}[t]
\includegraphics[width=\textwidth]{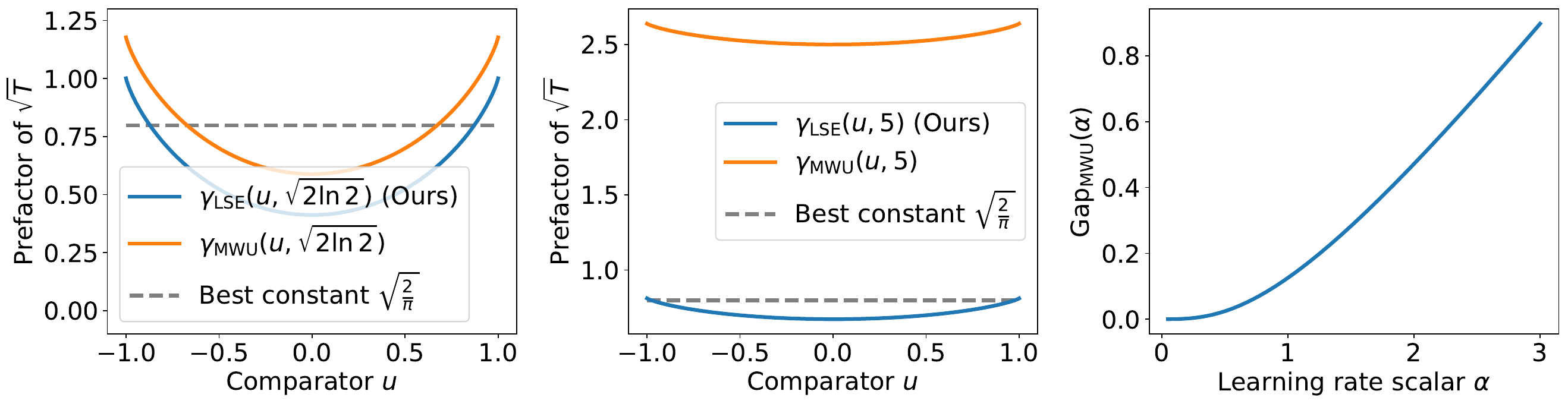}
\caption{Comparison of the prefactors of $\sqrt{T}$ in the regret bounds of our algorithm (represented by $\gamma_{\mathrm{LSE}}(u,\alpha)$), OGD (represented by $\gamma_{\mathrm{MWU}}(u,\alpha)$), and Cover's algorithm (the optimal $u$-independent prefactor $\sqrt{\frac{2}{\pi}}$). Analogous to Figure~\ref{figure:ogd}, but based on the log-sum-exp regime (Corollary~\ref{corollary:regret_logcosh}). Left: with $\alpha=\sqrt{2\ln 2}$ which minimizes $\sup_{u\in[-1,1]}\gamma_{\mathrm{MWU}}(1,\alpha)$, $\gamma_{\mathrm{LSE}}(u,\alpha)$ and $\gamma_{\mathrm{MWU}}(u,\alpha)$ are compared as functions of $u$, lower is better. Middle:  with $\alpha=5$, the gap between $\gamma_{\mathrm{MWU}}(u,\alpha)$ and $\gamma_{\mathrm{LSE}}(u,\alpha)$ widens. Right: the margin of improvement $\mathrm{Gap}_{\mathrm{MWU}}(\alpha)\defeq\gamma_{\mathrm{MWU}}(u,\alpha)-\gamma_{\mathrm{LSE}}(u,\alpha)$ as a function of $\alpha$.
}
\label{figure:mwu}
\end{figure}

The insight from Section~\ref{section:dominate_ogd} carries over to the comparison between Corollary~\ref{corollary:regret_logcosh} and Eq.\eqref{eq:mwu}. Specifically, we define the $u$-independent $\mathrm{Gap}_{\mathrm{MWU}}(\alpha)\defeq\gamma_{\mathrm{MWU}}(u,\alpha)-\gamma_{\mathrm{LSE}}(u,\alpha)$ as the difference between the two prefactors of $\sqrt{T}$. Lemma~\ref{lemma:expectation_lse} shows that $\mathrm{Gap}_{\mathrm{MWU}}(\alpha)> 0$ for all $\alpha>0$, and $\lim_{\alpha\rightarrow\infty}\gamma_{\mathrm{LSE}}(u,\alpha)=\sqrt{\frac{2}{\pi}}$ while $\gamma_{\mathrm{MWU}}(u,\alpha)$ diverges. Analogous to Figure~\ref{figure:ogd} in the main paper, results from this regime are visualized in Figure~\ref{figure:mwu}. Combining it with the lower bound of \citep{gravin2017tight} gives a quantitative separation between our algorithm and MWU. 

\subsection{Details of the Baseline}\label{subsection:detail_lse}

\paragraph{LEA and the MWU algorithm} The MWU baseline was originally designed to solve the problem of \emph{distributional LEA}, with the following definition. Consider a repeated game lasting $T$ rounds. In each round, 
\begin{itemize}
\item The LEA algorithm picks the decision $w_t\in\Delta_d$, where $\Delta_d=\{w\in\R^d:\norm{w}_1=1, w_i\geq 0,\forall i\in[1:d]\}$ represents the probability simplex embedded in $\R^d$.
\item The adversary picks the feedback $l_t\in\R^d$ satisfying $\norm{l_t}_\infty\leq 1$. 
\item The LEA algorithm suffers the loss $\inner{l_t}{w_t}$. 
\end{itemize}
At the end of the game, the performance of the algorithm is measured by the \emph{LEA regret}: for any comparator $\tilde u\in\Delta_d$,
\begin{equation*}
\reg^{\mathrm{LEA}}_T(\tilde u)\defeq\sum_{t=1}^T\inner{l_t}{w_t-\tilde u}. 
\end{equation*}

Our work is only relevant to the case of $d=2$. In this case, with the learning rate $\eta=\frac{\alpha}{\sqrt{T}}$ where $\alpha\in\R_{>0}$, the MWU algorithm from \citep[Section~7.5]{orabona2025modern} outputs
\begin{equation*}
x_{t,j}=\frac{\exp\rpar{-\eta\sum_{\tau=1}^{t-1}l_{\tau,j}}}{\sum_{i=1}^d\exp\rpar{-\eta\sum_{\tau=1}^{\tau-1}l_{k,i}}},\quad \forall j\in[1:2],
\end{equation*}
and guarantees the LEA regret bound (typically stated using the KL divergence)
\begin{equation}\label{eq:mwu_bound}
\reg^{\mathrm{LEA}}_T(\tilde u)\leq \spar{\frac{1}{\alpha}\rpar{\tilde u_1\ln \tilde u_1+\tilde u_2\ln \tilde u_2+\ln 2}+\half \alpha}\sqrt{T},\quad \forall \tilde u\in\Delta_2.
\end{equation}

\paragraph{Converting LEA to 1D OLO} With the above MWU algorithm denoted as $\A$, we use it to define the following OLO algorithm for the setting we consider. In each round, 
\begin{itemize}
\item Query $\A$ for its output $w_t\in\Delta_2$, and then output $x_t=w_{t,1}-w_{t,2}\in[-1,1]$ in one-dimensional OLO. 
\item After receiving the loss gradient $g_t\in\R$ in OLO, define $l_t=[g_t,-g_t]$ and send it to $\A$ as its $t$-th feedback in LEA. As long as $\abs{g_t}\leq 1$, it ensures $\norm{l_t}_{\infty}\leq 1$. 
\end{itemize}
Furthermore, for any OLO comparator $u\in[-1,1]$, define an LEA comparator $\tilde u$ by $\tilde u_1=\half(1+u)$ and $\tilde u_2=\half(1-u)$. These procedures ensure that for all $t$ we have $g_tx_t=\inner{l_t}{w_t}$, and the total loss of the comparators satisfies $\sum_{t=1}^Tg_tu=\sum_{t=1}^T\inner{l_t}{\tilde u}$. The point is that any achievable upper bound on $\reg^{\mathrm{LEA}}_T(\tilde u)$ can also be realized in the OLO problem we consider, and plugging in Eq.\eqref{eq:mwu_bound} gives Eq.\eqref{eq:mwu}.

\section{Application: Trading off Total Loss and Uniform Regret}\label{section:soft_thresholded}

This section presents the second of the three applications summarized in Section~\ref{subsection:results}: Algorithm~\ref{algorithm:main} can achieve a continuum of optimal two-point tradeoffs between the total loss and the uniform regret. 

The motivation is the following. Our main result (Theorem~\ref{theorem:main}) shows that given any convex and $1$-Lipschitz $h$ function, Algorithm~\ref{algorithm:main} guarantees additively sharp regret and total loss upper bounds, each realizing a performance tradeoff specified by $h$. While different $h$ functions are generally incomparable, a natural follow-up question is whether one could characterize the optimal $h$ given a reasonable preference over all possible tradeoffs. The two-point tradeoff we consider corresponds to such a preference. 

\paragraph{Setting} Starting from the necessary notations, let $\mathrm{alg}$ represent the OLO algorithm and $\mathrm{adv}$ represent the adversary. Throughout this section the adversary $\mathrm{adv}$ is assumed to satisfy $g_t\in[-1,1]$ for all $t$. Fixing the algorithm and the adversary, both the total loss $\loss_T$ and the uniform regret $\reg_T^\unif$ are determined, therefore we overload such notations by $\loss_T(\mathrm{alg},\mathrm{adv})$ and $\reg^\unif_T(\mathrm{alg},\mathrm{adv})$ to clarify their dependencies. Similarly, the notation $\reg_T(u)$ is overloaded by $\reg_T(u,\mathrm{alg},\mathrm{adv})$.

Given any constant $\eps\in(0,\sqrt{\frac{\pi}{2}}]$ (independent of $T$), we are interested in characterizing
\begin{equation}\label{eq:gamma}
\gamma(\eps)\defeq\limsup_{T\rightarrow\infty}\frac{1}{\sqrt{T}}\inf_{\mathrm{alg}}\left\{\sup_{\mathrm{adv}}\reg_T^\unif(\mathrm{alg},\mathrm{adv});\sup_{\mathrm{adv}}\loss_T(\mathrm{alg},\mathrm{adv})\leq\eps\sqrt{T}\right\}.
\end{equation}
Here we emphasize that the supremum over $\mathrm{adv}$ needs to respect the boundedness assumption $g_t\in[-1,1]$. The interpretation is that up to an additive $o(\sqrt{T})$ error, the optimal uniform regret bound subject to the constraint $\loss_T\leq \eps\sqrt{T}$ is $\reg^\unif_T\leq \gamma(\eps)\sqrt{T}$. Furthermore, we aim to design a computationally efficient algorithm achieving these performance guarantees. 

\paragraph{Background} Two-point tradeoffs of this flavor are ubiquitous in statistics and machine learning, such as the tradeoff between Type-I and Type-II errors in hypothesis testing. The pioneering works of \citep{even2008regret,kapralov2011prediction} initiated this study in OLO, although the emphasis is on the big-O optimality rather than the leading constant optimality in a different quantitative regime ($\eps\propto T^{-1/2}$ as opposed to constant $\eps$; also see Appendix~\ref{section:related}). Interestingly, the algorithm of \citep{kapralov2011prediction} is based on a ``Stein-like'' ODE, while it appears that the authors might not be aware of this connection at the time therefore the analysis was by brute force and suboptimal. Without considering computational efficiency, \citep{koolen2013pareto} characterized a similar optimality frontier in the two-expert LEA problem, by passing a Cover-style achievability condition to the $T\rightarrow\infty$ limit. A general strategy for trading off $\loss_T$ with $\reg_T^\unif$ is via the combination of an unconstrained parameter-free OLO algorithm on the domain $\R$ and a projection technique of \citep{cutkosky2018black}. This gives the state-of-the-art baseline to be introduced shortly. 

\paragraph{Results} Our first result is a lemma connecting the simultaneous achievability of upper bounds on $\loss_T$ and $\reg_T^\unif$ to the achievability of a particular $u$-dependent upper bound on $\reg_T(u)$, possibly of independent interest. It holds with and without the assumption of $g_t\in[-1,1]$. 

\begin{restatable}[Optimal two-point tradeoff via $u$-dependent regret bound]{lemma}{equivalence}\label{lemma:regret_equivalence}
For any $a,b\in\R$ and any OLO algorithm $\mathrm{alg}$, the following two conditions are equivalent: 
\begin{enumerate}
\item $\sup_{\mathrm{adv}}\loss_T(\mathrm{alg},\mathrm{adv})\leq a$ and $\sup_{\mathrm{adv}}\reg_T^\unif(\mathrm{alg},\mathrm{adv})\leq b$;
\item $\sup_{\mathrm{adv},u\in[-1,1]}\reg_T(u,\mathrm{alg},\mathrm{adv})\leq a+(b-a)\abs{u}$. 
\end{enumerate}
\end{restatable}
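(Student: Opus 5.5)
The plan is to work pointwise in the adversary and then take suprema. For a fixed algorithm and adversary, abbreviate $G\defeq\sum_{t=1}^Tg_t$. The regret is then affine in the comparator:
\begin{equation*}
\reg_T(u)=\loss_T-Gu.
\end{equation*}
In particular, $\reg_T(0)=\loss_T$ and $\reg_T(\pm1)=\loss_T\mp G$. We also have $\reg^\unif_T=\max\{\reg_T(-1),\reg_T(1)\}$, as noted in Example~\ref{example:worst}. Both implications reduce to statements about this single affine function of $u$.

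For (2)$\Rightarrow$(1), evaluate condition 2 at specific comparators. At $u=0$ it gives $\reg_T(0)=\loss_T\leq a$ for every adversary. At $u=\pm1$ it gives $\reg_T(\pm 1)\leq a+(b-a)=b$ for every adversary. Hence $\reg^\unif_T\leq b$, and taking suprema over adversaries yields condition 1.

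For (1)$\Rightarrow$(2), fix an adversary and any $u\in[-1,1]$, and write $u$ as a convex combination of the three points $0,\,\sign(u),$ namely $u=(1-\abs{u})\cdot 0+\abs{u}\cdot\sign(u)$. Since $u\mapsto\reg_T(u)$ is affine, we get
\begin{equation*}
\reg_T(u)=(1-\abs{u})\reg_T(0)+\abs{u}\,\reg_T(\sign(u)).
\end{equation*}
Condition 1 gives $\reg_T(0)=\loss_T\leq a$ and $\reg_T(\sign(u))\leq\reg^\unif_T\leq b$. Both coefficients $1-\abs{u}$ and $\abs{u}$ are nonnegative, so
\begin{equation*}
\reg_T(u)\leq(1-\abs{u})a+\abs{u}b=a+(b-a)\abs{u}.
\end{equation*}
The case $u=0$ is covered trivially, with $\sign(0)$ taken arbitrarily. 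Taking the supremum over adversaries and $u$ gives condition 2.

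No step here is a real obstacle. The only points to check are that the affine interpolation uses nonnegative weights, so that the inequalities are preserved, and that no boundedness of $g_t$ is used anywhere. The argument is purely algebraic, so the lemma holds with or without the assumption $g_t\in[-1,1]$, as the statement claims.
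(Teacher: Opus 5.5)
Your proposal is correct and follows the paper's proof: (2)$\Rightarrow$(1) by evaluating at $u=0$ and $u=\pm1$, and (1)$\Rightarrow$(2) by the same decomposition $\reg_T(u)=(1-\abs{u})\loss_T+\abs{u}\,\reg_T(\sign(u))$ with nonnegative weights. One small slip: you say ``three points'' but the convex combination uses only the two points $0$ and $\sign(u)$.
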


By Lemma~\ref{lemma:regret_equivalence}, upper-bounding $\gamma(\eps)$ from Eq.\eqref{eq:gamma} can be reduced to finding the lowest $\bar\gamma(\eps)$ such that the regret bound $\reg_T(u)\leq [\eps+(\bar\gamma(\eps)-\eps)\abs{u}]\sqrt{T}$ can be guaranteed up to an $o(\sqrt{T})$ additive error. This fits into the scope of our general results: in particular, Corollary~\ref{corollary:regret} (the regret version of our main theorem) shows that it suffices to pick the $h$ function in Algorithm~\ref{algorithm:main} as the convex conjugate of the targeted $u$-dependent regret bound (viewed as a function of $u$). It can be verified that for any $b\in\R_{>0}$, the convex conjugate of the function 
\begin{equation*}
f(x)=\begin{cases}
a+b\abs{x},&x\in[-1,1],\\
\infty,&\mathrm{else}
\end{cases}
\end{equation*}
is the (vertically shifted) \emph{soft-thresholded absolute value} $f^*(y)=-a+\max\{\abs{y}-b,0\}$. Therefore we are motivated to instantiate Corollary~\ref{corollary:regret} with the soft-thresholded absolute value function, resulting in the following constructive regret upper bound parameterized by a learning rate scalar $\alpha$. 

\begin{restatable}[Regret: soft-thresholded]{corollary}{softProp}\label{corollary:regret_soft}
Assume $\abs{g_t}\leq 1$ for all $t$. If $h(x)=\max\{\abs{x}-\eta^{-1},0\}$ for $\eta=\frac{\alpha}{\sqrt{T}}$ and $\alpha\in\R_{>0}$, then Algorithm~\ref{algorithm:main} with $\rho_t=\sqrt{T-t}$ guarantees
\begin{equation*}
\reg_T(u)\leq \underbrace{\spar{\frac{\abs{u}}{\alpha}+\frac{2}{\alpha}\Phi\rpar{\frac{1}{\alpha}}+2\phi\rpar{\frac{1}{\alpha}}-\frac{2}{\alpha}}}_{\eqdef\gamma_{\mathrm{STh}}(u,\alpha)}\sqrt{T}+O(\log T),\quad\forall u\in[-1,1].
\end{equation*}
\end{restatable}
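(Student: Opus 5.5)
The plan is to obtain the bound from Theorem~\ref{theorem:main} via the loss-regret duality, in the same way as Corollaries~\ref{corollary:regret_absolute} and~\ref{corollary:regret_huber}. With $\rho_t=\sqrt{T-t}$ and $\abs{g_t}\leq 1$ we have $c_t=1$ and $\rho_T=0$. The first term of $\err_T$ vanishes because $g_t^2\leq c_t$. The second term is at most $\sum_{t=1}^T 3/(T-t+1)=O(\log T)$. Hence
\begin{equation*}
\loss_T\leq -h\rpar{\textstyle\sum_{t=1}^T g_t}+\E_Z[h(\sqrt{T}Z)]+O(\log T).
\end{equation*}
Since $h$ is even, $\bar\psi^*_T(s)=h(s)-\E_Z[h(\sqrt{T}Z)]$. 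By Lemma~\ref{lemma:duality} (equivalently Corollary~\ref{corollary:regret}), this gives $\reg_T(u)\leq (\bar\psi^*_T)^*(u)+O(\log T)$ for all $u\in[-1,1]$. The additive $O(\log T)$ passes through because the conjugate of $h+c$ is $h^*-c$. More directly, for each fixed $u$ we have $\reg_T(u)=\loss_T-u\sum_t g_t\leq \sup_s[us-h(s)]+\E_Z[h(\sqrt{T}Z)]+O(\log T)$, using $s=-\sum_t g_t$.

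The second step computes the conjugate. For $h(x)=\max\{\abs{x}-\eta^{-1},0\}$ and $\abs{u}\leq1$, $\sup_s[us-h(s)]$ is attained at $s=\sign(u)\eta^{-1}$, where $h=0$. This gives $h^*(u)=\abs{u}/\eta=\abs{u}\sqrt{T}/\alpha$, which matches the $\abs{u}/\alpha$ term.

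The third step is the Gaussian computation of $\E_Z[h(\sqrt{T}Z)]=\sqrt{T}\,\E_Z[\max\{\abs{Z}-1/\alpha,0\}]$. Write $k=1/\alpha$. Then $\E[(\abs{Z}-k)_+]=2\int_k^\infty (z-k)\phi(z)\,\diff z=2[\phi(k)-k(1-\Phi(k))]$. This expands to $2\phi(1/\alpha)+\frac{2}{\alpha}\Phi(1/\alpha)-\frac{2}{\alpha}$, which is exactly the remaining part of $\gamma_{\mathrm{STh}}(u,\alpha)$.

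None of these steps is a serious obstacle. The only point needing care is that the duality step correctly carries both the vertical shift $\E_Z[h(\sqrt{T}Z)]$ and the additive error. This is handled by the direct pointwise argument above, which avoids any subtlety about closedness of conjugates. One should also check that $h$ is convex and $1$-Lipschitz, which it clearly is, so that Theorem~\ref{theorem:main} applies.
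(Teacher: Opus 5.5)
Your proposal is correct and follows essentially the same route as the paper. The paper starts from Corollary~\ref{corollary:regret} (the duality-converted form of Theorem~\ref{theorem:main} with $\rho_T=0$), computes $h^*(-u)=\eta^{-1}\abs{u}$, and evaluates $\E_Z[h(\sqrt{T}Z)]=2\sqrt{T}\phi(1/\alpha)-\frac{2\sqrt{T}}{\alpha}[1-\Phi(1/\alpha)]$; your explicit bound $\err_T\leq\sum_{t=1}^T 3/(T-t+1)=O(\log T)$ and your pointwise duality argument just spell out steps the paper leaves implicit.
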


The obtained Corollary~\ref{corollary:regret_soft} is analogous to Corollary~\ref{corollary:regret_huber} (the case dominating OGD) and Corollary~\ref{corollary:regret_logcosh} (the case dominating MWU). In particular, they all recover the sharp uniform regret upper bound (Corollary~\ref{corollary:regret_absolute}, as well as the results of \citep{kobzar2020a_new,greenstreet2022efficient}) when the learning rate scalar $\alpha$ goes to infinity. Corollary~\ref{corollary:regret_soft} is also additively sharp due to the general regret lower bound, Corollary~\ref{corollary:regret_lower}. 

By reparameterizing the bound using $\eps$ and taking the $T\rightarrow\infty$ limit, we obtain the following exact characterization of $\gamma(\eps)$ from Eq.\eqref{eq:gamma}, realized by a computationally efficient algorithm. 

\begin{restatable}[Characterization of $\gamma(\eps)$]{theorem}{twopoint}
\label{theorem:gamma}
For all constant $\eps\in(0,\sqrt{\frac{2}{\pi}}]$, $\gamma(\eps)$ from Eq.\eqref{eq:gamma} is the unique solution of the equation
\begin{equation*}
\int_{-\infty}^{\eps-\gamma(\eps)}\Phi(x)\diff x=\frac{\eps}{2}.
\end{equation*}
Furthermore, for all $T\in\N_+$, there exists a special case of Algorithm~\ref{algorithm:main} that simultaneously guarantees $\loss_T\leq \eps\sqrt{T}+O(\log T)$ and $\reg^\unif_T\leq \gamma(\eps)\sqrt{T}+O(\log T)$. 
\end{restatable}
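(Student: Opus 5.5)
The plan is to prove matching upper and lower bounds on $\gamma(\eps)$ from Eq.\eqref{eq:gamma}, both expressed through the single function
\begin{equation*}
G(\beta)\defeq\E_Z\spar{\max\{\abs{Z}-\beta,0\}},\qquad \beta\geq 0.
\end{equation*}
A direct computation gives $G(\beta)=2\phi(\beta)+2\beta\Phi(\beta)-2\beta$. Since $1-\Phi(x)=\Phi(-x)$, this can also be written as
\begin{equation*}
G(\beta)=2\int_\beta^\infty(1-\Phi(x))\diff x=2\int_{-\infty}^{-\beta}\Phi(x)\diff x.
\end{equation*}
Hence $G$ is continuous and strictly decreasing on $[0,\infty)$, with $G(0)=\sqrt{2/\pi}$ and $G(\beta)\to0$ as $\beta\to\infty$. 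So for every $\eps\in(0,\sqrt{2/\pi}]$ there is a unique $\beta(\eps)\geq0$ with $G(\beta(\eps))=\eps$. The claim to establish is $\gamma(\eps)=\eps+\beta(\eps)$. Substituting $-\beta=\eps-\gamma$ into the integral form of $G$ turns $G(\beta)=\eps$ into exactly the displayed equation $\int_{-\infty}^{\eps-\gamma(\eps)}\Phi(x)\diff x=\frac{\eps}{2}$. Uniqueness of $\gamma(\eps)$ follows from strict monotonicity of $G$.

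\textbf{Achievability.} Here I use Corollary~\ref{corollary:regret_soft}. Its prefactor is $\gamma_{\mathrm{STh}}(u,\alpha)=\abs{u}/\alpha+G(1/\alpha)$, which follows from the identity for $G$ with $\beta=1/\alpha$.
\begin{itemize}
\item If $\eps<\sqrt{2/\pi}$, I choose $\alpha=1/\beta(\eps)$.
\item If $\eps=\sqrt{2/\pi}$, then $\beta=0$, and I use $h(x)=\abs{x}$, i.e., Corollary~\ref{corollary:regret_absolute}.
\end{itemize}
Evaluating the regret bound at $u=0$ gives $\loss_T=\reg_T(0)\leq \eps\sqrt{T}+O(\log T)$. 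Evaluating it at $\abs{u}=1$ gives $\reg^\unif_T\leq(\beta(\eps)+\eps)\sqrt{T}+O(\log T)$. This proves the second sentence of the theorem and also $\gamma(\eps)\leq \eps+\beta(\eps)$.

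\textbf{Converse.} Fix $T$ and an algorithm with $\sup_{\mathrm{adv}}\loss_T\leq a\defeq\eps\sqrt{T}$ and $\sup_{\mathrm{adv}}\reg^\unif_T\leq b$, where $b\geq a$ since $\loss_T=\reg_T(0)\leq\reg^\unif_T$.
\begin{enumerate}
\item By Lemma~\ref{lemma:regret_equivalence}, $\reg_T(u)\leq a+(b-a)\abs{u}$ for all $u\in[-1,1]$.
\item By the oracle inequality Eq.\eqref{eq:loss_bound} and the conjugate computed just before Corollary~\ref{corollary:regret_soft}, this gives $\loss_T\leq a-h(\sum_t g_t)$ for every adversary, with $h(x)=\max\{\abs{x}-(b-a),0\}$. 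This $h$ is even and $1$-Lipschitz.
\item Theorem~\ref{theorem:loss_lower} supplies a Boolean adversary with $\loss_T> -h(\sum_t g_t)+\E_Z[h(\sqrt{T}Z)]-c$. Comparing with step 2 yields
\begin{equation*}
a> \E_Z[h(\sqrt TZ)]-c=\sqrt{T}\,G\rpar{\frac{b-a}{\sqrt T}}-c.
\end{equation*}
\item Dividing by $\sqrt{T}$ gives $G\rpar{(b-a)/\sqrt T}<\eps+c/\sqrt T$, and since $G$ is decreasing, $(b-a)/\sqrt T> G^{-1}(\eps+c/\sqrt T)$ whenever the right side is in range.
\end{enumerate}
Taking the infimum over algorithms and letting $T\to\infty$, continuity of $G^{-1}$ gives $\liminf_T \frac{1}{\sqrt{T}}\inf_{\mathrm{alg}}\sup_{\mathrm{adv}}\reg^\unif_T\geq\eps+\beta(\eps)$. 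This is at least as strong as the corresponding lower bound on the $\limsup$ defining $\gamma(\eps)$.

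The main obstacle is this limiting step at the boundary. For $\eps$ near $\sqrt{2/\pi}$, the quantity $\eps+c/\sqrt T$ may exceed $G(0)$; there the conclusion $b-a\geq 0$ is automatic, and it still yields the correct limit $\beta(\eps)=0$. I also need to check that the infimum over algorithms does not break the step-4 inversion; I would handle this by fixing an arbitrary near-optimal algorithm before taking limits. Finally, I would note that the range of $\eps$ must be read as $(0,\sqrt{2/\pi}]$, which is the range of $G$. This matches the theorem statement, while the introduction's $\sqrt{\pi/2}$ appears to be a typo.
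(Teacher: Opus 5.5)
Your proof follows the paper's: achievability comes from Corollary~\ref{corollary:regret_soft}, the converse from Lemma~\ref{lemma:regret_equivalence} plus the CLT lower bound, and your identity $\gamma_{\mathrm{STh}}(u,\alpha)=\abs{u}/\alpha+G(1/\alpha)$ with $G(\beta)=2\int_{-\infty}^{-\beta}\Phi(x)\diff x$ is Lemma~\ref{lemma:gamma_two_point}. Your converse runs the paper's contrapositive argument forwards. You feed an algorithm-dependent threshold $b-a$ into Theorem~\ref{theorem:loss_lower}, whereas the paper feeds a pre-chosen $\alpha$ with $\gamma_{\mathrm{STh}}(0,\alpha)=\eps+c/\sqrt{T}$ into Corollary~\ref{corollary:regret_lower}. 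Your version has the small advantage of covering $\eps=\sqrt{2/\pi}$, where that equation for $\alpha$ has no solution.

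\textbf{Gap in the achievability step.} As written, that paragraph does not yield $\gamma(\eps)\leq\eps+\beta(\eps)$.
\begin{itemize}
\item Eq.~\eqref{eq:gamma} imposes the hard constraint $\sup_{\mathrm{adv}}\loss_T\leq\eps\sqrt{T}$. With fixed $\alpha=1/\beta(\eps)$ (or $h=\abs{x}$) you only have $\loss_T\leq\eps\sqrt{T}+O(\log T)$, so your algorithm is not in the feasible set of the infimum.
\item The fix, which is what the paper does, is to let $\alpha$ depend on $T$. With $\rho_t=\sqrt{T-t}$ and $\abs{g_t}\leq 1$ one has $\err_T\leq 3\sum_{t=1}^T\frac{1}{T-t+1}\leq 3(1+\ln T)$ uniformly over $h$. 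This uniformity is what licenses a $T$-dependent choice.
\item For large $T$, choose $\alpha_T$ with $G(1/\alpha_T)=\eps-3(1+\ln T)/\sqrt{T}$. Then you get exactly $\loss_T\leq\eps\sqrt{T}$ and $\reg^\unif_T\leq(\eps+1/\alpha_T)\sqrt{T}$.
\item Finally, $1/\alpha_T=G^{-1}(\eps-3(1+\ln T)/\sqrt{T})\to\beta(\eps)$ by the continuity of $G^{-1}$ you already use in the converse. This also handles $\eps=\sqrt{2/\pi}$ without switching to $h=\abs{x}$.
\end{itemize}

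\textbf{Minor slip in the converse.} The inequality $\loss_T\leq\reg^\unif_T$ gives $\sup_{\mathrm{adv}}\loss_T\leq b$, not $a\leq b$. This is harmless. Minimizing $a+(b-a)\abs{u}+u\sum_t g_t$ over $u\in\{-1,0,1\}$ gives $\loss_T\leq\min\{a,\,b-\abs{\sum_t g_t}\}$ for arbitrary $a,b$. If $b<a$, the lower bound with $h=\abs{x}$ then forces $b>\sqrt{2T/\pi}-c$. For $\eps<\sqrt{2/\pi}$ and large $T$ this contradicts $b<a=\eps\sqrt{T}$, and when $\eps=\sqrt{2/\pi}$ it is already the needed bound.
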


In retrospect, the crucial step in the above analysis is using Lemma~\ref{lemma:regret_equivalence} to convert the two-point tradeoff between $\loss_T$ and $\reg_T^\unif$ to a particular ``continuous'' tradeoff on $\reg_T(u)$, such that our general regret upper bound (Corollary~\ref{corollary:regret}) and lower bound (Corollary~\ref{corollary:regret_lower}) can be applied. Alternatively, one may view this conversion from the dual perspective (i.e., total loss upper bounds), which gives another derivation of the right $h$ function to consider by solving an infinite-dimensional linear program. We provide a sketch of this argument in Appendix~\ref{subsection:detail_tradeoff}. 

\begin{figure}[t]
\centering
\includegraphics[width=0.5\linewidth]{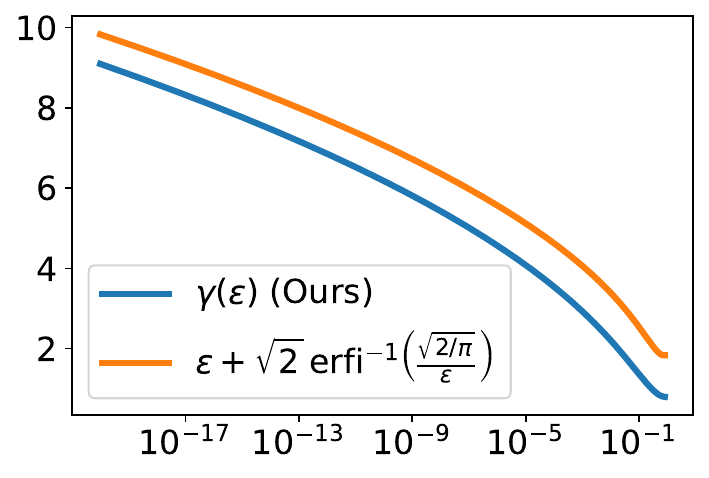}
\caption{Comparison of the prefactors of $\sqrt{T}$ in the $\reg_T^\unif$ upper bound of our algorithm (LHS of Eq.\eqref{eq:two_prefactors}, represented by blue) and the baseline \citep{cutkosky2018black,zhang2022pde} (RHS of Eq.\eqref{eq:two_prefactors}, represented by orange); lower is better. Both are functions of $\eps\in(0,\sqrt{\frac{2}{\pi}}]$ which represents the budget on $\loss_T$. Our result dominates that of the baseline. 
}
\label{figure:two_point}
\end{figure}

\paragraph{Comparison to baseline} Prior to this work, the state-of-the-art algorithm trading off $\loss_T$ and $\reg_T^\unif$ is a combination of \citep{zhang2022pde} and \citep{cutkosky2018black}, concretely introduced in Appendix~\ref{subsection:detail_tradeoff}. With $\erfi:\R\rightarrow\R$ being the \emph{imaginary error function} (i.e., $\erfi(x)=\frac{2}{\sqrt{\pi}}\int_0^x\exp(z^2)\diff z$), such an algorithm runs in $O(1)$ time per round while simultaneously guaranteeing $\loss_T\leq \eps\sqrt{T}$ and
\begin{equation}\label{eq:two_point_baseline}
\reg_T^\unif\leq \eps\sqrt{T}+\sqrt{2T}\cdot \erfi^{-1}\rpar{\frac{\sqrt{2}}{\sqrt{\pi}\eps}}
\end{equation}
against all adversaries satisfying $g_t\in[-1,1]$ for all $t$. 

Comparing our Theorem~\ref{theorem:gamma} to this baseline, we see that up to additive logarithmic terms, both results guarantee the same $\loss_T$ upper bounds, but their $\reg_T^\unif$ bounds differ in the prefactors of $\sqrt{T}$. Demonstrating the advantage of Theorem~\ref{theorem:gamma} amounts to showing
\begin{equation}\label{eq:two_prefactors}
 \gamma(\eps)<\eps+\sqrt{2}\cdot\erfi^{-1}\rpar{\frac{\sqrt{2}}{\sqrt{\pi}\eps}}
\end{equation}
for all $\eps\in(0,\sqrt{\frac{2}{\pi}}]$, which is proved in Lemma~\ref{lemma:gap_soft}. In particular, when $\eps=\sqrt{\frac{2}{\pi}}$, we have $\lhs=\sqrt{\frac{2}{\pi}}$ while $\rhs=\sqrt{\frac{2}{\pi}}+\sqrt{2}\cdot\erfi^{-1}(1)>\lhs$. Also see Figure~\ref{figure:two_point} for visualizations. 

\subsection{Details of the Baseline}\label{subsection:detail_tradeoff}

The baseline we consider starts from a constrained-to-unconstrained reduction first proposed by \citep{cutkosky2018black} and systematically summarized in \citep[Section~9]{orabona2025modern}. To date it has become a key element in the toolbox of online learning. The version for our setting is the following \citep[Remark~9.6]{orabona2025modern}. 

\begin{lemma}[Reduction to unconstrained OLO]\label{lemma:unconstrained_reduction}
Let $\calA$ be an arbitrary algorithm for one-dimensional unconstrained OLO (i.e., $x_t\in\R$ as opposed to $x_t\in[-1,1]$) achieving $\reg_T(u)\leq \psi_T(u)$ for all $u\in\R$, against all adversaries satisfying $g_t\in[-1,1]$ for all $t$. Then, there exists an algorithm (which is a post-processed version of $\A$) satisfying $x_t\in[-1,1]$ for all $t$ while achieving
\begin{equation*}
\reg_T(u)\leq \psi_T(u),\quad\forall u\in[-1,1],
\end{equation*}
against the same adversary class. 
\end{lemma}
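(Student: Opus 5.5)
The statement to prove is Lemma~\ref{lemma:unconstrained_reduction}: a constrained-to-unconstrained reduction for $[-1,1]$. I would follow the standard construction of \citep{cutkosky2018black}, specialized to one dimension. Given $\A$ with unconstrained outputs $y_t\in\R$, the new algorithm outputs the projection $x_t=\Pi_{[-1,1]}(y_t)$. It does not feed $g_t$ back to $\A$. Instead it feeds a surrogate gradient $\tilde g_t$, defined as follows:
\begin{itemize}
\item $\tilde g_t=g_t$ if $y_t\in[-1,1]$, or if $g_t(y_t-x_t)\geq 0$;
\item $\tilde g_t=0$ otherwise, that is, when $y_t$ lies outside $[-1,1]$ and $g_t$ points "outward" (e.g.\ $y_t>1$ and $g_t<0$).
\end{itemize}
Equivalently, $\tilde g_t=g_t$ unless doing so would reward $\A$ for leaving the domain. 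In every case $\tilde g_t\in\{0,g_t\}$, so $\tilde g_t\in[-1,1]$. Hence $(\tilde g_t)$ is a legitimate adversary for $\A$, and $\A$ guarantees $\sum_t\tilde g_t(y_t-u)\leq\psi_T(u)$ for all $u\in\R$, in particular for $u\in[-1,1]$.

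The key step is the per-round inequality
\[
g_t(x_t-u)\leq \tilde g_t(y_t-u)\qquad\text{for all } u\in[-1,1].
\]
I would verify it by cases.
\begin{itemize}
\item If $y_t\in[-1,1]$, then $x_t=y_t$ and $\tilde g_t=g_t$, so the two sides are equal.
\item If $y_t>1$, then $x_t=1$. When $g_t\geq 0$ we keep $\tilde g_t=g_t$, and $g_t(1-u)\leq g_t(y_t-u)$ because $y_t>1$ and $g_t\geq0$. When $g_t<0$ we set $\tilde g_t=0$; the left side is $g_t(1-u)\leq 0$ since $u\leq 1$, so it is at most $0$, which is the right side.
\item The case $y_t<-1$ is symmetric.
\end{itemize}
Summing the per-round inequality over $t$ gives $\reg_T(u)\leq\sum_t\tilde g_t(y_t-u)\leq\psi_T(u)$ for all $u\in[-1,1]$. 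The outputs $x_t$ lie in $[-1,1]$ by construction.

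One detail needs care: whether the post-processed algorithm is a valid online procedure. It is. The value $\tilde g_t$ depends only on $y_t$ (already computed) and $g_t$ (just revealed), so it can be fed to $\A$ before round $t+1$. I do not expect any step to be a real obstacle. The only substantive point is choosing the surrogate so that it stays in $[-1,1]$, which is what allows the hypothesis on $\A$ to apply. This is why I use the "zero-out" surrogate rather than the general-domain surrogate built from the distance function, which would need a separate norm check. If one preferred the distance-based version, $\tilde g_t=\frac{1}{2}\big(g_t+|g_t|\,\partial\,\mathrm{dist}(y_t,[-1,1])\big)$, its magnitude is also at most $|g_t|$, and the same inequality follows from convexity of $\mathrm{dist}$.
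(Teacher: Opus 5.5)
Your proof is correct, and it is the one-dimensional form of the reduction the paper cites without proving it (Cutkosky--Orabona, in the factor-free version attributed to Orabona's Remark~9.6): project $y_t$ onto $[-1,1]$ and zero out $g_t$ when it points outward, so that $\tilde g_t\in\{0,g_t\}\subseteq[-1,1]$ and $g_t(x_t-u)\le\tilde g_t(y_t-u)$ for all $u\in[-1,1]$. One correction to your closing aside: the halved distance-based surrogate $\frac12(g_t+|g_t|s_t)$ with $s_t\in\partial\,\mathrm{dist}(y_t,[-1,1])$ agrees with yours when $|y_t|>1$ but equals $g_t/2$ when $|y_t|<1$, so it only gives $g_t(x_t-u)\le 2\tilde g_t(y_t-u)$, i.e.\ $\reg_T(u)\le 2\psi_T(u)$ --- and that factor of $2$ is exactly why the zero-out surrogate is the right choice for this factor-free statement.
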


Motivated by this lemma, the main objective within the field of parameter-free online learning is to achieve sharp comparator-dependent regret bounds on unconstrained domains. For our setting the state of the art is \citep{zhang2022pde}, building on the techniques of \citep{harvey2023optimal}. Given a potential function defined as
\begin{equation*}
\varphi(t,s)=\eps\sqrt{t}\rpar{\sqrt{\pi}\int_0^{\frac{s}{\sqrt{2t}}}\erfi(z)\diff z-1},
\end{equation*}
the unconstrained algorithm outputs a similar discrete derivative as Cover's algorithm Eq.\eqref{eq:cover},
\begin{equation*}
x_t=-\half\spar{\varphi\rpar{t,\sum_{i=1}^{t-1}g_i+1}-\varphi\rpar{t,\sum_{i=1}^{t-1}g_i-1}}\in\R,\quad\forall t\in[1:T].
\end{equation*}
It guarantees \citep[intermediate result within Theorem~4]{zhang2022pde}
\begin{equation}\label{eq:baseline_unconstrained_regret}
\reg_T(u)\leq \eps\sqrt{T}+\abs{u}\sqrt{2T}\cdot \erfi^{-1}\rpar{\frac{\sqrt{2}\abs{u}}{\sqrt{\pi}\eps}},\quad\forall u\in\R. 
\end{equation}
Setting $\abs{u}=1$ and combining it with Lemma~\ref{lemma:unconstrained_reduction} results in the Eq.\eqref{eq:two_point_baseline} referenced in our work. 

A quantitative strength of this baseline is that Eq.\eqref{eq:baseline_unconstrained_regret} has the optimal constant multiplier $\sqrt{2}$ on the leading order factor $\abs{u}\sqrt{T\log \abs{u}}$, as $\abs{u}$ and $T$ both tend to infinity. Broadly speaking, this is related to the ``additive sharpness'' targeted by our work, while we study optimality in a more detailed scaling regime where $u$ is fixed (on a bounded domain) and only $T\rightarrow\infty$. 

\subsection{Dual Argument of Lemma~\ref{lemma:regret_equivalence} via Linear Programming}\label{subsection:lp_duality}

In this subsection, we sketch the dual argument of Lemma~\ref{lemma:regret_equivalence}, converting the two-point tradeoff between $\loss_T$ and $\reg_T^\unif$ to a particular upper bound function on $\loss_T$ to aim for. Technically, this relies on solving an infinite-dimensional linear program which could be of independent interest. Since our formal results are already derived from Lemma~\ref{lemma:regret_equivalence}, the following sketch will omit many technical subtleties, prioritizing simplicity over mathematical rigor. 

Recall from Theorem~\ref{theorem:main} that for any convex and $1$-Lipschitz function $\psi^*_T:\R\rightarrow\R$ satisfying $\E_{X\sim \calN(0,T)}[\psi_T^*(X)]=0$, there exists a special case of Algorithm~\ref{algorithm:main} guaranteeing
\begin{equation*}
\loss_T\leq -\psi^*_T\rpar{-\sum_{t=1}^Tg_t}+O(\log T)
\end{equation*}
against all adversaries satisfying $g_t\in[-1,1]$ for all $t$. This is additively sharp due to Theorem~\ref{theorem:loss_lower}. We are now interested in finding the optimal tradeoff between $\loss_T$ and $\reg^\unif_T$ induced by such $\psi^*_T$ functions. 

For the simplicity of this sketch, we restrict the search space to $\psi^*_T$ functions that are also even. Given this restriction, notice that the above $\loss_T$ upper bound leads to the adversary-independent upper bound
\begin{equation*}
\loss_T\leq \sup_{s\in\R}\spar{-\psi^*_T\rpar{s}}+O(\log T)=-\psi^*_T\rpar{0}+O(\log T),
\end{equation*}
as well as the uniform regret bound (via the loss-regret duality, Lemma~\ref{lemma:duality})
\begin{align*}
\reg^\unif_T&\leq \max_{u\in\{-1,1\}}(\psi^*_T)^*(u)+O(\log T)\\
&=\sup_{s\in\R} [s-\psi^*_T(s)]+O(\log T)\tag{$\psi^*_T$ is even}\\
&=\lim_{s\rightarrow\infty}[s-\psi^*_T(s)]+O(\log T).\tag{$\psi^*_T$ is $1$-Lipschitz}
\end{align*}
Omitting the additive $O(\log T)$ terms from the RHS, the task of minimizing $\reg^\unif_T$ under the constraint $\loss_T\leq \eps\sqrt{T}$ can be modeled as the following optimization problem,
\begin{align*}
\textrm{Problem One}\leftarrow
\left\{~
\begin{aligned}
\min_{f:\R\rightarrow\R} \quad & \lim_{x\rightarrow\infty} [x-f(x)]\\
    \textrm{s.t.} \quad &\E_{X\sim \calN(0,T)}[f(X)]=0,\\
    &-f(0)=\eps\sqrt{T},\\
    & f~\textrm{is even, convex and 1-Lipschitz.}
\end{aligned}
\right.
\end{align*}

The idea is that the optimal value of Problem One normalized by $\frac{1}{\sqrt{T}}$ gives $\gamma(\eps)$. While additional effort is required to formally establish their equivalence, here we do not treat such subtleties. The minimizing argument $f$ is the total loss upper bound function to aim for, thus also the $h$ function to apply in Algorithm~\ref{algorithm:main}. The subsequent sketch is divided into four steps. 

\paragraph{Step 1: reformulation using derivative} To simplify Problem One, notice that for all feasible function $f$, the derivative $f'$ exists almost everywhere. Since
\begin{equation*}
f(x)=f(0)+\int_0^xf'(s)\diff s,\quad\forall x\in\R,
\end{equation*}
we have
\begin{equation*}
\lim_{x\rightarrow\infty} [x-f(x)]=-f(0)+\int_0^\infty [1-f'(x)]\diff x,
\end{equation*}
\begin{align*}
\E_{X\sim\calN(0,T)}[f(X)]&=2\int_0^\infty f(\sqrt{T}z)\phi(z)\diff z\\
&=2\int_0^\infty \rpar{f(0)+\int_0^{\sqrt{T}z}f'(s)\diff s}\phi(z)\diff z\\
&=f(0)+2\int_0^\infty\rpar{\int_{\frac{s}{\sqrt{T}}}^\infty\phi(z)\diff z}f'(s)\diff s\tag{swap integrals}\\
&=f(0)+2\int_0^\infty\spar{1-\Phi\rpar{\frac{s}{\sqrt{T}}}}f'(s)\diff s.
\end{align*}
Combining these properties, we can change the argument of Problem One from $f$ to its derivative $f'$, thereby converting the problem to the following infinite dimensional linear program that shares the same optimal value. 
\begin{align*}
\textrm{Problem Two}\leftarrow\left\{~
\begin{aligned}
\min_{g:\R_{\geq 0}\rightarrow[0,1]} \quad & \eps\sqrt{T}+\int_0^\infty [1-g(x)]\diff x\\
    \textrm{s.t.} \quad & - \eps\sqrt{T}+2\int_0^\infty\spar{1-\Phi\rpar{\frac{x}{\sqrt{T}}}}g(x)\diff x=0.
\end{aligned}
\right.
\end{align*}
Notably, compared to Problem One, Problem Two drops the monotonicity requirement on the function $g$. This is a tight relaxation which can be verified. 

\paragraph{Step 2: the dual problem} The second step is to apply the standard duality analysis from convex optimization. With the multiplier $\lambda\in\R$, the Lagrangian of Problem Two is
\begin{align*}
L(g,\lambda)&\defeq \eps\sqrt{T}+\int_0^\infty [1-g(x)]\diff x+\lambda\rpar{2\int_0^\infty\spar{1-\Phi\rpar{\frac{x}{\sqrt{T}}}}g(x)\diff x-\eps\sqrt{T}}\\
&=(1-\lambda)\eps\sqrt{T}+\int_0^\infty \spar{1+\rpar{2\lambda-2\lambda\Phi\rpar{\frac{x}{\sqrt{T}}}-1}g(x)}\diff x.
\end{align*}
Minimizing it with respect to $g:\R\rightarrow[0,1]$ is equivalent to assigning $g(x)$ pointwise to $0$ or $1$, giving the Lagrange dual function
\begin{equation*}
q(\lambda)\defeq \min_{g:\R_{\geq 0}\rightarrow[0,1]}L(g,\lambda)=(1-\lambda)\eps\sqrt{T}+\int_0^\infty \min\left\{1,2\lambda\spar{1-\Phi\rpar{\frac{x}{\sqrt{T}}}}\right\}\diff x.
\end{equation*}
Accordingly, the dual of Problem Two is maximizing $q(\lambda)$ over $\lambda\in\R$. 

\paragraph{Step 3: solving the dual problem} Regarding this dual problem, it can be verified that the maximizing argument satisfies $\lambda\geq 1$. For any such $\lambda$, define the notation
\begin{equation*}
x_{\lambda}\defeq \sqrt{T}\cdot\Phi^{-1}\rpar{1-\frac{1}{2\lambda}}\geq 0,
\end{equation*}
such that
\begin{equation*}
q(\lambda)=(1-\lambda)\eps\sqrt{T}+x_\lambda+2\lambda\int_{x_\lambda}^\infty \spar{1-\Phi\rpar{\frac{x}{\sqrt{T}}}}\diff x.
\end{equation*}
Differentiating it gives
\begin{align*}
q'(\lambda)&=-\eps\sqrt{T}+\frac{\partial x_{\lambda}}{\partial\lambda}+2\int_{x_\lambda}^\infty \spar{1-\Phi\rpar{\frac{x}{\sqrt{T}}}}\diff x-2\lambda\spar{1-\Phi\rpar{\frac{x_\lambda}{\sqrt{T}}}}\frac{\partial x_{\lambda}}{\partial\lambda}\\
&=-\eps\sqrt{T}+2\int_{x_\lambda}^\infty \spar{1-\Phi\rpar{\frac{x}{\sqrt{T}}}}\diff x.
\end{align*}
The maximizing argument $\lambda^*$ satisfies $q(\lambda^*)=0$, which is equivalent to the solution of
\begin{equation*}
\int_{-\infty}^{-\Phi^{-1}\rpar{1-\frac{1}{2\lambda^*}}}\Phi(x)\diff x=\frac{\eps}{2}.
\end{equation*}

\paragraph{Step 4: final result} Summarizing the above and observing that strong duality holds, we have
\begin{equation*}
\gamma(\eps)=\frac{q(\lambda^*)}{\sqrt{T}}=\eps+\Phi^{-1}\rpar{1-\frac{1}{2\lambda^*}},
\end{equation*}
therefore $\int_{-\infty}^{\eps-\gamma(\eps)}\Phi(x)\diff x=\frac{\eps}{2}$, recovering the result from Theorem~\ref{theorem:gamma}. Furthermore, the minimizing argument $f$ of Problem One is given by the soft-thresholded absolute value, 
\begin{equation*}
f(x)=-\eps\sqrt{T}+\max\left\{\abs{x}-\sqrt{T}\cdot\Phi^{-1}\rpar{1-\frac{1}{2\lambda^*}},0\right\},\quad\forall x\in\R.
\end{equation*}
This is the $h$ function used to instantiate our algorithm in Corollary~\ref{corollary:regret_soft}. 

\section{Extension: OLO with Noisy Feedback}\label{section:stochastic}

This section extends the algorithmic results so far to the setting with stochastic adversaries, also known as OLO with noisy feedback. By doing this, we further develop the parallel relation between performance guarantees in OLO and Wasserstein martingale CLTs \citep{rollin2018quantitative}. Motivations are two-fold.
\begin{itemize}
\item The intuition underlying the proof of our main theorem (Theorem~\ref{theorem:main}) is to view the adversary's cumulative decision $\sum_{t=1}^Tg_t$ as a stochastic process indexed by $T$. After a suitable notion of bias is properly taken care of, this can be essentially analyzed as a martingale whose normal approximation properties follow from Stein's method. While so far we have only considered deterministic $\sum_{t=1}^Tg_t$, it is natural to expect that the same idea still works when $\sum_{t=1}^Tg_t$ is by default a stochastic process. This section performs this extension. 
\item In OLO, achievable performance guarantees are governed by the complexity of the targeted adversary class. While typically the adversaries are constrained in a deterministic manner (e.g., by assuming $g_t\in[-1,1]$), an alternative setting is to impose soft, distributional constraints such that $\abs{g_t}$ is allowed to be large (albeit with low probability). There have been plenty of works on this extension \citep{jun2019parameter,van2019user,zhang2022parameter,bhatt2025prediction,liu2025online}, but to our knowledge, achieving better-than-big-O optimality remains open. 
\end{itemize}

After introducing the setting, we present the extensions of our main results in Appendix~\ref{subsection:stochastic_main}. The relation to R{\"o}llin's martingale CLT is discussed in Appendix~\ref{subsection:clt}. 

\paragraph{Setting} With a time horizon $T\in\N_+$, let $\Omega=\R^T$ and $\calF=\calB(\R^T)$ represent the sequence space and its Borel $\sigma$-algebra. Let $G_t(\omega)=\omega_t, \omega\in\Omega,t\in[1:T]$ be the canonical coordinate process (intuitively, the \emph{loss process} in OLO), and let $\calF_t=\sigma(G_1,\ldots,G_t),t\in[0,T]$ be the canonical filtration. Consider the following interaction between an OLO algorithm and an adversary, analogous to the deterministic setting in Section~\ref{section:introduction}.
\begin{itemize}
\item The algorithm is defined by a sequence of Borel functions $(a_t)_{t\in[1:T]}$ where each $a_t:\R^{t-1}\rightarrow[-1,1]$ can depend on $T$. This defines a \emph{decision process} $(X_t)_{t\in[1:T]}$ where
\begin{equation*}
X_t(\omega)=a_t(G_1(\omega),\ldots,G_{t-1}(\omega)),
\end{equation*}
which is predictable with respect to $(\calF_{t})_{t\in[0:T]}$.
\item The adversary is defined by a probability measure $\P$ on $(\Omega,\calF)$, which can arbitrarily depend on $T$ and the algorithm $(a_t)_{t\in[1:T]}$. This specifies the law of the loss process $(G_t)_{t\in[1:T]}$, and by fixing the algorithm, the law of the decision process $(X_t)_{t\in[1:T]}$ as well. 
\end{itemize}

On the filtered probability space $(\Omega,\calF,(\calF_{t})_{t\in[0:T]},\P)$, the performance of the algorithm is measured by the total loss
\begin{equation*}
\loss_T=\sum_{t=1}^TG_tX_t
\end{equation*}
which is a random variable, and the goal of the algorithm is to minimize its expectation in the same sense as in Section~\ref{section:introduction}. We use the plain $\E$ to denote the expectation with respect to $\P$. This is not to be confused with $\E_Z$ in the rest of our analysis, which takes the expectation with respect to the external randomness in $Z$. Picking $\P$ as a Dirac measure essentially recovers the deterministic setting. 

For notational convenience, we also write $x_t=X_t(\omega)$ and $g_t=G_t(\omega)$ such that all the pathwise notations are consistent with the deterministic setting.

\paragraph{OLO with noise} The above is equivalent to the classical problem of OLO with noisy feedback: after a standard deterministic adversary picks $g_t$, the algorithm observes $\hat g_t=g_t+\sigma_t$ where the noise $\sigma_t$ is conditionally zero mean, and the performance of the algorithm is evaluated by the expectation of its total loss $\sum_{t=1}^T\hat g_tx_t$. We note that our analysis does not place explicit assumptions on the tail of the noise $\sigma_t$ (e.g., sub-Gaussianity) which are required by typical martingale concentration inequalities. Instead, assumptions will be imposed on the conditional moments of $\sigma_t$. This will be made clear in Corollary~\ref{corollary:case_bounded_moment}. 

\subsection{Main Result}\label{subsection:stochastic_main}

\paragraph{Algorithm} Consider applying Algorithm~\ref{algorithm:main} to every sample path of the above stochastic setting. To prepare for its population-level guarantees, we now reformulate the intermediate quantities of Algorithm~\ref{algorithm:main} as stochastic processes. 
\begin{itemize}
\item On each sample path, Algorithm~\ref{algorithm:main} requires picking a parameter $\rho_t$ based on $g_1,\ldots,g_{t-1}$. On the population level, with $\rho_t$ represented by some $P_t(\omega)$, this amounts to the algorithm picking a predictable process $(P_t)_{t\in[0:T]}$ with respect to $(\calF_{t})_{t\in[0:T]}$, such that ($i$) for all $\omega\in\Omega$ and $t\in[1:T]$, $P_t(\omega)\in[0,P_{t-1}(\omega)]$; and ($ii$) for all $\omega\in\Omega$ and $t<T$, $P_t(\omega)>0$. We note that $P_0$ is simply a positive real number. 

\item Accordingly, based on the pathwise quantity $c_t=\rho_{t-1}^2-\rho^2_{t}$, we define the process $(C_t)_{t\in[1:T]}$ by
$C_t=P^2_{t-1}-P^2_t$, which is also predictable with respect to $(\calF_{t})_{t\in[0:T]}$.
\item Finally, define the process $(S_t)_{t\in[1:T]}$ by $S_t=\sum_{t=1}^TG_t$.
\end{itemize}
Using these notations, applying Algorithm~\ref{algorithm:main} to the stochastic setting is equivalent to the update rule
\begin{equation}\label{eq:output_stochastic}
X_t=\E_{Z}\spar{f_{S_{t-1},P_{t-1},h}(S_{t-1}+P_tZ)}. 
\end{equation}

\paragraph{Total loss bound} A key strength of our main result (Theorem~\ref{theorem:main}) is that the stated total loss bound holds against arbitrary adversaries, including those that are unconstrained beforehand. Therefore taking the expectation of Theorem~\ref{theorem:main} (more precisely, a partial result in its proof) leads to the following theorem.

\begin{theorem}[Expected total loss bound]\label{theorem:stochastic}
In the stochastic setting, against all adversaries, Algorithm~\ref{algorithm:main} (i.e., Eq.\eqref{eq:output_stochastic}) given any convex and $1$-Lipschitz function $h$ guarantees
\begin{multline*}
\E[\loss_T]
\leq -\E\spar{\E_{Z} \spar{h\rpar{\sum_{t=1}^TG_t+P_T Z}}}+\E_{Z}[h(P_0Z)]\\
+\E\sum_{t=1}^T\rpar{\sqrt{\frac{2}{\pi}}\frac{\max\{\E[G_t^2|\calF_{t-1}]-C_t,0\}}{P_{t-1}}+\frac{2C_t\abs{G_t}+\abs{G_t^3}}{P_{t-1}^2}}.
\end{multline*}
\end{theorem}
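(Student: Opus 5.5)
The plan is to redo the pathwise proof of Theorem~\ref{theorem:main}, but stop one step before the final bound on $\triangle$. That bound is the only place where the pathwise argument uses $g_t^2$ in a way that can be improved by conditioning. Fix a sample path $\omega$, with $g_t=G_t(\omega)$, $\rho_t=P_t(\omega)$ and $c_t=C_t(\omega)$. Steps 1–3 of the proof of Theorem~\ref{theorem:main} are purely deterministic and hold against arbitrary real $g_t$, so they give the pathwise identity
\begin{equation*}
\loss_T=-\E_Z\spar{h\rpar{s_T+\rho_TZ}}+\E_Z[h(\rho_0Z)]+\sum_{t=1}^T(\triangle_t+\Diamond_t),
\end{equation*}
where $\triangle_t=(c_t-g_t^2)\,\E_Z[f'_{s_{t-1},\rho_{t-1},h}(s_{t-1}+\rho_tZ)]$. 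We also keep the pathwise bound $\Diamond_t\leq (2c_t|g_t|+|g_t^3|)/\rho_{t-1}^2$ unchanged. Next we take $\E$ over $\P$. The $\Diamond$ part and the two $h$-terms then appear exactly as stated.

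The key step is to handle $\triangle_t$ through the tower property rather than pathwise. Let $D_t\defeq \E_Z[f'_{S_{t-1},P_{t-1},h}(S_{t-1}+P_tZ)]$. It depends only on $S_{t-1},P_{t-1},P_t$, so it is $\calF_{t-1}$-measurable, and $C_t$ is $\calF_{t-1}$-measurable as well. Hence
\begin{equation*}
\E[\triangle_t]=\E\spar{(C_t-\E[G_t^2|\calF_{t-1}])\,D_t}.
\end{equation*}
Lemma~\ref{lemma:monotonicity} gives $D_t\leq 0$, and Lemma~\ref{lemma:stein_magnitude} gives $|D_t|\leq\sqrt{2/\pi}\,P_{t-1}^{-1}$. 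Together these yield $(C_t-\E[G_t^2|\calF_{t-1}])D_t\leq \sqrt{2/\pi}\,P_{t-1}^{-1}\max\{\E[G_t^2|\calF_{t-1}]-C_t,0\}$, exactly as in Eq.\eqref{eq:delta_convexity}. Summing over $t$ gives the claim.

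The main obstacle is measure-theoretic rather than analytic, namely justifying the expectations and the tower step. First, measurability: $X_t$, $D_t$ and the $h$-terms are Borel functions of $(G_1,\ldots,G_{t-1})$, since $f_{\mu,\sigma,h}$ is jointly measurable in $(\mu,\sigma,x)$ via Eq.\eqref{eq:standard_rep}. Second, integrability, which I would handle as follows. If $\E[\sum_t(2C_t|G_t|+|G_t|^3)/P_{t-1}^2]=\infty$ or the conditional-variance term is infinite, the bound is trivial, so assume both are finite. Then $D_t$ is bounded, and the conditional expectation $\E[G_t^2D_t|\calF_{t-1}]=D_t\E[G_t^2|\calF_{t-1}]$ is legitimate for the nonnegative variable $G_t^2$ multiplied by the bounded variable $-D_t\ge0$. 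The $h$-terms grow at most linearly in $|S_T|$. If $\E|S_T|=\infty$, the right side contains $-\infty$ and a little care is needed. I would handle this by interpreting $\E[\loss_T]$ via the pathwise inequality: $\loss_T$ is bounded above by an integrable quantity plus $-\E_Z[h(S_T+P_TZ)]$. Alternatively, one can note that convexity bounds $-h$ above by an affine function, so expectations are well defined in the extended sense. I expect that stating the result under the implicit assumption that all displayed expectations exist suffices.
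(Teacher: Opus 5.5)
Your proposal is correct and follows essentially the paper's route: the paper also starts from the pathwise per-round inequality from Steps 1--3 of Theorem~\ref{theorem:main}, conditions on $\calF_{t-1}$ (using that $P_t$, $C_t$ and $\E_Z[f'_{S_{t-1},P_{t-1},h}(S_{t-1}+P_tZ)]$ are $\calF_{t-1}$-measurable) before applying the Stein factor bound and Lemma~\ref{lemma:monotonicity}, and then telescopes and takes expectations; the only cosmetic difference is that it conditions the whole round, including $-G_tX_t$, rather than just $\triangle_t$. One small slip is that $D_t$ is not uniformly bounded, since $P_{t-1}$ can be arbitrarily small, but you don't need that: the pull-out identity $\E[G_t^2(-D_t)\,|\,\calF_{t-1}]=(-D_t)\,\E[G_t^2\,|\,\calF_{t-1}]$ holds for nonnegative variables in the extended sense.
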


\begin{proof}[Proof of Theorem~\ref{theorem:stochastic}]
We start from the following partial result in the proof of Theorem~\ref{theorem:main} which holds pathwise on the filtered probability space $(\Omega,\calF,(\calF_{t})_{t\in[0:T]},\P)$,
\begin{multline*}
\E_{Z}[h(S_t+P_tZ)]-\E_{Z}[h(S_{t-1}+P_{t-1}Z)]
\\
\leq-G_tX_{t}+(C_t-G_t^2)\E_{Z}\spar{f'_{S_{t-1},P_{t-1},h}(S_{t-1}+P_tZ)}+\frac{2C_t\abs{G_t}+\abs{G_t^3}}{P_{t-1}^2}.
\end{multline*}
Taking the expectation conditioned on $\calF_{t-1}$ before applying Lemma~\ref{lemma:stein_magnitude} (Stein factors) and Lemma~\ref{lemma:monotonicity} ($f'_{s_{t-1},\rho_{t-1},h}$ is non-positive) gives
\begin{multline*}
\E\spar{\E_{Z}[h(S_t+P_tZ)]-\E_{Z}[h(S_{t-1}+P_{t-1}Z)]\Big|\calF_{t-1}}\\
\leq-\E[G_t|\calF_{t-1}]X_t+\sqrt{\frac{2}{\pi}}\frac{\max\{\E[G_t^2|\calF_{t-1}]-C_t,0\}}{P_{t-1}}+\frac{2C_t\E[\abs{G_t}|\calF_{t-1}]+\E[\abs{G_t^3}|\calF_{t-1}]}{P_{t-1}^2}.
\end{multline*}
Plugging it back into the telescopic sum Eq.\eqref{eq:telescopic} from the proof of Theorem~\ref{theorem:main} and taking the expectation completes the proof.
\end{proof}

Despite the generality of Theorem~\ref{theorem:stochastic}, the most interpretable case is when the second moment of $G_t$ conditioned on $\calF_{t-1}$ is bounded by a known constant, e.g., $\E[G_t^2|\calF_{t-1}]\leq 1$. In this case we obtain the following simplification by setting $P_t$ and $C_t$ deterministically. 

\begin{corollary}[Bounded second moment]\label{corollary:case_bounded_moment}
In the stochastic setting, if the adversary satisfies the moment condition $\E[G_t^2|\calF_{t-1}]\leq 1$ for all $t\in[1:T]$, then by setting $P_t=\sqrt{T-t}$ deterministically, Algorithm~\ref{algorithm:main} (i.e., Eq.\eqref{eq:output_stochastic}) given any convex and $1$-Lipschitz function $h$ guarantees
\begin{equation*}
\E[\loss_T]
\leq -\E\spar{h\rpar{\sum_{t=1}^TG_t}}+\E_{Z}[h(\sqrt{T}Z)]
+\sum_{t=1}^T\frac{2\E[\abs{G_t}]+\E[\abs{G_t^3}]}{T-t+1}.
\end{equation*}
\end{corollary}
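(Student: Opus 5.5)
The corollary is a direct specialization of Theorem~\ref{theorem:stochastic}, so the plan is to substitute the deterministic choice $P_t=\sqrt{T-t}$ and simplify each term. First, check that this choice is admissible. It is deterministic, hence trivially predictable. It satisfies $P_t\in[0,P_{t-1}]$, and $P_t>0$ for $t<T$. The endpoints are $P_0=\sqrt{T}$ and $P_T=0$. Consequently $C_t=P_{t-1}^2-P_t^2=(T-t+1)-(T-t)=1$ for every $t$, and $P_{t-1}^2=T-t+1$.

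Next, simplify the main term on the right-hand side of Theorem~\ref{theorem:stochastic}. Since $P_T=0$, the inner expectation $\E_Z[h(\sum_{t=1}^TG_t+P_TZ)]$ collapses to $h(\sum_{t=1}^TG_t)$. The first piece therefore becomes $-\E[h(\sum_{t=1}^TG_t)]$, and the second piece is $\E_Z[h(\sqrt{T}Z)]$. We should note that $\E[h(\sum_tG_t)]$ is well defined. This holds because $h$ is $1$-Lipschitz, so $|h(s)|\leq|h(0)|+|s|$, and $\E|G_t|\leq(\E G_t^2)^{1/2}\leq 1$ by conditional Jensen and the tower property.

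Then handle the error sum term by term. The variance-mismatch term contains $\max\{\E[G_t^2|\calF_{t-1}]-C_t,0\}=\max\{\E[G_t^2|\calF_{t-1}]-1,0\}$, which is $0$ almost surely by the moment assumption, so it vanishes. The remaining term is $(2C_t|G_t|+|G_t^3|)/P_{t-1}^2=(2|G_t|+|G_t|^3)/(T-t+1)$. Its denominator is deterministic, so linearity of expectation gives $\sum_{t=1}^T(2\E|G_t|+\E|G_t^3|)/(T-t+1)$, which is exactly the stated bound.

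The only step needing care is the integrability bookkeeping used inside the proof of Theorem~\ref{theorem:stochastic}, namely taking conditional expectations of the pathwise inequality. The first-moment terms are fine as shown above. If $\E|G_t|^3=\infty$, the right-hand side is $+\infty$ and the bound holds trivially. So we may assume finite third moments, in which case every term is integrable and the conditional-expectation and telescoping steps of Theorem~\ref{theorem:stochastic} go through unchanged.
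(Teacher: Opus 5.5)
Your proposal is correct and follows the same route the paper takes: specialize Theorem~\ref{theorem:stochastic} to $P_t=\sqrt{T-t}$, so that $C_t=1$, $P_0=\sqrt{T}$ and $P_T=0$; the variance-mismatch term then vanishes by the moment condition, and the remaining error term becomes $(2\E|G_t|+\E|G_t^3|)/(T-t+1)$. Your integrability remarks (finiteness of $\E[h(\sum_t G_t)]$, and the trivial case of an infinite third moment) are a harmless addition that the paper leaves implicit.
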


The value of this total loss bound is its dependence on the first three moments of $G_t$ as opposed to more specific tail conditions. 
If $G_t$ for all $t\in[1:T]$ is supported on the interval $[-1,1]$, then the sum on the RHS is $O(\log T)$, meaning that Corollary~\ref{corollary:case_bounded_moment} essentially recovers its deterministic counterpart (Theorem~\ref{theorem:main}) against $g_t\in[-1,1]$ adversaries which is additively sharp. We also note that just like Theorem~\ref{theorem:main}, Corollary~\ref{corollary:case_bounded_moment} holds for all convex and $1$-Lipschitz $h$ functions, therefore results from other sections of this work can be applied in an orthogonal manner.

\paragraph{Without convexity} Next we provide a variant of Theorem~\ref{theorem:stochastic} to be used shortly. Notice that when proving Theorem~\ref{theorem:stochastic}, the convexity of the $h$ function is only used through Lemma~\ref{lemma:monotonicity}. Removing this step gives the following analogous result without convexity, whose proof is nearly identical to that of Theorem~\ref{theorem:stochastic}. Implications will be further discussed in Appendix~\ref{section:nonconvex}. 

\begin{theorem}[Variant of Theorem~\ref{theorem:stochastic} without convexity]\label{theorem:stochastic_nonconvex}
In the stochastic setting, against all adversaries, Algorithm~\ref{algorithm:main} (i.e., Eq.\eqref{eq:output_stochastic}) given any $1$-Lipschitz function $h$ guarantees
\begin{multline*}
\E[\loss_T]
\leq -\E\spar{\E_{Z} \spar{h\rpar{\sum_{t=1}^TG_t+P_T Z}}}+\E_{Z}[h(P_0Z)]\\
+\E\sum_{t=1}^T\rpar{\sqrt{\frac{2}{\pi}}\frac{\abs{\E[G_t^2|\calF_{t-1}]-C_t}}{P_{t-1}}+\frac{2C_t\abs{G_t}+\abs{G_t^3}}{P_{t-1}^2}}.
\end{multline*}
\end{theorem}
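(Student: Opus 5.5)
I will follow the proof of Theorem~\ref{theorem:stochastic} verbatim, dropping only the single step where convexity entered. Steps 1 and 2 of the proof of Theorem~\ref{theorem:main} never use convexity. Step 1 is the Lindeberg telescoping identity Eq.\eqref{eq:telescopic}. Step 2 applies the Stein equation and Lemma~\ref{lemma:stein_basic}, and the integrability there is certified by Lemma~\ref{lemma:stein_magnitude}, which holds for every $1$-Lipschitz $h$. Step 3's Taylor expansion also uses only that $f'_{S_{t-1},P_{t-1},h}$ is absolutely continuous, which again follows from Lipschitzness of $h$. This yields the exact pathwise identity
\begin{equation*}
\E_{Z}[h(S_t+P_tZ)]-\E_{Z}[h(S_{t-1}+P_{t-1}Z)]=-G_tX_t+(C_t-G_t^2)\E_{Z}\spar{f'_{S_{t-1},P_{t-1},h}(S_{t-1}+P_tZ)}+\Diamond_t,
\end{equation*}
where $X_t$ is given by Eq.\eqref{eq:output_stochastic}. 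The remainder satisfies $\Diamond_t\leq (2C_t\abs{G_t}+\abs{G_t^3})/P_{t-1}^2$ using only the bound $\norm{f''}_\infty\leq 2\sigma^{-2}$ from Lemma~\ref{lemma:stein_magnitude}.

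Next I take the conditional expectation given $\calF_{t-1}$. The quantities $X_t$, $C_t$, $P_{t-1}$, $P_t$ and $S_{t-1}$ are all $\calF_{t-1}$-measurable, as is the expectation $D_t\defeq\E_Z[f'_{S_{t-1},P_{t-1},h}(S_{t-1}+P_tZ)]$. Hence the middle term becomes $(C_t-\E[G_t^2|\calF_{t-1}])D_t$. In the convex case the sign information $D_t\leq 0$ from Lemma~\ref{lemma:monotonicity} let us keep only the positive part. Without convexity, I instead bound the term by $\abs{\E[G_t^2|\calF_{t-1}]-C_t}\cdot\abs{D_t}$. Lemma~\ref{lemma:stein_magnitude} then gives $\abs{D_t}\leq\sqrt{2/\pi}\,P_{t-1}^{-1}$, which is exactly the stated absolute-value term. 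The first term becomes $-\E[G_t|\calF_{t-1}]X_t$, and its expectation equals $-\E[G_tX_t]$ by the tower property.

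Finally I sum over $t$, take the full expectation, and invoke the telescoping identity Eq.\eqref{eq:telescopic} pathwise. This gives $\E[\loss_T]=\sum_t\E[G_tX_t]$ bounded by the boundary terms $-\E\E_Z[h(\sum_tG_t+P_TZ)]+\E_Z[h(P_0Z)]$ plus the error sum, which is the claim.

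The only delicate point is justifying that the expectations are well defined and that the tower property applies. I would argue this by noting that the inequality is vacuous if the right-hand error term is infinite. Otherwise, all terms are controlled by $\abs{G_t}$, $\abs{G_t}^3$ and $G_t^2$ times bounded predictable factors, together with $\abs{h(x)}\leq\abs{h(0)}+\abs{x}$. I expect no genuine obstacle beyond this bookkeeping.
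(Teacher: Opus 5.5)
Your proposal is correct and matches the paper's proof. The paper likewise reruns the argument for Theorem~\ref{theorem:stochastic} and only replaces the sign information from Lemma~\ref{lemma:monotonicity} by the absolute bound $\abs{\E_Z[f'_{S_{t-1},P_{t-1},h}(S_{t-1}+P_tZ)]}\leq\sqrt{2/\pi}\,P_{t-1}^{-1}$ from Lemma~\ref{lemma:stein_magnitude}, which holds for every $1$-Lipschitz $h$. Your integrability bookkeeping, which the paper leaves implicit, also goes through, with one caveat: the factors $P_{t-1}^{-1}$ are not bounded, but since $P_{t-1}\leq P_0$, a finite error sum forces $\E\abs{G_t}^3<\infty$ and $\E[G_t^2/P_{t-1}]<\infty$ (using $C_t/P_{t-1}\leq P_{t-1}\leq P_0$), which is all the tower-property steps need.
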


\subsection{From Total Loss Bound to Martingale CLT}\label{subsection:clt}

The construction of our OLO algorithm is based on the remarkably clean proof of a quantitative Wasserstein martingale CLT due to \citep{rollin2018quantitative}. We now further elucidate their connection from the opposite direction, more specifically showing that R{\"o}llin's martingale CLT can be proved via the above Theorem~\ref{theorem:stochastic_nonconvex}. In this way, we justify viewing the obtained total loss upper bounds as algorithmic analogues of the martingale CLT. 

Concretely, we first restate the result of R{\"o}llin \citep[Theorem~2.1]{rollin2018quantitative}. For two random variables $X$ and $Y$ taking values in $\R$, define their Wasserstein distance as
\begin{equation*}
d_{\mathrm{W}}(X,Y)\defeq \sup_{h\in\mathrm{Lip}_1(\R)}\abs{\E[h(X)]-\E[h(Y)]},
\end{equation*}
where $\mathrm{Lip}_1(\R)$ denotes the collection of all $1$-Lipschitz functions from $\R$ to $\R$. 

\begin{theorem}[R{\"o}llin's martingale CLT]\label{theorem:clt}
For a martingale $(L_t)_{t\in[1:T]}$ adapted to the filtration $(\calF_t)_{t\in[0:T]}$, define the notations $V_T=\sum_{t=1}^T\E[L^2_t|\calF_{t-1}]$ and $v_T=\sum_{t=1}^T\E[L^2_t]$. Assume that $V_T=v_T$ almost surely. Then, for any $a\geq 0$,
\begin{equation*}
d_{\mathrm{W}}\rpar{\frac{1}{\sqrt{v_T}}\sum_{t=1}^TL_t,Z}\leq \frac{1}{\sqrt{v_T}}\spar{3\sum_{t=1}^T\E\frac{\abs{L_t}^3}{v_T-\sum_{i=1}^{t-1}\E[L_i^2|\calF_{i-1}]+a^2}+2a}.
\end{equation*}
\end{theorem}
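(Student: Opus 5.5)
The plan is to apply Theorem~\ref{theorem:stochastic_nonconvex} with the martingale increments as the adversary's losses, $G_t=L_t$. With a suitable choice of $P_t$, the left-hand side $\E[\loss_T]$ vanishes, the variance-misspecification term vanishes, and the remaining inequality is exactly a one-sided Wasserstein bound.

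\textbf{Setup.} Fix a $1$-Lipschitz $h$ and first assume $a>0$. Take the adversary's law to be that of $(L_t)$, and choose
\begin{equation*}
P_t^2\defeq v_T-\sum_{i=1}^{t}\E[L_i^2|\calF_{i-1}]+a^2 .
\end{equation*}
This choice has the following properties.
\begin{itemize}
\item $P_t$ is $\calF_{t-1}$-measurable, hence predictable.
\item $P_t$ is nonincreasing in $t$, and $P_t\geq a>0$ because $V_T=v_T$ almost surely.
\item $P_0^2=v_T+a^2$ and $P_T^2=a^2$ almost surely.
\item $C_t=P_{t-1}^2-P_t^2=\E[L_t^2|\calF_{t-1}]$, so the term $\abs{\E[G_t^2|\calF_{t-1}]-C_t}$ in Theorem~\ref{theorem:stochastic_nonconvex} is identically zero.
\end{itemize}
The decision $X_t\in[-1,1]$ from Eq.\eqref{eq:output_stochastic} is predictable and the increments are martingale differences. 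Hence $\E[\loss_T]=\sum_t\E\bigl[X_t\,\E[L_t|\calF_{t-1}]\bigr]=0$, and Theorem~\ref{theorem:stochastic_nonconvex} gives
\begin{equation*}
\E\spar{\E_Z\spar{h\rpar{\textstyle\sum_t L_t+aZ}}}-\E_Z\spar{h\rpar{\sqrt{v_T+a^2}Z}}\leq \sum_{t=1}^T\E\frac{2C_t\abs{L_t}+\abs{L_t}^3}{P_{t-1}^2}.
\end{equation*}

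\textbf{Two-sided bound and removing the smoothing.} Applying the same bound to $-h$, which is also $1$-Lipschitz and allowed since convexity is no longer needed, yields the absolute value of the left side. Since $h$ is $1$-Lipschitz, replacing $\sum_tL_t+aZ$ by $\sum_tL_t$ costs at most $a\E\abs{Z}\leq a$. Replacing $\sqrt{v_T+a^2}Z$ by $\sqrt{v_T}Z$ costs at most $(\sqrt{v_T+a^2}-\sqrt{v_T})\E\abs Z\leq a$. Together these account for the $2a$ term.

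\textbf{Controlling the error sum.} For the cross term, I condition on $\calF_{t-1}$; this is legitimate because $P_{t-1}$ and $C_t$ are $\calF_{t-1}$-measurable. Lyapunov's inequality gives
\begin{equation*}
\E[L_t^2|\calF_{t-1}]\,\E[\abs{L_t}|\calF_{t-1}]\leq \E[\abs{L_t}^3|\calF_{t-1}]^{2/3}\,\E[\abs{L_t}^3|\calF_{t-1}]^{1/3}=\E[\abs{L_t}^3|\calF_{t-1}],
\end{equation*}
so the error sum is at most $3\sum_t\E\bigl[\abs{L_t}^3/P_{t-1}^2\bigr]$. The denominator is precisely $v_T-\sum_{i\le t-1}\E[L_i^2|\calF_{i-1}]+a^2$. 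Taking the supremum over $h$ and rescaling by $1/\sqrt{v_T}$ (Wasserstein distance is $1$-homogeneous under scaling) gives the stated bound. The case $a=0$ follows by letting $a\downarrow0$: the right side is monotone in $a$, so the limit is valid, with the bound read as $+\infty$ when a denominator vanishes.

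\textbf{Main obstacle.} The measure-theoretic setting of Theorem~\ref{theorem:stochastic_nonconvex} is stated on the canonical space, with $\calF_t$ generated by $G_{1:t}$ and $P_t$ a function of past losses. Here $(\calF_t)$ is a general filtration, possibly larger than the natural one, and $P_t$ depends on conditional variances rather than only on $L_{1:t-1}$. I would handle this by noting that the proof of Theorem~\ref{theorem:stochastic} only uses pathwise identities and conditioning on $\calF_{t-1}$, with $X_t,P_t,C_t$ being $\calF_{t-1}$-measurable. The pathwise inequality from Theorem~\ref{theorem:main} holds for any values of $\rho_{0:T}$ and $g_{1:T}$, so the argument goes through verbatim on an arbitrary filtered space once the "algorithm" is allowed to be $\calF_{t-1}$-measurable rather than a Borel function of past losses.
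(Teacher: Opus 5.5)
Your proposal is correct and follows the paper's proof essentially step for step: the same oracle choice $P_t^2=v_T-\sum_{i\le t}\E[L_i^2|\calF_{i-1}]+a^2$ in Theorem~\ref{theorem:stochastic_nonconvex}, the vanishing of $\E[\loss_T]$ by the martingale property, the two Lipschitz smoothing costs of $a$, and the symmetrization $h\leftarrow -h$. Your conditional Lyapunov argument for $\E[2C_t\abs{L_t}/P_{t-1}^2]\leq\E[2\abs{L_t}^3/P_{t-1}^2]$, the $a\downarrow 0$ limit, and your remark on general filtrations usefully fill in details the paper leaves implicit; note that the $a\downarrow 0$ limit is justified by monotone convergence of the first term, since the full right side is not literally monotone in $a$.
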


Next, we give a concise rephrasing of R{\"o}llin's original proof starting from our Theorem~\ref{theorem:stochastic_nonconvex}. 

\begin{proof}[Proof of Theorem~\ref{theorem:clt}]
Consider applying Theorem~\ref{theorem:stochastic_nonconvex} with the loss process $(G_t)_{t\in[1:T]}$ being the martingale $(L_t)_{t\in[1:T]}$ from Theorem~\ref{theorem:clt}. The algorithm from Theorem~\ref{theorem:stochastic_nonconvex} requires choosing the process $(P_t)_{t\in[1:T]}$, and here we consider equipping it with oracle knowledge: let $C_t=\E[L^2_t|\calF_{t-1}]$, $P_0=\sqrt{\sum_{t=1}^T\E[L^2_t]+a^2}$, and thus
\begin{align*}
&P_t=\sqrt{P_0^2-\sum_{t=1}^tC_t}=\sqrt{v_T-\sum_{i=1}^t\E[L^2_i|\calF_{i-1}]+a^2},\\
&P_T=\sqrt{P_0^2-\sum_{t=1}^TC_t}=\sqrt{v_T-V_T+a^2}=a,\quad a.s.
\end{align*}
The last equality is due to the assumption that $V_T=v_T$ almost surely. We note that such choices of $(C_t)_{t\in[1:T]}$ and $(P_t)_{t\in[1:T]}$ satisfy their respective measurability requirements. Then, Theorem~\ref{theorem:stochastic_nonconvex} yields
\begin{equation*}
\E\spar{\sum_{t=1}^TL_tX_t}\leq -\E\spar{\E_{Z} \spar{h\rpar{\sum_{t=1}^TL_t+a Z}}}+\E_{Z}[h(\sqrt{v_T+a^2}Z)]
+\E\sum_{t=1}^T\frac{2\E[L^2_t|\calF_{t-1}]\abs{L_t}+\abs{L_t^3}}{v_T-\sum_{i=1}^{t-1}\E[L^2_i|\calF_{i-1}]+a^2},
\end{equation*}
where by the Lipschitzness of $h$,
\begin{equation*}
-\E\spar{\E_{Z} \spar{h\rpar{\sum_{t=1}^TL_t+a Z}}}\leq -\E\spar{h\rpar{\sum_{t=1}^TL_t}}+\E_Z[\abs{aZ}]\leq -\E\spar{h\rpar{\sum_{t=1}^TL_t}}+a,
\end{equation*}
\begin{equation*}
\E_{Z}[h(\sqrt{v_T+a^2}Z)]\leq \E_{Z}[h(\sqrt{v_T}Z)]+a.
\end{equation*}
Furthermore,
\begin{equation*}
\E\sum_{t=1}^T\frac{2\E[L^2_t|\calF_{t-1}]\abs{L_t}+\abs{L_t^3}}{v_T-\sum_{i=1}^{t-1}\E[L^2_i|\calF_{i-1}]+a^2}\leq \E\sum_{t=1}^T\frac{3\abs{L_t^3}}{v_T-\sum_{i=1}^{t-1}\E[L^2_i|\calF_{i-1}]+a^2}.
\end{equation*}

Since $(X_t)_{t\in[1:T]}$ is predictable with respect to the natural filtration $(\calF_t)_{t\in[1:T]}$ of $(L_t)_{t\in[1:T]}$ and the latter is a martingale, we have $\E[\sum_{t=1}^TL_tX_t]=0$. Therefore combining the above, for any $1$-Lipschitz $h$ function we have
\begin{equation*}
\E\spar{h\rpar{\sum_{t=1}^TL_t}}-\E_{Z}[h(\sqrt{v_T}Z)]
\leq 3\E\sum_{t=1}^T\frac{\abs{L_t^3}}{v_T-\sum_{i=1}^{t-1}\E[L^2_i|\calF_{i-1}]+a^2}+2a.
\end{equation*}
Notice that the inequality still holds with $h\leftarrow-h$. Finally scaling all $L_t$ and $a$ by $\frac{1}{\sqrt{v_T}}$ completes the proof.
\end{proof}

We remark that the major technical elements of this proof are already presented in \citep{rollin2018quantitative}, with some of them being classical techniques of Stein's method (Appendix~\ref{section:background_stein}). Nonetheless, we find it conceptually very interesting to rephrase the proof through performance guarantees in OLO, indicating that many of their technical difficulties are in common. A notable observation is that the instrumental OLO algorithm constructed within the proof is allowed to have prior knowledge on the quadratic variation of the adversary, whereas in the actual OLO problem the algorithm does not have such prior knowledge therefore the convexity of the $h$ function becomes crucial. Formally connecting this observation to game-theoretic probability \citep{shafer2019game} would be a valuable direction for future works.  

\section{Discussion of Nonconvex Tradeoffs}\label{section:nonconvex}

Deviating from the rest of this work, this section discusses nonconvex performance tradeoffs in OLO, i.e., achieving
\begin{equation}\label{eq:loss_bound_nonconvex}
\loss_T\leq -\mathrm{Bound}\rpar{-\sum_{t=1}^Tg_t}
\end{equation}
analogous to Eq.\eqref{eq:loss_bound} but the function $\mathrm{Bound}:\R\rightarrow\R$ here is nonconvex. Since the convex conjugate of any function is automatically convex, the function $\mathrm{Bound}$ cannot be expressed as a convex conjugate $\psi^*_T$, therefore total loss bounds of this type cannot be recovered by regret bounds through the loss-regret duality. For such a nonconvex tradeoff regime, existing results are scarce. 

First, we note that Cover's characterization of performance tradeoffs in OLO (Theorem~\ref{theorem:cover}) also works in this nonconvex regime since the adversary is assumed to be Boolean. Concretely, the following generalization of Theorem~\ref{theorem:cover} is true: for all $1$-Lipschitz function $\mathrm{Bound}:\R\rightarrow(-\infty,\infty]$, there exists an algorithm achieving the total loss bound Eq.\eqref{eq:loss_bound_nonconvex} against Boolean adversaries if and only if
\begin{equation*}
\E_{X\sim \rs(T)}[\mathrm{Bound}(X)]\leq 0.
\end{equation*}
The associated algorithm is still given by Eq.\eqref{eq:cover}, and our goal is to give a computationally efficient improvement. 

By inspection, the proof of our Theorem~\ref{theorem:main} only uses the convexity of the $h$ function through Lemma~\ref{lemma:monotonicity}. Therefore when the $h$ function given to Algorithm~\ref{algorithm:main} is $1$-Lipschitz but not necessarily convex, the following variant of Theorem~\ref{theorem:main} is immediate. Also see Theorem~\ref{theorem:stochastic_nonconvex} for the analogue of this result in the stochastic setting of OLO. 

\begin{theorem}[Variant of Theorem~\ref{theorem:main} without convexity]\label{theorem:main_nonconvex}
Against all adversaries (potentially unbounded a priori), Algorithm~\ref{algorithm:main} given any $1$-Lipschitz function $h$ guarantees
\begin{multline*}
\loss_T\\
\leq -\E_{Z} \spar{h\rpar{\sum_{t=1}^Tg_t+\rho_T Z}}+\E_{Z}[h(\rho_0Z)]+\sum_{t=1}^T\rpar{\sqrt{\frac{2}{\pi}}\frac{\abs{g_t^2-c_t}}{\rho_{t-1}}+\frac{2c_t\abs{g_t}+\abs{g_t^3}}{\rho_{t-1}^2}}.
\end{multline*}
\end{theorem}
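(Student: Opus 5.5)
The plan is to rerun the proof of Theorem~\ref{theorem:main} essentially verbatim and change only the single step where convexity of $h$ was used. Steps 1 and 2 never used convexity. The Lindeberg telescoping identity Eq.\eqref{eq:telescopic} holds for any $h$ for which the expectations are finite, which is automatic for $1$-Lipschitz $h$. The Stein equation Eq.\eqref{eq:stein} and Stein's lemma (Lemma~\ref{lemma:stein_basic}) reduce each telescoping increment to
\begin{equation*}
\E_{Z}\spar{c_tf'_{s_{t-1},\rho_{t-1},h}(s_t+\rho_tZ)-g_tf_{s_{t-1},\rho_{t-1},h}(s_t+\rho_tZ)}.
\end{equation*}
These steps are valid for every $1$-Lipschitz $h$. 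The integrability hypothesis of Lemma~\ref{lemma:stein_basic} is supplied by Lemma~\ref{lemma:stein_magnitude}, which is stated for all $1$-Lipschitz $h$ and not only convex ones. The output $x_t$ in Eq.\eqref{eq:algorithm} is likewise well defined in $[-1,1]$ because $\norm{f_{\mu,\sigma,h}}_\infty\leq 1$ still holds.

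Next, repeat Step 3's Taylor expansions exactly as before. This yields the same decomposition of the increment into $-g_tx_t+\triangle+\Diamond$, with $\triangle=(c_t-g_t^2)\E_Z[f'_{s_{t-1},\rho_{t-1},h}(s_{t-1}+\rho_tZ)]$. The bound on $\Diamond$ used only the Stein factor $\norm{f''}_\infty\leq 2\rho_{t-1}^{-2}$, so it carries over unchanged and gives $(2c_t\abs{g_t}+\abs{g_t^3})/\rho_{t-1}^2$.

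The one genuine modification is $\triangle$. Previously, Lemma~\ref{lemma:monotonicity} ($f'\leq 0$) let us discard the case $c_t\geq g_t^2$, producing $\max\{g_t^2-c_t,0\}$. Without convexity the sign of $f'$ is unknown, so I would simply bound $\abs{\triangle}\leq\abs{c_t-g_t^2}\,\norm{f'_{s_{t-1},\rho_{t-1},h}}_\infty\leq\sqrt{2/\pi}\,\rho_{t-1}^{-1}\abs{g_t^2-c_t}$, using the second Stein factor. Summing over $t$ and rearranging the telescoping identity, with $\sum_t g_tx_t=\loss_T$ moved to the left, gives the claimed bound.

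I expect no real obstacle. The only points to double-check are the regularity facts, stated in Section~\ref{section:algorithm} for general Lipschitz $h$: $f'$ is Lipschitz, so $f''$ exists a.e. and the integral-remainder Taylor formula applies. The edge case $\rho_T=0$ at $t=T$ is handled exactly as in the original proof, since only $\rho_{T-1}>0$ enters the Stein factors.
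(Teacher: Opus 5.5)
Your proposal is correct and matches the paper's argument. The paper also proves this variant by observing that convexity enters the proof of Theorem~\ref{theorem:main} only through Lemma~\ref{lemma:monotonicity}. It replaces that step with the two-sided Stein-factor bound $\abs{\triangle}\leq\sqrt{2/\pi}\,\rho_{t-1}^{-1}\abs{g_t^2-c_t}$, leaving the telescoping, Stein's lemma, and the $\Diamond$ bound unchanged.
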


Similar to the idea of Theorem~\ref{theorem:main}, the first two terms on the RHS essentially match Cover's achievability condition therefore constitute the ideal bound to aim for, whereas the remaining sum should be understood as the discretization error. To maximize the interpretability, consider again the case where $\abs{g_t}\leq 1$ for all $t$ and accordingly Algorithm~\ref{algorithm:main} sets $\rho_t=\sqrt{T-t}$ and $c_t=1$. We have
\begin{equation*}
\sum_{t=1}^T\frac{2c_t\abs{g_t}+\abs{g_t^3}}{\rho_{t-1}^2}=O(\log T),
\end{equation*}
but the other part of the discretization error can only be generally bounded by
\begin{equation*}
\sum_{t=1}^T\sqrt{\frac{2}{\pi}}\frac{\abs{g_t^2-c_t}}{\rho_{t-1}}=\max_{t\in[1:T]}\abs{g_t^2-c_t}\cdot O(\sqrt{T}).
\end{equation*}
While this is still a meaningful result, unless $c_t=g_t^2$ for all $t$ we can no longer argue about the additive sharpness of the obtained total loss bound since the $O(\sqrt{T})$ discretization error is often significant.

To summarize, in this nonconvex tradeoff regime: 
\begin{itemize}
\item If the adversary is Boolean, then by setting $\rho_t=\sqrt{T-t}$ thus $g_t^2=c_t$ for all $t$, we obtain a computationally efficient improvement of Cover's algorithm, and the associated total loss bound is additively sharp (akin to the convex tradeoff regime, Theorem~\ref{theorem:main}). 
\item If the adversary satisfies $\abs{g_t}\leq 1$ but is not Boolean, then Cover's total loss bound does not apply, whereas our bound is meaningful but not additively sharp. This is due to an inflated discretization error term related to the sequence of misspecification error $\abs{g_t^2-c_t}$. 
\end{itemize}

\section{Omitted Proofs}\label{section:omitted_proofs}

This section presents the omitted proofs thus far. Appendix~\ref{subsection:reformulation_general} proves the complete version of Lemma~\ref{lemma:equivalent_short}, giving a more direct representation of Algorithm~\ref{algorithm:main}'s output. Appendix~\ref{subsection:omitted_proofs_general} proves Theorem~\ref{theorem:loss_lower} (our general lower bound on $\loss_T$), and then uses the loss-regret duality to convert both Theorem~\ref{theorem:main} and Theorem~\ref{theorem:loss_lower} to matching upper and lower bounds on $\reg_T(u)$. Appendix~\ref{subsection:omitted_proofs_cases} proves the instantiations of these results on iconic $h$ functions. Appendix~\ref{subsection:omitted_two_point} presents the omitted proofs for Appendix~\ref{section:soft_thresholded}. Appendix~\ref{subsection:omitted_technical} proves a number of technical lemmas. Appendix~\ref{subsection:omitted_existing} summarizes several classical lemmas we use, whose proofs are omitted. 

\subsection{Rewriting the Algorithm's Output}\label{subsection:reformulation_general}

The following lemma is the complete version of Lemma~\ref{lemma:equivalent_short} in Section~\ref{section:algorithm}.  

\begin{lemma}[Rewriting the output]\label{lemma:equivalent}
The output Eq.\eqref{eq:algorithm} of Algorithm~\ref{algorithm:main} is equivalent to
\begin{align*}
x_{t}&=-\frac{1}{2}\int_0^{1}\frac{1}{\sqrt{\tau}}\E_Z\spar{h'\rpar{s_{t-1}+\sqrt{\rho^2_{t-1}-\tau c_t}Z}}\diff\tau\\
&=-\E_{Z\sim\calN(0,1);\tau\sim\mathrm{Beta}(\half,1)}\spar{h'\rpar{s_{t-1}+\sqrt{\rho^2_{t-1}-\tau c_t}Z}},
\end{align*}
as well as
\begin{align*}
x_{t}&=-\int_{0}^{\infty}e^{-\tau}\E_Z\spar{h'\rpar{s_{t-1}+\sqrt{\rho^2_{t-1}-e^{-2\tau}c_t}Z}}\diff\tau\\
&=-\E_{Z\sim\calN(0,1);\tau\sim\mathrm{Exp}(1)}\spar{h'\rpar{s_{t-1}+\sqrt{\rho^2_{t-1}-e^{-2\tau} c_t}Z}}.
\end{align*}
\end{lemma}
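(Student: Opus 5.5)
The plan is to start from the Ornstein--Uhlenbeck representation Eq.\eqref{eq:ou_rep} of $f_{\mu,\sigma,h}$ with $\mu=s_{t-1}$ and $\sigma=\rho_{t-1}$, evaluate it at $x=s_{t-1}+\rho_tZ'$, and take the expectation over $Z'$. For each fixed $\tau$, Eq.\eqref{eq:ou_rep} gives
\begin{equation*}
f_{s_{t-1},\rho_{t-1},h}(s_{t-1}+\rho_tZ')=-\int_0^\infty e^{-\tau}\E_Z\spar{h'\rpar{s_{t-1}+e^{-\tau}\rho_tZ'+\rho_{t-1}\sqrt{1-e^{-2\tau}}Z}}\diff\tau.
\end{equation*}
The key observation is that for independent standard normals $Z,Z'$, the combination $e^{-\tau}\rho_tZ'+\rho_{t-1}\sqrt{1-e^{-2\tau}}Z$ is centered normal with variance $e^{-2\tau}\rho_t^2+\rho_{t-1}^2(1-e^{-2\tau})=\rho_{t-1}^2-e^{-2\tau}c_t$, using $c_t=\rho_{t-1}^2-\rho_t^2$. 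This directly yields the $\mathrm{Exp}(1)$ form of the lemma, since $e^{-\tau}\diff\tau$ is the $\mathrm{Exp}(1)$ density. The $\mathrm{Beta}(\half,1)$ form then follows from the substitution $u=e^{-2\tau}$: we have $e^{-\tau}\diff\tau=\tfrac12u^{-1/2}\diff u$ with $u\in(0,1]$, and $\tfrac12u^{-1/2}$ is the $\mathrm{Beta}(\half,1)$ density.

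The one technical step is justifying the exchange of $\E_{Z'}$ with the $\tau$-integral and with $\E_Z$. Since $h$ is $1$-Lipschitz, $|h'|\le1$ almost everywhere, so the integrand is bounded by $e^{-\tau}$. Fubini--Tonelli therefore applies, once the integrand is known to be defined almost everywhere. For $\tau>0$ with $\rho_{t-1}>0$, the argument of $h'$ has a Gaussian component with positive variance, so the set where $h'$ fails to exist, which is Lebesgue-null, is hit with probability zero.

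The only delicate case is $t=T$ with $\rho_T=0$. There $x_T=f(s_{T-1})$ is a point evaluation, but the variance $\rho_{T-1}^2(1-e^{-2\tau})$ is still positive for $\tau>0$, so the same argument goes through. I expect this measurability and almost-everywhere bookkeeping to be the main obstacle, and it is a mild one; the rest is the Gaussian convolution identity.
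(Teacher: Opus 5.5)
Your proposal is correct, and for the main case it takes a different and shorter route than the paper. The paper does what you do only in the degenerate case $\rho_t=0$, where it evaluates the OU representation Eq.~\eqref{eq:ou_rep} at $x=\mu$.

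For $\rho_t>0$ the paper instead works through the Stein equation. It sets $u(c)=\E_Z[f_{\mu,\sigma,h}(\mu+\sqrt{\sigma^2-c}\,Z)]$ and differentiates the Stein equation to get $\sigma^2f''-(x-\mu)f'-f=h'$ a.e. It then applies Stein's lemma twice: once to $f'$ under $\calN(\mu,\sigma^2-c)$, and once to show $u'(c)=-\half\E[f''(Y)]$. Solving the resulting ODE $2cu'(c)+u(c)=-\E[h'(Y)]$ gives the $\mathrm{Beta}(\half,1)$ form first, and the $\mathrm{Exp}(1)$ form follows by substitution.

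You instead evaluate Eq.~\eqref{eq:ou_rep} at the randomized point $s_{t-1}+\rho_tZ'$ and merge the two independent Gaussians into one with variance $\rho_{t-1}^2-e^{-2\tau}c_t$. What your route buys:
\begin{itemize}
\item Both cases are handled by one argument, so the paper's split, made ``to avoid the technical nuances of taking the $\rho_t\rightarrow 0$ limit'', becomes unnecessary.
\item No ODE boundary condition at $c=0$ is needed.
\item The only regularity used is $\abs{h'}\leq 1$ plus Fubini. You should name a Borel version of $h'$, such as the right derivative of the convex $h$, to make joint measurability explicit.
\end{itemize}
The cost is that everything rests on the pointwise validity of the OU representation. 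The paper's Part~2 uses only the Stein equation, Stein's lemma and the Stein factors. In exchange it shows explicitly how $x_t$ depends on the variance guess $c_t$, through an ODE whose value at $c=0$ is the $c_t$-independent output $-\E_Z[h'(s_{t-1}+\rho_{t-1}Z)]$.
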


An immediate remark is that the integrals are well-posed: due to the Lipschitzness of $h'$, $h'$ exists almost everywhere, and whenever $h'$ exists it is bounded. Therefore the expectation of $h'$ exists on distributions with density. 

\begin{proof}[Proof of Lemma~\ref{lemma:equivalent}]
Observe that the two equations in the lemma are equivalent by a change of variable. To connect them to Eq.\eqref{eq:algorithm}, the proof is divided into two parts to avoid the technical nuances of taking the $\rho_t\rightarrow 0$ limit. The first part tackles the degenerate case of $\rho_{t}=0$ by brute force, which is only possible at $t=T$ by the requirement of Algorithm~\ref{algorithm:main}. The second part tackles the nondegenerate case of $\rho_t>0$ where a more structured continuous argument is applicable. 

\paragraph{Part 1} If $\rho_{t}=0$, i.e., $t=T$ and $\rho_{t-1}^2=c_t$, then Eq.\eqref{eq:algorithm} becomes $x_{t}=f_{s_{t-1},\rho_{t-1},h}(s_{t-1})$. Using the integral representation of the function $f_{\mu,\sigma,h}$ through the OU semigroup (Eq.\eqref{eq:ou_rep} in Appendix~\ref{section:background_stein}), we have
\begin{equation*}
f_{\mu,\sigma,h}(\mu)=-\int_0^\infty e^{-\tau}\E_Z\spar{h'\rpar{\mu+\sigma\sqrt{1-e^{-2\tau}}Z}}\diff \tau.
\end{equation*}
Plugging in $\mu\leftarrow s_{t-1}$ and $\sigma\leftarrow\rho_{t-1}$ results in the equivalence of Eq.\eqref{eq:algorithm} and the second equation of the lemma. 

\paragraph{Part 2} Next consider $\rho_t\neq 0$, i.e., $\rho^2_{t-1}>c_t\geq 0$. Define the notations $Y\sim\calN(\mu,\sigma^2-c)$ for general $\sigma^2>c\geq 0$, and $u(c)\defeq\E_Y[f_{\mu,\sigma,h}(Y)]=\E_Z[f_{\mu,\sigma,h}(\mu+\sqrt{\sigma^2-c}Z)]$. The goal of Part 2 is to prove $u(0)=-\E_Z[h'(\mu+\sigma Z)]$ and
\begin{equation}\label{eq:u_of_c}
u(c)=-\frac{1}{2\sqrt{c}}\int_0^c\frac{1}{\sqrt{\tau}}\E_Z\spar{h'\rpar{\mu+\sqrt{\sigma^2-\tau}Z}}\diff\tau,\quad \forall c\in(0,\sigma^2). 
\end{equation}
After that, by plugging in $\mu\leftarrow s_{t-1}$, $\sigma\leftarrow\rho_{t-1}$ and $c\leftarrow c_t$, the LHS $u(c)$ recovers the definition of $x_t$ in Eq.\eqref{eq:algorithm} while the RHS recovers the first equation of the objective lemma. Starting from this point, the subscripts of $f_{\mu,\sigma,h}$ will be dropped for conciseness. 

\textit{Step 1.} Our first step is to differentiate the Stein equation Eq.\eqref{eq:stein} with respect to $x$, where all derivatives exist almost everywhere. 
\begin{equation*}
\sigma^2f''(x)-(x-\mu)f'(x)-f(x)=h'(x),\quad a.e. 
\end{equation*}
Taking the expectation with respect to $Y\sim\calN(\mu,\sigma^2-c)$ which is absolutely continuous w.r.t. the Lebesgue measure, 
\begin{equation*}
\E_Y[\sigma^2f''(Y)-(Y-\mu)f'(Y)-f(Y)]=\E_Y[h'(Y)].
\end{equation*}
Meanwhile, applying Stein's lemma (Lemma~\ref{lemma:stein_basic}) to the random variable $Y$ and the function $f'(x)$ gives (the integrability requirement of Lemma~\ref{lemma:stein_basic} follows from bounds on the Stein factors, Lemma~\ref{lemma:stein_magnitude})
\begin{equation*}
(\sigma^2-c)\E_Y[f''(Y)]=\E_Y[(Y-\mu)f'(Y)].
\end{equation*}
Combining it with the above and with $u(c)=\E_Y[f(Y)]$,
\begin{equation*}
\E_Y[cf''(Y)]-u(c)=\E_Y[h'(Y)].
\end{equation*}
Plugging in $c=0$ gives $u(0)=-\E_Z[h'(\mu+\sigma Z)]$.

\textit{Step 2.} The second step is to consider the derivative $u'(c)$. Differentiating the definition of $u(c)=\E_Z[f(\mu+\sqrt{\sigma^2-c}Z)]$ gives
\begin{equation*}
u'(c)=-\frac{1}{2\sqrt{\sigma^2-c}}\E_Z\spar{Zf'\rpar{\mu+\sqrt{\sigma^2-c}Z}}.
\end{equation*}
Applying Stein's lemma (Lemma~\ref{lemma:stein_basic}) again to the random variable $Z$ and the function $f'(\mu+\sqrt{\sigma^2-c}x)$, where the integrability requirement again follows from Lemma~\ref{lemma:stein_magnitude},
\begin{equation*}
\sqrt{\sigma^2-c}\cdot\E_Z\spar{f''\rpar{\mu+\sqrt{\sigma^2-c}Z}}=\E_Z\spar{Zf'\rpar{\mu+\sqrt{\sigma^2-c}Z}},
\end{equation*}
therefore $u'(c)=-\half\E_Y[f''(Y)]$ and thus we obtain the ODE
\begin{equation*}
2cu'(c)+u(c)=-\E_Y[h'(Y)].
\end{equation*}

\textit{Step 3.} To solve this ODE, we reformulate it into
\begin{equation*}
\rpar{\sqrt{c}u(c)}'=-\frac{1}{2\sqrt{c}}\E_Y[h'(Y)]=-\frac{1}{2\sqrt{c}}\E_Z\spar{h'\rpar{\mu+\sqrt{\sigma^2-c}Z}},
\end{equation*}
and integrating it results in Eq.\eqref{eq:u_of_c}.
\end{proof}

\subsection{General Regret and Total Loss Bounds}\label{subsection:omitted_proofs_general}

We first restate Theorem~\ref{theorem:loss_lower} and prove it as follows. 

\lossLower*

\begin{proof}[Proof of Theorem~\ref{theorem:loss_lower}]
The proof is based on a nonasymptotic Wasserstein CLT for iid sums. Recall from Section~\ref{section:introduction} that $\rs(n)$ denotes the distribution of the sum of $n$ independent Rademacher random variables. 

First, for any function $h:\R\rightarrow\R$, no OLO algorithm can guarantee
\begin{equation*}
\loss_T<-h\rpar{\sum_{t=1}^Tg_t}+\E_{X\sim\rs(T)}[h(X)]
\end{equation*}
against all Boolean adversaries. To see this, notice that against iid Rademacher $g_{1:T}$, taking the expectation of this bound would lead to the $0<0$ contradiction. 

Next, by Lemma~\ref{lemma:wasserstein_iid} which is due to \citep{ross2011fundamentals}, there exists an absolute constant $c>0$ such that for any $1$-Lipschitz function $h:\R\rightarrow\R$ we have
\begin{equation*}
\abs{\E_{X\sim\rs(T)}[h(X)]-\E_{Z}[h(\sqrt{T}Z)]}< c.
\end{equation*}
Combining it with the above impossibility argument completes the proof.
\end{proof}

The following two corollaries are the regret analogues of Theorem~\ref{theorem:main} and Theorem~\ref{theorem:loss_lower}. The proofs are due to the loss-regret duality (Lemma~\ref{lemma:duality}) therefore omitted.  

\begin{corollary}[Upper bound on $\reg_T(u)$]\label{corollary:regret}
Algorithm~\ref{algorithm:main} with $\rho_T=0$ guarantees the regret bound
\begin{equation*}
\reg_T(u)\leq h^*(-u)+\E_{Z}[h(\rho_0 Z)]+\sum_{t=1}^T\rpar{\sqrt{\frac{2}{\pi}}\frac{\max\{g_t^2-c_t,0\}}{\rho_{t-1}}+\frac{2c_t\abs{g_t}+\abs{g_t^3}}{\rho_{t-1}^2}},\quad\forall u\in[-1,1].
\end{equation*}
\end{corollary}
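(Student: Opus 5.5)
The plan is to derive Corollary~\ref{corollary:regret} directly from Theorem~\ref{theorem:main}, specialized to $\rho_T=0$, followed by one application of Fenchel--Young. No new discretization analysis is needed, since all the work is already in Theorem~\ref{theorem:main}.

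\textbf{Step 1: specialize Theorem~\ref{theorem:main}.} With $\rho_T=0$, the term $\E_Z[h(\sum_{t=1}^Tg_t+\rho_TZ)]$ collapses to $h(\sum_{t=1}^Tg_t)$. Theorem~\ref{theorem:main} therefore reads
\begin{equation*}
\loss_T\leq -h\rpar{\sum_{t=1}^Tg_t}+\E_Z[h(\rho_0Z)]+\err_T,
\end{equation*}
where $\err_T$ is exactly the sum appearing in the corollary.

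\textbf{Step 2: convert to a regret bound.} For any $u\in[-1,1]$, write $\reg_T(u)=\loss_T-\rpar{\sum_{t=1}^Tg_t}u$. It suffices to show
\begin{equation*}
-h(s)-su\leq h^*(-u)\quad\text{for all } s\in\R.
\end{equation*}
This is Fenchel--Young: by the definition $h^*(v)=\sup_{s}[sv-h(s)]$, taking $v=-u$ gives $h^*(-u)\geq s(-u)-h(s)$ for every $s$. Applying this with $s=\sum_{t=1}^Tg_t$ and adding the remaining terms yields the claimed inequality. Convexity of $h$ is not even needed for this step; it enters only through Theorem~\ref{theorem:main}.

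\textbf{Finiteness of $h^*(-u)$.} Since $h$ is $1$-Lipschitz, $h^*$ is finite on $[-1,1]$, so the bound is meaningful for every admissible $u$. Outside $[-1,1]$ the conjugate $h^*$ may be $+\infty$, but no such $u$ is involved.

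\textbf{Main obstacle.} There is essentially none. The only things to check are that the sign conventions match, namely $h^*(-u)$ rather than $h^*(u)$, consistent with Eq.~\eqref{eq:loss_bound} and Lemma~\ref{lemma:duality}, and that the hypothesis $\rho_T=0$ is exactly what removes the Gaussian smoothing in the terminal term. If one preferred to keep $\rho_T>0$, one would instead apply Fenchel--Young to the smoothed convex function $s\mapsto\E_Z[h(s+\rho_TZ)]$, whose conjugate is at most $h^*(\cdot)$ plus $\rho_T\sqrt{2/\pi}$. That variant is unnecessary here.
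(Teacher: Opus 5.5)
Your proof is correct and matches the paper, which omits the argument as an immediate consequence of Theorem~\ref{theorem:main} with $\rho_T=0$ plus the loss-regret duality (Lemma~\ref{lemma:duality}); the direction of that duality used here is exactly your Fenchel--Young step. One side remark is backwards, though harmless: $1$-Lipschitzness forces $h^*=+\infty$ \emph{outside} $[-1,1]$ but does not make $h^*$ finite on $[-1,1]$ (e.g.\ $h(x)=\max\{x,0\}$ gives $h^*(-u)=+\infty$ for $u\in(0,1]$), and where $h^*(-u)=+\infty$ the inequality holds trivially.
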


\begin{corollary}[Lower bound on $\reg_T(u)$]\label{corollary:regret_lower}
There exists an absolute constant $c>0$ such that the following statement holds. For any OLO algorithm and any convex and $1$-Lipschitz function $h:\R\rightarrow\R$, there exists a Boolean adversary ($g_t\in\{-1,1\}$ for all $t$) and a comparator $u\in[-1,1]$ such that
\begin{equation*}
\reg_T(u)>h^*(-u)+\E_Z[h(\sqrt{T}Z)]-c.
\end{equation*}
\end{corollary}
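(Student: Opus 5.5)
The plan is to convert Theorem~\ref{theorem:loss_lower} into a regret statement through the loss-regret duality, in the same way the paper passes from loss bounds to regret bounds. Fix an algorithm and a convex $1$-Lipschitz $h$, and let $c$ be the constant from Theorem~\ref{theorem:loss_lower}. That theorem gives a Boolean adversary satisfying
\begin{equation*}
\loss_T> -h\rpar{\sum_{t=1}^Tg_t}+\E_{Z}[h(\sqrt{T}Z)]-c.
\end{equation*}
Write $S=\sum_{t=1}^Tg_t$. We will produce a comparator $u\in[-1,1]$ with $\reg_T(u)=\loss_T-Su>h^*(-u)+\E_Z[h(\sqrt{T}Z)]-c$.

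The key step is the Fenchel equality for $h$ at the point $S$. Because $h$ is convex and $1$-Lipschitz, the subdifferential $\partial h(S)$ is nonempty and lies in $[-1,1]$. Choose $v\in\partial h(S)$ and set $u=-v\in[-1,1]$. Then
\begin{equation*}
h(S)+h^*(v)=Sv,
\end{equation*}
which can be rewritten as $-h(S)=h^*(-u)+Su$.

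Substituting this into the loss lower bound gives
\begin{equation*}
\loss_T> h^*(-u)+Su+\E_Z[h(\sqrt{T}Z)]-c.
\end{equation*}
Subtracting $Su=\sum_t g_tu$ from both sides yields
\begin{equation*}
\reg_T(u)>h^*(-u)+\E_Z[h(\sqrt{T}Z)]-c,
\end{equation*}
which is the claim, with the same absolute constant $c$.

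I expect no real obstacle here. The only point that needs care is the comparator domain. The Lipschitz property forces $h^*=\infty$ outside $[-1,1]$ and keeps $\partial h(S)\subseteq[-1,1]$, so the chosen $u$ is a valid comparator. Convexity is what guarantees a subgradient exists and that Fenchel's equality holds exactly. Alternatively, one could invoke Lemma~\ref{lemma:duality} directly: a regret bound $\reg_T(u)\le h^*(-u)+\E_Z[h(\sqrt T Z)]-c$ for all $u$ would imply the loss bound $\loss_T\le -h(S)+\E_Z[h(\sqrt T Z)]-c$, which Theorem~\ref{theorem:loss_lower} rules out.
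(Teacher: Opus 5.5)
Your proof is correct and follows the paper's route, which derives this corollary from Theorem~\ref{theorem:loss_lower} via the loss-regret duality (Lemma~\ref{lemma:duality}). Your choice $u=-v$ with $v\in\partial h(S)$ is exactly the comparator attaining the infimum in that duality, and it also ensures $h^*(-u)<\infty$, so the bound is never vacuous even when $h^*$ is infinite on parts of $[-1,1]$.
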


\subsection{Special Cases}\label{subsection:omitted_proofs_cases}

Below we restate and prove the specializations of Corollary~\ref{corollary:regret} in Section~\ref{section:dominate_ogd}, Appendix~\ref{section:dominate_mwu} and Appendix~\ref{section:soft_thresholded}. 

\absoluteProp*

\begin{proof}[Proof of Corollary~\ref{corollary:regret_absolute}]
Starting from Corollary~\ref{corollary:regret}, it can be verified that $h^*(-u)=-h(0)=0$ for all $u\in[-1,1]$, and $\E_Z[h(\sqrt{T}Z)]=\sqrt{T}\E_Z[|Z|]=\sqrt{\frac{2}{\pi}T}$.
\end{proof}

\huberProp*

\begin{proof}[Proof of Corollary~\ref{corollary:regret_huber}]
Starting from Corollary~\ref{corollary:regret},
\begin{equation*}
h^*(-u)=-ux-h(x)\Big|_{x=-\frac{u}{\eta}}=\frac{u^2}{2\eta}.
\end{equation*}

For all $v>0$, 
\begin{align*}
\E_Z[h(vZ)]&=\int_{-\infty}^{-(v\eta)^{-1}}\rpar{-vz-\frac{1}{2\eta}}\phi(z)\diff z+\frac{\eta}{2}\int_{-(v\eta)^{-1}}^{(v\eta)^{-1}}v^2z^2\phi(z)\diff z+\int_{(v\eta)^{-1}}^\infty\rpar{vz-\frac{1}{2\eta}}\phi(z)\diff z\\
&=2v\phi\rpar{\frac{1}{v\eta}}-\frac{1}{\eta}\Phi\rpar{-\frac{1}{v\eta}}+\frac{\eta}{2}v^2\spar{\Phi(z)-z\phi(z)}\bigg|_{z=-(v\eta)^{-1}}^{(v\eta)^{-1}}\\
&=v^2\eta\spar{\Phi\rpar{\frac{1}{v\eta}}-\half}+v\phi\rpar{\frac{1}{v\eta}}-\frac{1}{\eta}\spar{1-\Phi\rpar{\frac{1}{v\eta}}}.
\end{align*}
Therefore with $v\leftarrow\sqrt{T}$ and $\eta=\frac{\alpha}{\sqrt{T}}$, 
\begin{align*}
\reg_T(u)&\leq \frac{u^2}{2\eta}+\eta T\spar{\Phi\rpar{\frac{1}{\eta\sqrt{T}}}-\half}+\sqrt{T}\phi\rpar{\frac{1}{\eta\sqrt{T}}}-\frac{1}{\eta}\spar{1-\Phi\rpar{\frac{1}{\eta\sqrt{T}}}}+O(\log T)\\
&=\spar{\frac{u^2}{2\alpha}+\rpar{\alpha+\frac{1}{\alpha}} \Phi\rpar{\frac{1}{\alpha}}+\phi\rpar{\frac{1}{\alpha}}-\half\alpha-\frac{1}{\alpha}}\sqrt{T}+O(\log T).\qedhere
\end{align*}
\end{proof}

\lseProp*

\begin{proof}[Proof of Corollary~\ref{corollary:regret_logcosh}]
Starting from Corollary~\ref{corollary:regret}, we first consider $u\in(-1,1)$. 
\begin{align*}
h^*(-u)&=-ux-\frac{1}{\eta}\ln(\cosh(\eta x))\Big|_{x=\frac{1}{\eta}\tanh^{-1}(-u)}\\
&=\frac{1}{\eta}\spar{-u\tanh^{-1}(-u)-\ln\frac{1}{\sqrt{1-u^2}}}\tag{$\cosh(\tanh^{-1}(-u))=\frac{1}{\sqrt{1-u^2}}$}\\
&=\frac{1}{2\eta}\spar{- u\ln\frac{1-u}{1+u}+\ln(1-u^2)}\tag{$\tanh^{-1}(-u)=\half\ln\frac{1-u}{1+u}$}\\
&=\frac{1}{2\eta}\spar{(1+u)\ln(1+u)+(1-u)\ln(1-u)}.
\end{align*}
The case of $u=\pm 1$ follows from $\lim_{x\rightarrow 0_+}x\ln x=0$. Further plugging the definition of $h$ into $\E_Z[h(\sqrt{T}Z)]$ completes the proof. 
\end{proof}

\softProp*

\begin{proof}[Proof of Corollary~\ref{corollary:regret_soft}]
Starting from Corollary~\ref{corollary:regret},
\begin{equation*}
h^*(-u)=\sup_{x\in\R}-ux-\max\{\abs{x}-\eta^{-1},0\}=-ux\Big|_{x=-\eta^{-1}\sign(u)}=\eta^{-1}\abs{u}.
\end{equation*}

For all $v>0$, 
\begin{align*}
\E_Z[h(vZ)]&=\int_{-\infty}^{-(v\eta)^{-1}}\rpar{-vz-\frac{1}{\eta}}\phi(z)\diff z+\int_{(v\eta)^{-1}}^\infty\rpar{vz-\frac{1}{\eta}}\phi(z)\diff z\\
&=2v\phi\rpar{\frac{1}{v\eta}}-\frac{2}{\eta}\spar{1-\Phi\rpar{\frac{1}{v\eta}}}.
\end{align*}
Substituting $v\leftarrow\sqrt{T}$ and $\eta=\frac{\alpha}{\sqrt{T}}$ completes the proof.
\end{proof}

\subsection{Proofs for the Two-Point Tradeoff}\label{subsection:omitted_two_point}

Below we restate and prove the results from Appendix~\ref{section:soft_thresholded}. 

\equivalence*

\begin{proof}[Proof of Lemma~\ref{lemma:regret_equivalence}]
Let us drop $\mathrm{alg}$ and $\mathrm{adv}$ from the notations. By definition, $\loss_T=\reg_T(0)$ and $\reg_T^\unif=\max_{u\in\{-1,1\}}\reg_T(u)$, therefore the second condition in the lemma trivially implies the first one. Now consider going from the first to the second:
\begin{align*}
\reg_T(u)&=\sum_{t=1}^Tg_t(x_t-u)\\
&=(1-\abs{u})\sum_{t=1}^Tg_tx_t+\abs{u}\sum_{t=1}^Tg_t(x_t-\sign(u))\\
&\leq (1-\abs{u})a+\abs{u}b.\qedhere
\end{align*}
\end{proof}

\twopoint*

\begin{proof}[Proof of Theorem~\ref{theorem:gamma}]
Throughout this proof, let $c>0$ be a sufficiently large absolute constant. We only prove matching upper and lower bounds on $\gamma(\eps)$, since the corresponding algorithm can be simply obtained from Corollary~\ref{corollary:regret_soft}.

An upper bound on $\gamma(\eps)$ is immediate: with the hyperparameter $\alpha$ satisfying $\gamma_{\mathrm{STh}}(0,\alpha)\sqrt{T}+c\log T=\eps\sqrt{T}$, applying Corollary~\ref{corollary:regret_soft} yields
\begin{align*}
\gamma(\eps)&=\limsup_{T\rightarrow\infty}\frac{1}{\sqrt{T}}\inf_{\mathrm{alg}}\left\{\sup_{\mathrm{adv}}\reg_T^\unif(\mathrm{alg},\mathrm{adv});\sup_{\mathrm{adv}}\loss_T(\mathrm{alg},\mathrm{adv})\leq\eps\sqrt{T}\right\}\\
&\leq\lim_{T\rightarrow\infty}\rpar{\gamma_{\mathrm{STh}}(1,\alpha)+c\frac{\log T}{\sqrt{T}}}\Big|_{\gamma_{\mathrm{STh}}(0,\alpha)=\eps-cT^{-1/2}\log T}\\
&=\gamma_{\mathrm{STh}}(1,\alpha)\Big|_{\gamma_{\mathrm{STh}}(0,\alpha)=\eps}.
\end{align*}

As for the lower bound, applying Corollary~\ref{corollary:regret_lower} on the $h$ function from Corollary~\ref{corollary:regret_soft} shows that for any algorithm $\mathrm{alg}$, there exists a Boolean adversary $\mathrm{adv}$ and a comparator $u\in[-1,1]$ such that
\begin{equation*}
\reg_T(u,\mathrm{alg},\mathrm{adv})>\gamma_{\mathrm{STh}}(u,\alpha)\sqrt{T}-c.
\end{equation*}
By Lemma~\ref{lemma:regret_equivalence}, this is equivalent to the violation of either
\begin{equation*}
\sup_{\mathrm{adv}}\loss_T(\mathrm{alg},\mathrm{adv})\leq \gamma_{\mathrm{STh}}(0,\alpha)\sqrt{T}-c
\end{equation*}
or
\begin{equation*}
\sup_{\mathrm{adv}}\reg_T^\unif(\mathrm{alg},\mathrm{adv})\leq \gamma_{\mathrm{STh}}(1,\alpha)\sqrt{T}-c.
\end{equation*}
Therefore, 
\begin{align*}
\gamma(\eps)&=\limsup_{T\rightarrow\infty}\frac{1}{\sqrt{T}}\inf_{\mathrm{alg}}\left\{\sup_{\mathrm{adv}}\reg_T^\unif(\mathrm{alg},\mathrm{adv});\sup_{\mathrm{adv}}\loss_T(\mathrm{alg},\mathrm{adv})\leq\eps\sqrt{T}\right\}\\
&\geq\lim_{T\rightarrow\infty}\rpar{\gamma_{\mathrm{STh}}(1,\alpha)-\frac{c}{\sqrt{T}}}\Big|_{\gamma_{\mathrm{STh}}(0,\alpha)=\eps+cT^{-1/2}}\\
&=\gamma_{\mathrm{STh}}(1,\alpha)\Big|_{\gamma_{\mathrm{STh}}(0,\alpha)=\eps}.
\end{align*}

Combining the upper and lower bounds, we have $\gamma(\eps)=\gamma_{\mathrm{STh}}(1,\alpha)\big|_{\gamma_{\mathrm{STh}}(0,\alpha)=\eps}$. Finally applying Lemma~\ref{lemma:gamma_two_point} completes the proof. 
\end{proof}

\subsection{Proofs of Technical Lemmas}\label{subsection:omitted_technical}

Our analysis uses several technical lemmas proved below. 

\begin{lemma}\label{lemma:expectation_lse}
For all $\alpha\in\R_{>0}$, let $f(\alpha)\defeq\E_{Z}\spar{\alpha^{-1}\ln(\cosh(\alpha Z))}$. Then, $f(\alpha)<\half\alpha$ and $\lim_{\alpha\rightarrow\infty}f(\alpha)=\sqrt{\frac{2}{\pi}}$. 
\end{lemma}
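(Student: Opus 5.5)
The plan has two parts: first the strict bound $f(\alpha)<\half\alpha$, then the limit.

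For the upper bound, I would use the pointwise inequality $\ln\cosh(y)\leq \half y^2$ for all $y\in\R$, with equality only at $y=0$. To verify it, let $k(y)=\half y^2-\ln\cosh y$. Then $k(0)=0$ and $k'(y)=y-\tanh y$, which has the sign of $y$ because $\abs{\tanh y}<\abs{y}$ for $y\neq0$. Hence $k$ is minimized at $0$, and $k>0$ away from $0$. Substituting $y=\alpha Z$ and dividing by $\alpha$ gives
\begin{equation*}
\alpha^{-1}\ln\cosh(\alpha Z)\leq \half\alpha Z^2,
\end{equation*}
with strict inequality almost surely, since $Z\neq 0$ a.s. Taking expectations and using $\E[Z^2]=1$ yields $f(\alpha)<\half\alpha$. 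Strictness holds because the two sides differ on a set of positive probability, and both are integrable.

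For the limit, I would write $\ln\cosh y=\abs{y}-\ln 2+\ln(1+e^{-2\abs{y}})$. This gives
\begin{equation*}
f(\alpha)=\E\abs{Z}-\frac{\ln 2}{\alpha}+\frac{1}{\alpha}\E\spar{\ln\rpar{1+e^{-2\alpha\abs{Z}}}}.
\end{equation*}
The first term equals $\sqrt{2/\pi}$, and the second term tends to $0$. For the third, note that $0\leq\ln(1+e^{-2\alpha\abs{Z}})\leq\ln 2$. So the third term is at most $\ln 2/\alpha$ and also tends to $0$. Therefore $\lim_{\alpha\to\infty}f(\alpha)=\sqrt{2/\pi}$.

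The only mildly delicate step is justifying strictness in the first part, which the almost-sure strict pointwise inequality handles. The rest is elementary.
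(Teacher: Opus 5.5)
Your proposal is correct and follows essentially the same route as the paper: the pointwise bound $\cosh(y)<\exp(y^2/2)$ for $y\neq 0$ gives the strict inequality, and the decomposition $\alpha^{-1}\ln\cosh(\alpha x)=\abs{x}+\alpha^{-1}\ln\frac{1+e^{-2\alpha\abs{x}}}{2}$ gives the limit. The only small differences are that you verify the $\cosh$ inequality yourself, and you replace the paper's dominated convergence step with the uniform bound $\ln 2/\alpha$ on the correction term, which is slightly simpler and also yields an explicit $O(1/\alpha)$ rate.
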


\begin{proof}[Proof of Lemma~\ref{lemma:expectation_lse}]
Starting from the first part of the lemma, applying the fact of $\cosh(x)<\exp\rpar{\frac{x^2}{2}}$ for all $x\in\R\backslash\{0\}$ gives
\begin{equation*}
f(\alpha)<\E_{Z}\spar{\alpha^{-1}\ln\rpar{\exp\rpar{\half\alpha^2 Z^2}}}=\half\alpha\E_Z[Z^2]=\half\alpha.
\end{equation*}

Next, for all $x\in\R$,
\begin{equation*}
\alpha^{-1}\ln(\cosh(\alpha x))=\abs{x}+\alpha^{-1}\ln\frac{1+\exp(-2\alpha\abs{x})}{2}. 
\end{equation*}
Denote the second term on the RHS as $g(\alpha,x)$, which satisfies the pointwise convergence $\lim_{\alpha\rightarrow\infty}g(\alpha,x)=0$. Furthermore, for all $\alpha\geq 1$ we have $\abs{g(\alpha,x)}\leq\alpha^{-1}\ln 2\leq \ln 2$, therefore by the dominated convergence theorem, $\lim_{\alpha\rightarrow\infty}\E_Z[g(\alpha,Z)]=0$. This leads to
\begin{equation*}
\lim_{\alpha\rightarrow\infty}f(\alpha)=\E_Z[\abs{Z}]=\sqrt{\frac{2}{\pi}}.\qedhere
\end{equation*}
\end{proof}

\begin{lemma}\label{lemma:gamma_two_point}
For the quantity $\gamma_{\mathrm{STh}}(u,\alpha)$ defined in Corollary~\ref{corollary:regret_soft}, we have
\begin{equation*}
\int_{-\infty}^{\gamma_{\mathrm{STh}}(0,\alpha)-\gamma_{\mathrm{STh}}(1,\alpha)}\Phi(x)\diff x=\frac{1}{2}\gamma_{\mathrm{STh}}(0,\alpha).
\end{equation*}
\end{lemma}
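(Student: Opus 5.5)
We prove the identity by direct computation from the closed form of $\gamma_{\mathrm{STh}}(u,\alpha)$ in Corollary~\ref{corollary:regret_soft}. Write $\beta\defeq 1/\alpha>0$. Then
\begin{equation*}
\gamma_{\mathrm{STh}}(0,\alpha)=2\beta\Phi(\beta)+2\phi(\beta)-2\beta,\qquad \gamma_{\mathrm{STh}}(1,\alpha)-\gamma_{\mathrm{STh}}(0,\alpha)=\frac{1}{\alpha}=\beta.
\end{equation*}
The upper limit of the integral is therefore simply $-\beta$, and the claim reduces to showing
\begin{equation*}
\int_{-\infty}^{-\beta}\Phi(x)\diff x=\beta\Phi(\beta)+\phi(\beta)-\beta.
\end{equation*}

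We compute the left-hand side via the antiderivative of $\Phi$. Since $\phi'(x)=-x\phi(x)$, we have
\begin{equation*}
\frac{\diff}{\diff x}\spar{x\Phi(x)+\phi(x)}=\Phi(x)+x\phi(x)-x\phi(x)=\Phi(x).
\end{equation*}
As $x\to-\infty$, the Gaussian tail gives $x\Phi(x)\to 0$ and $\phi(x)\to0$ (for instance by the Mills ratio bound, Lemma~\ref{lemma:mills}, or by L'H\^opital's rule). Hence
\begin{equation*}
\int_{-\infty}^{-\beta}\Phi(x)\diff x=-\beta\Phi(-\beta)+\phi(-\beta).
\end{equation*}

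To finish, use $\Phi(-\beta)=1-\Phi(\beta)$ and $\phi(-\beta)=\phi(\beta)$:
\begin{equation*}
-\beta\Phi(-\beta)+\phi(-\beta)=-\beta+\beta\Phi(\beta)+\phi(\beta),
\end{equation*}
which matches the right-hand side. There is no real obstacle; the only point needing care is the boundary term at $-\infty$, which is handled by the Gaussian tail decay noted above.
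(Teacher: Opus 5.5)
Your proposal is correct and follows the paper's proof essentially verbatim: note that $\gamma_{\mathrm{STh}}(0,\alpha)-\gamma_{\mathrm{STh}}(1,\alpha)=-\alpha^{-1}$, use the antiderivative $x\Phi(x)+\phi(x)$ of $\Phi$, and finish with the symmetries $\Phi(-\beta)=1-\Phi(\beta)$ and $\phi(-\beta)=\phi(\beta)$. Your explicit handling of the boundary term at $-\infty$ (which the paper leaves as ``simply verified'') is a welcome bit of extra care.
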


\begin{proof}[Proof of Lemma~\ref{lemma:gamma_two_point}]
Recall that
\begin{equation*}
\gamma_{\mathrm{STh}}(u,\alpha)=\frac{\abs{u}}{\alpha}+\frac{2}{\alpha}\Phi\rpar{\frac{1}{\alpha}}+2\phi\rpar{\frac{1}{\alpha}}-\frac{2}{\alpha},
\end{equation*}
therefore $\gamma_{\mathrm{STh}}(0,\alpha)-\gamma_{\mathrm{STh}}(1,\alpha)=-\alpha^{-1}$. It can be simply verified that for all $x\in\R$, $\int_{-\infty}^x\Phi(z)\diff z=x\Phi(x)+\phi(x)$. Therefore
\begin{equation*}
\int_{-\infty}^{-\alpha^{-1}}\Phi(x)\diff x=-\frac{1}{\alpha}\Phi\rpar{-\frac{1}{\alpha}}+\phi\rpar{-\frac{1}{\alpha}}=-\frac{1}{\alpha}\spar{1-\Phi\rpar{\frac{1}{\alpha}}}+\phi\rpar{\frac{1}{\alpha}}=\frac{1}{2}\gamma_{\mathrm{STh}}(0,\alpha).\qedhere
\end{equation*}
\end{proof}

\begin{lemma}\label{lemma:gap_soft}
For all $\eps\in(0,\sqrt{\frac{2}{\pi}}]$, $\gamma(\eps)$ defined in Eq.\eqref{eq:gamma} satisfies
\begin{equation*}
\gamma(\eps)-\eps<\sqrt{2}\cdot\erfi^{-1}\rpar{\frac{\sqrt{2}}{\sqrt{\pi}\eps}}.
\end{equation*}
\end{lemma}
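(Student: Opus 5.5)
Write $d\defeq\gamma(\eps)-\eps$. By Theorem~\ref{theorem:gamma} and the identity $\int_{-\infty}^x\Phi(z)\diff z=x\Phi(x)+\phi(x)$ used in the proof of Lemma~\ref{lemma:gamma_two_point}, $d$ is characterized by
\begin{equation*}
\frac{\eps}{2}=\phi(d)-d\spar{1-\Phi(d)}=\int_d^\infty\spar{1-\Phi(t)}\diff t\eqdef R(d).
\end{equation*}
Since $R$ is strictly decreasing from $R(0)=\phi(0)=\frac{1}{\sqrt{2\pi}}$, the range $\eps\in(0,\sqrt{2/\pi}]$ corresponds exactly to $d\in[0,\infty)$. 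The target inequality is then reduced to a statement about $d$ alone. Since $\erfi$ is strictly increasing, the claim $d<\sqrt{2}\,\erfi^{-1}\rpar{\frac{\sqrt2}{\sqrt\pi\eps}}$ is equivalent to $\erfi(d/\sqrt2)<\frac{\sqrt2}{\sqrt\pi\eps}$. The substitution $z=s/\sqrt2$ gives $\erfi(d/\sqrt2)=\sqrt{\frac{2}{\pi}}\int_0^d e^{s^2/2}\diff s$. Hence it suffices to prove
\begin{equation*}
H(d)\defeq 2R(d)\int_0^d e^{s^2/2}\diff s<1,\quad\forall d\geq 0.
\end{equation*}

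The endpoint behaviour is favourable. We have $H(0)=0$, which covers $\eps=\sqrt{2/\pi}$ as noted in the text. As $d\to\infty$, $R(d)\sim\phi(d)/d^2$ and $\int_0^d e^{s^2/2}\diff s\sim e^{d^2/2}/d$, so $H(d)\sim \frac{2}{\sqrt{2\pi}\,d^3}\to 0$. The problem is therefore to bound the maximum of $H$ by $1$.

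I would rewrite both factors in Mills-ratio form. Let $M(t)\defeq(1-\Phi(t))/\phi(t)$, which is decreasing; this follows from the standard Mills ratio estimates (Lemma~\ref{lemma:mills}). Then $1-\Phi(t)\leq M(d)\phi(t)$ for $t\geq d$, so $R(d)\leq M(d)\spar{1-\Phi(d)}=M(d)^2\phi(d)$. Using $e^{s^2/2}=\frac{1}{\sqrt{2\pi}\,\phi(s)}$, we obtain
\begin{equation*}
H(d)\leq \frac{2M(d)^2}{\sqrt{2\pi}}\int_0^d\frac{\phi(d)}{\phi(s)}\diff s=\frac{2M(d)^2}{\sqrt{2\pi}}\int_0^d e^{-(d^2-s^2)/2}\diff s.
\end{equation*}
The last integral is at most $\min\{d,\,1/d\}$, because $d^2-s^2\geq d(d-s)$ for $s\leq d$. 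I would then split into two regimes, both using $M(d)\leq M(0)=\sqrt{\pi/2}$ together with $M(d)\leq 1/d$.
\begin{itemize}
\item For $d\leq 1$, the bound becomes $\frac{2}{\sqrt{2\pi}}\cdot\frac{\pi}{2}\cdot d\leq\sqrt{\pi/2}\approx1.25$. This is not yet enough, so in this regime I would instead bound $H$ directly: use $R(d)\leq\phi(0)-d/2+O(d^2)$ and $\int_0^de^{s^2/2}\leq d\,e^{d^2/2}$. Alternatively, I would use the sharper Mills bound $M(d)\leq\frac{2}{d+\sqrt{d^2+8/\pi}}$.
\item For $d\geq1$, the bound becomes $\frac{2}{\sqrt{2\pi}}M(d)^2/d\leq\frac{2}{\sqrt{2\pi}}<1$.
\end{itemize}

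The main obstacle is the intermediate regime $d\in[0,1]$, where the crude estimates lose constant factors and $H$ is genuinely of order $1/2$. There I would first try the sharper Mills-type bound (the Sampford/Birnbaum form of Lemma~\ref{lemma:mills}) inside $R(d)\leq M(d)^2\phi(d)$, combined with the exact value of $\int_0^de^{-(d^2-s^2)/2}\diff s$ (a Dawson function, bounded by its maximum $\approx0.541$). The resulting bound is $\frac{2}{\sqrt{2\pi}}\cdot\frac{\pi}{2}\cdot0.541\approx0.68<1$, uniformly in $d$. If that bound is admissible, it settles all $d\geq0$ at once and makes the case split unnecessary.
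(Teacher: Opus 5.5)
Your reduction to $H(d)=2R(d)\int_0^d e^{s^2/2}\,ds<1$ for all $d\ge 0$ is correct. It is the inequality the paper reduces to, written in the variable $d=\sqrt2 z$; your version tracks the constant correctly (see the end). The estimates that follow contain two errors.

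First, $d^2-s^2\ge d(d-s)$ gives $\int_0^d e^{-(d^2-s^2)/2}\,ds\le \frac{2}{d}\bigl(1-e^{-d^2/2}\bigr)$, not $\frac1d$. In fact this integral equals $\sqrt2\,D(d/\sqrt2)$, where $D(x)=e^{-x^2}\int_0^x e^{y^2}\,dy$ is Dawson's function. It behaves like $\frac1d+\frac1{d^3}$, so $\min\{d,1/d\}$ is false for all large $d$; at $d=2$ the integral is about $0.64>\frac12$. With $2/d$ in place of $1/d$, your $d\ge1$ branch reads $H(d)\le\frac{4}{\sqrt{2\pi}\,d^3}\approx 1.6$ at $d=1$. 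So that branch fails as written, though it can be repaired by keeping the factor $1-e^{-d^2/2}$. The $d\le 1$ branch is only sketched.

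Second, for the same reason the maximum of the integral is $\sqrt2\max D\approx 0.765$, not $0.541$. Your uniform bound is therefore $\frac{2}{\sqrt{2\pi}}\cdot\frac{\pi}{2}\cdot 0.765=\sqrt\pi\,\max D\approx 0.96$, not $0.68$. This is still below $1$, so the uniform argument survives, but only barely. It needs the numerical fact $\max D<1/\sqrt\pi\approx 0.564$ (true value $0.541$). Elementary estimates such as $D(x)\le(1-e^{-x^2})/x\le 0.64$ do not suffice.

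The thin margin comes from your Mills-ratio step. The bound $R(d)\le M(d)^2\phi(d)$ is weaker than the trivial $R(d)=\phi(d)-d\,(1-\Phi(d))\le\phi(d)$ whenever $M(d)>1$. At $d=0$ it loses exactly the factor $M(0)^2=\pi/2$, which is the difference between roughly $0.61$ and $0.96$ in the decoupled bound. The paper's route is to bound $\int_{-\infty}^{-d}\Phi$ directly by a multiple of $\phi(d)$ and then control Dawson's function. Combining the factor-one bound $R\le\phi$ with your own inequality $d^2-s^2\ge d(d-s)$ gives
\[
H(d)\le\sqrt{\frac{2}{\pi}}\int_0^d e^{-(d^2-s^2)/2}\,ds\le\sqrt{\frac{2}{\pi}}\cdot\frac{2}{d}\bigl(1-e^{-d^2/2}\bigr)\le\sqrt{\frac{2}{\pi}}\cdot 0.91<1 .
\]
The last step uses only the elementary fact $\frac{2}{d}(1-e^{-d^2/2})\le 0.91$ for $d>0$; all one actually needs is $<\sqrt{\pi/2}$. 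This route needs no case split, no Mills-ratio input, and no numerical value of $\max D$.

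Keep your normalization $H<1$ when comparing with the paper. In the paper's variable the target should read $\erfi(z)\int_{-\infty}^{-\sqrt2 z}\Phi(x)\,dx<1/\sqrt{2\pi}$ rather than $<\sqrt{2/\pi}$, and $2\erfi(z)\phi(\sqrt2 z)=\frac{2\sqrt2}{\pi}D(z)$. With these constants the paper's chain closes only if one uses $\phi$ rather than $2\phi$, together with a Dawson bound below $\sqrt\pi/2$, such as the $0.64$ above.
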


\begin{proof}[Proof of Lemma~\ref{lemma:gap_soft}]
It suffices to prove
\begin{equation*}
\int_{-\infty}^{\eps-\gamma(\eps)}\Phi(x)\diff x>\int_{-\infty}^{-\sqrt{2}\cdot\erfi^{-1}\rpar{\frac{\sqrt{2}}{\sqrt{\pi}\eps}}}\Phi(x)\diff x,
\end{equation*}
where $\lhs=\frac{\eps}{2}$ by Theorem~\ref{theorem:gamma}. By a change of variable $z=\erfi^{-1}\rpar{\frac{\sqrt{2}}{\sqrt{\pi}\eps}}$, this is further equivalent to proving
\begin{equation}\label{eq:gap_soft_partial}
\erfi(z)\int_{-\infty}^{-\sqrt{2}z}\Phi(x)\diff x<\sqrt{\frac{2}{\pi}},\quad\forall z\geq \erfi^{-1}(1).
\end{equation}

To this end, notice that for all $x<0$ we have
\begin{align*}
\int_{-\infty}^x\Phi(z)\diff z&=x\Phi(x)+\phi(x)\\
&=x[1-\Phi(-x)]+\phi(-x)\\
&<2\phi(-x). \tag{Mills ratio bound; Lemma~\ref{lemma:mills}}
\end{align*}
Therefore the LHS of Eq.\eqref{eq:gap_soft_partial} is upper-bounded by
\begin{equation*}
\erfi(z)\int_{-\infty}^{-\sqrt{2}z}\Phi(x)\diff x<2\erfi(z)\phi(\sqrt{2}z)=\sqrt{\frac{2}{\pi}}\exp(-z^2)\int_{0}^z\exp(x^2)\diff x\leq \sqrt{\frac{2}{\pi}}.\qedhere
\end{equation*}
\end{proof}

\subsection{Summary of Existing Lemmas}\label{subsection:omitted_existing}

This subsection summarizes several known results from the literature, whose proofs are omitted. 

The first one is a formal version of the loss-regret duality \citep[Theorem~10.6]{orabona2025modern} used throughout our analysis. Notice that if the function $\psi$ has a convex and compact domain, then the following lemma can still be applied by assigning $\psi$ as $\infty$ outside its domain, which is standard.  

\begin{lemma}[Loss-regret duality]\label{lemma:duality}
Let $\psi:\R\rightarrow(-\infty,\infty]$ be a proper, closed and convex function, and let $\psi^*$ be its convex conjugate. For any two sequences $x_1,\ldots,x_T\in\R$ and $g_1,\ldots,g_T\in\R$, we have
\begin{equation*}
\sum_{t=1}^Tg_tx_t\leq -\psi^*\rpar{-\sum_{t=1}^Tg_t}
\end{equation*}
if and only if
\begin{equation*}
\sum_{t=1}^Tg_t(x_t-u)\leq \psi(u),\quad\forall u\in\R. 
\end{equation*}
\end{lemma}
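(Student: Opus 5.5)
The statement is the standard loss-regret duality (Lemma~\ref{lemma:duality}). I would prove both directions directly from the Fenchel--Moreau theorem. Write $S\defeq\sum_{t=1}^Tg_t$ and $L\defeq\sum_{t=1}^Tg_tx_t$. Both conditions compare $L$ against linear functions of the comparator.

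For the implication from the loss bound to the regret bound, fix any $u\in\R$. If $\psi(u)=\infty$ there is nothing to prove. Otherwise, the definition of the conjugate, $\psi^*(y)=\sup_{v}[yv-\psi(v)]$, evaluated at $y=-S$ gives $\psi^*(-S)\geq -Su-\psi(u)$. This is the Fenchel--Young inequality. Hence
\begin{equation*}
L\leq-\psi^*(-S)\leq Su+\psi(u),
\end{equation*}
which rearranges to $\sum_t g_t(x_t-u)=L-Su\leq\psi(u)$. No closedness is needed in this direction.

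For the converse, assume $L-Su\leq\psi(u)$ for all $u\in\R$. Then $L\leq\inf_u[Su+\psi(u)]$. Writing the infimum as a supremum gives
\begin{equation*}
\inf_u[Su+\psi(u)]=-\sup_u[(-S)u-\psi(u)]=-\psi^*(-S),
\end{equation*}
which is the desired loss bound. This step is again just the definition of $\psi^*$.

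So, perhaps surprisingly, neither direction uses $(\psi^*)^*=\psi$. The main subtlety is bookkeeping with extended values. Because $\psi$ is proper, $\psi^*$ never takes the value $-\infty$. The value $\psi^*(-S)$ may be $+\infty$, in which case the loss bound reads $L\leq-\infty$ and both conditions are false: the regret condition fails because the supremum defining $\psi^*(-S)$ is infinite. So the equivalence still holds.

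The hypotheses ``closed and convex'' matter only for the symmetric interpretation in the paper. There one applies the statement with the roles of $\psi$ and $\psi^*$ swapped, starting from a convex $1$-Lipschitz $\psi^*_T$. That requires $\psi^{**}=\psi$, which is where I would invoke Fenchel--Moreau. I would note this in a remark rather than in the proof itself.
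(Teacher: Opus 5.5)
Your proof is correct. The paper itself gives no argument here. It defers to \citep[Theorem~10.6]{orabona2025modern}, where, as far as I recall, the duality is stated the other way round: a given closed convex $F$ sits on the loss (reward) side and $F^*$ on the regret side. In that orientation one implication is Fenchel--Young and the other requires $F^{**}=F$. Recovering the statement as written from that source means taking $F=\psi^*$ and identifying $F^*=\psi^{**}=\psi$. That identification is the only place where closedness and convexity of $\psi$ enter.

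Your route instead unwinds the definition of $\psi^*(-S)$ as a supremum, so each direction is one line. As you note, the hypotheses are superfluous in this orientation. Even properness could be dropped: if $\psi\equiv+\infty$ then $\psi^*\equiv-\infty$, and both conditions hold trivially.

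What the Fenchel--Moreau formulation buys is the direction the paper actually uses. In Eq.~\eqref{eq:loss_bound} and Corollary~\ref{corollary:regret} one starts from a loss-bound function such as $\bar\psi^*_T$ and needs some $\psi$ with $\psi^*=\bar\psi^*_T$, namely $\psi=(\bar\psi^*_T)^*$. This is valid precisely because $\bar\psi^*_T$ is convex and continuous. Your planned closing remark is therefore the right place to invoke the biconjugation theorem.
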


The next lemma is a Wasserstein normal approximation bound for iid Rademacher sums, simplified from \citep[Theorem~3.2]{ross2011fundamentals}. The constant is not optimized. 

\begin{lemma}[Nonasymptotic Wasserstein CLT for iid Rademacher sums]\label{lemma:wasserstein_iid}
Let $X_1,\ldots,X_n$ be iid Rademacher random variables. If $S_n=\sum_{i=1}^nX_i$ and $Z$ has the standard normal distribution, then for any $1$-Lipschitz function $f:\R\rightarrow\R$ we have
\begin{equation*}
\abs{\E[f(S_n)]-\E_Z[f(\sqrt{n}Z)]}\leq 1+\sqrt{\frac{2}{\pi}}.
\end{equation*}
\end{lemma}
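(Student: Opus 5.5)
We would prove the lemma by running the same Stein-equation argument as in the proof of Theorem~\ref{theorem:main}, now in the probabilistic setting with a leave-one-out coupling; this is essentially the argument of \citep[Theorem~3.2]{ross2011fundamentals}. First we rescale. Put $W\defeq S_n/\sqrt{n}$ and $g(x)\defeq f(\sqrt{n}x)/\sqrt{n}$. Then $g$ is $1$-Lipschitz and
\begin{equation*}
\E[f(S_n)]-\E_Z[f(\sqrt{n}Z)]=\sqrt{n}\rpar{\E[g(W)]-\E_Z[g(Z)]}.
\end{equation*}
It therefore suffices to show $\abs{\E[g(W)]-\E_Z[g(Z)]}\leq (1+\sqrt{2/\pi})/\sqrt{n}$ for every $1$-Lipschitz $g$.

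Let $f_g\defeq f_{0,1,g}$ be the bounded solution of the Stein equation Eq.\eqref{eq:stein} with $\mu=0$ and $\sigma=1$. Convexity is not needed here. By Lemma~\ref{lemma:stein_magnitude}, $\norm{f_g'}_\infty\leq\sqrt{2/\pi}$ and $\norm{f_g''}_\infty\leq 2$. Evaluating the Stein equation at $W$ and taking expectations gives
\begin{equation*}
\E[g(W)]-\E_Z[g(Z)]=\E[f_g'(W)-Wf_g(W)].
\end{equation*}

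To handle $\E[Wf_g(W)]$ we use the leave-one-out variables $W_i\defeq W-X_i/\sqrt{n}$, which are independent of $X_i$. Since $\E[X_i]=0$, independence gives $\E[X_if_g(W_i)]=0$, and hence
\begin{equation*}
\E[Wf_g(W)]=\frac{1}{\sqrt{n}}\sum_{i=1}^n\E\spar{X_i\rpar{f_g(W)-f_g(W_i)}}.
\end{equation*}
The Rademacher structure enters through the identity $X_i^2=1$. It lets us write the first term as $\E[f_g'(W)]=\frac{1}{n}\sum_{i=1}^n\E[X_i^2f_g'(W)]$, with no variance-mismatch error. Subtracting the two displays,
\begin{equation*}
\E[f_g'(W)-Wf_g(W)]=\frac{1}{\sqrt{n}}\sum_{i=1}^n\E\spar{X_i\rpar{f_g(W_i)-f_g(W)+\frac{X_i}{\sqrt{n}}f_g'(W)}}.
\end{equation*}

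We then Taylor-expand $f_g(W_i)$ around $W$ with integral remainder. Because $f_g'$ is absolutely continuous, the bracket has absolute value at most $\half\norm{f_g''}_\infty X_i^2/n\leq 1/n$. Together with $\abs{X_i}=1$, each of the $n$ summands is at most $1/n$, so the total error is at most $n\cdot\frac{1}{\sqrt{n}}\cdot\frac{1}{n}=1/\sqrt{n}$. Undoing the scaling gives $\abs{\E[f(S_n)]-\E_Z[f(\sqrt{n}Z)]}\leq 1$, which is stronger than the stated $1+\sqrt{2/\pi}$.

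If we instead first compare $f_g'(W)$ with $f_g'(W_i)$, the Lipschitz bound on $f_g'$ produces an additional term. That is the route behind the looser constant in the statement, and it also suffices. The main technical point is the integrability needed to apply Stein's identity and to justify the expansion, and Lemma~\ref{lemma:stein_magnitude} supplies it.
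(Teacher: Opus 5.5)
Your proof is correct and is essentially the argument of \citep[Theorem~3.2]{ross2011fundamentals}, which the paper cites without proof and specializes via the moments $\E\abs{X_i}^3=\E X_i^4=1$. By using $X_i^2=1$ you make Ross's variance-mismatch term $\E[f_g'(W)(1-\frac{1}{n}\sum_i X_i^2)]$ vanish identically, which is why you get the sharper constant $1$.

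One correction to your closing aside. In Ross's bound the extra $\sqrt{2/\pi}$ is $\norm{f_g'}_\infty\leq\sqrt{2/\pi}$ times the crude Cauchy--Schwarz estimate $\E\abs{1-\frac{1}{n}\sum_i X_i^2}\leq\frac{1}{n}\sqrt{\sum_i\E X_i^4}=\frac{1}{\sqrt{n}}$, rescaled by $\sqrt{n}$. It does not come from a Lipschitz bound on $f_g'$, which would instead bring in the constant $\norm{f_g''}_\infty\leq 2$.
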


The final lemma is a standard estimate of the Mills ratio, due to \citep{gordon1941values}.

\begin{lemma}[Mills ratio bound]\label{lemma:mills}
Consider the Mills ratio defined as $m(x)\defeq\frac{1-\Phi(x)}{\phi(x)}$. For all $x>0$, 
\begin{equation*}
\frac{x}{x^2+1}< m(x)< \frac{1}{x}.
\end{equation*}
\end{lemma}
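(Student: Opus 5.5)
Both inequalities follow from the identity $\phi'(z)=-z\phi(z)$ together with a monotonicity argument on $(0,\infty)$. Throughout, $x>0$ is fixed and $1-\Phi(x)=\int_x^\infty\phi(z)\diff z$.

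\emph{Upper bound.} For $z>x>0$ we have $1<z/x$. Integrating against $\phi$ and using $\phi'(z)=-z\phi(z)$ gives
\begin{equation*}
1-\Phi(x)=\int_x^\infty\phi(z)\diff z<\frac{1}{x}\int_x^\infty z\phi(z)\diff z=\frac{1}{x}\Bigl[-\phi(z)\Bigr]_x^\infty=\frac{\phi(x)}{x}.
\end{equation*}
The inequality is strict because the integrands differ on a set of positive measure. Dividing by $\phi(x)>0$ yields $m(x)<1/x$.

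\emph{Lower bound.} Define the auxiliary function
\begin{equation*}
g(x)\defeq 1-\Phi(x)-\frac{x}{x^2+1}\phi(x),\qquad x\geq 0.
\end{equation*}
The plan is to show that $g$ is strictly decreasing and tends to $0$ at infinity, which forces $g>0$ on $(0,\infty)$; dividing by $\phi(x)$ then gives the claim.

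First, $g(x)\to0$ as $x\to\infty$, since both $1-\Phi(x)$ and $\phi(x)$ vanish. Second, differentiate using $\Phi'=\phi$, $\phi'(x)=-x\phi(x)$ and $\frac{d}{dx}\frac{x}{1+x^2}=\frac{1-x^2}{(1+x^2)^2}$:
\begin{equation*}
g'(x)=\phi(x)\left[-1-\frac{1-x^2}{(1+x^2)^2}+\frac{x^2}{1+x^2}\right]=\phi(x)\cdot\frac{-(1+x^2)-(1-x^2)}{(1+x^2)^2}=-\frac{2\phi(x)}{(1+x^2)^2}<0.
\end{equation*}
So $g$ is strictly decreasing with limit $0$, hence $g(x)>0$ for every finite $x>0$. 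Dividing by $\phi(x)$ gives $m(x)>x/(x^2+1)$.

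The only place requiring care is the algebra in $g'$; the useful feature is that it collapses to a single-signed term. There is no genuine obstacle here.
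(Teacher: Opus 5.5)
Your proof is correct, and it supplies something the paper does not: the paper states this lemma without proof and attributes it to Gordon (1941).

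Both halves check out. The upper bound follows from $z/x>1$ on $(x,\infty)$ together with $\int_x^\infty z\phi(z)\,\mathrm{d}z=\phi(x)$. The lower bound follows from $g'(x)=-2\phi(x)/(1+x^2)^2$, which is equivalent to the representation $g(x)=\int_x^\infty \frac{2\phi(z)}{(1+z^2)^2}\,\mathrm{d}z>0$. This gives exactly the strict inequality the paper needs for $\mathrm{Gap}_{\mathrm{OGD}}(\alpha)>0$: with $x=1/\alpha$, positivity of the gap is precisely $m(x)>x/(x^2+1)$.

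If you want the two halves to look alike, the lower bound also follows from the same comparison trick you used for the upper bound. Since $\frac{\mathrm{d}}{\mathrm{d}z}\bigl(-\phi(z)/z\bigr)=(1+z^{-2})\phi(z)$, one has $\phi(x)/x=\int_x^\infty(1+z^{-2})\phi(z)\,\mathrm{d}z<(1+x^{-2})(1-\Phi(x))$, which rearranges to $m(x)>x/(x^2+1)$. This route avoids having to guess the auxiliary function $g$. Your derivative computation has the compensating advantage of exhibiting the exact size of the gap.
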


\section{Closed Forms of the Output}\label{section:reformulation}

To provide additional intuition and ease the implementation of our algorithm, this section derives the closed forms of Algorithm~\ref{algorithm:main}'s output (as well as the $f_{\mu,\sigma,h}$ function), focusing on the two special cases of $h$ from Section~\ref{section:dominate_ogd}. The goal is to convert the computation of $x_t$ to querying common Gaussian-integral-type special functions. Auxiliary lemmas for the derivation are provided in Appendix~\ref{subsection:auxiliary}. 

\subsection{Absolute Value Function}

Consider the setting of Section~\ref{subsection:absolute}.

\begin{proposition}[Algorithm: absolute value]\label{proposition:absolute}
If $h(x)=\abs{x}$, then in Algorithm~\ref{algorithm:main},
\begin{equation*}
f_{\mu,\sigma,h}(x)=\begin{dcases}
1-\frac{\mu+\E_Z[\abs{\mu+\sigma Z}]}{\sigma^2}\frac{\Phi_{\mu,\sigma}(x)}{\phi_{\mu,\sigma}(x)},&x\leq 0,\\
-1-\frac{\mu-\E_Z[\abs{\mu+\sigma Z}]}{\sigma^2}\frac{1-\Phi_{\mu,\sigma}(x)}{\phi_{\mu,\sigma}(x)},&x>0.
\end{dcases}
\end{equation*}
Furthermore, the output Eq.\eqref{eq:algorithm} is equivalent to
\begin{equation}\label{eq:absolute_integral_formal}
x_{t}=1-\int_0^{1}\frac{1}{\sqrt{\tau}}\Phi\rpar{\frac{s_{t-1}}{\sqrt{\rho^2_{t-1}-\tau c_t}}}\diff\tau,
\end{equation}
which can be further expressed as one of the following three cases: 
\begin{itemize}
\item If $\rho^2_{t-1}>c_t=0$, then
\begin{equation*}
x_{t}
=1-2\Phi\rpar{\frac{s_{t-1}}{\rho_{t-1}}}.
\end{equation*}
\item If $\rho_{t-1}^2=c_t>0$ which is only possible at $t=T$ by the requirement of Algorithm~\ref{algorithm:main}, then
\begin{equation*}
x_t=-\sign(s_{t-1})\spar{1+\sqrt{\frac{\pi}{2}}\frac{\abs{s_{t-1}}}{\rho_{t-1}}-\sqrt{\frac{\pi}{2}}\frac{s_{t-1}}{\rho_{t-1}}\rpar{2\Phi\rpar{\frac{s_{t-1}}{\rho_{t-1}}}-1}-\sqrt{2\pi}\phi\rpar{\frac{s_{t-1}}{\rho_{t-1}}}}.
\end{equation*}
\item If $\rho_{t-1}^2>c_t>0$, then with Owen's T function $T(x,y)\defeq\frac{1}{2\pi}\int_0^y\frac{e^{-\half x^2(1+z^2)}}{1+z^2}\diff z$,
\begin{align}
x_{t}
&=1-2\Phi\rpar{\frac{s_{t-1}}{\sqrt{\rho_{t-1}^2-c_t}}}\nonumber\\
&\quad -\frac{\sqrt{2\pi}\rho_{t-1}}{\sqrt{c_t}}\phi\rpar{\frac{s_{t-1}}{\rho_{t-1}}}\spar{1-2\Phi\rpar{\frac{s_{t-1}}{\rho_{t-1}}\sqrt{\frac{c_t}{\rho_{t-1}^2-c_t}}}}-\frac{2\sqrt{2\pi}s_{t-1}}{\sqrt{c_t}}T\rpar{\frac{s_{t-1}}{\rho_{t-1}},\sqrt{\frac{c_t}{\rho_{t-1}^2-c_t}}}.\label{eq:absolute_general_case}
\end{align}
\end{itemize}
For implementation, both the normal CDF and the Owen's T function can be queried from standard software packages, such as SciPy. 
\end{proposition}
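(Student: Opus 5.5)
We prove Proposition~\ref{proposition:absolute} in three stages: the closed form of $f_{\mu,\sigma,h}$, the integral formula Eq.\eqref{eq:absolute_integral_formal}, and the case-by-case evaluation.

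\textbf{Closed form of $f_{\mu,\sigma,h}$.} We use the standard representation Eq.\eqref{eq:standard_rep} with $h(z)=\abs{z}$, writing $m\defeq\E_Z[\abs{\mu+\sigma Z}]$.

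For $x\leq 0$ we use the left-integral form. There $h(z)=-z$ on $(-\infty,x]$. The Gaussian identity $\int_{-\infty}^x (z-\mu)\phi_{\mu,\sigma}(z)\diff z=-\sigma^2\phi_{\mu,\sigma}(x)$ gives
\begin{equation*}
\int_{-\infty}^x(-z-m)\phi_{\mu,\sigma}(z)\diff z=\sigma^2\phi_{\mu,\sigma}(x)-(\mu+m)\Phi_{\mu,\sigma}(x).
\end{equation*}
Dividing by $\sigma^2\phi_{\mu,\sigma}(x)$ yields the first branch.

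For $x>0$ we use the right-integral form with $h(z)=z$. The symmetric computation gives the second branch.

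\textbf{Integral formula for $x_t$.} We apply Lemma~\ref{lemma:equivalent} (the Beta$(\half,1)$ representation) with $h'(y)=\sign(y)$. This gives $\E_Z[\sign(s+vZ)]=2\Phi(s/v)-1$, where $v=\sqrt{\rho_{t-1}^2-\tau c_t}$. Hence
\begin{equation*}
x_t=-\half\int_0^1\tau^{-1/2}\spar{2\Phi(s/v)-1}\diff\tau .
\end{equation*}
Since $\half\int_0^1\tau^{-1/2}\diff\tau=1$, this is exactly Eq.\eqref{eq:absolute_integral_formal}. The set $\{s+vZ=0\}$ is null, so the a.e.\ definition of $h'$ is harmless. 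When $\rho_{t-1}^2=c_t$, the integrand is singular at $\tau=1$ but remains integrable, so no separate argument is needed.

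\textbf{Case evaluation.} Write $s=s_{t-1}$ and $\rho=\rho_{t-1}$.

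\emph{Case $c_t=0$.} The integrand is constant in $\tau$, and the formula is immediate.

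\emph{Case $\rho^2=c_t$.} It is simplest to bypass the integral. Here $\rho_t=0$, so $x_T=f_{s,\rho,h}(s)$ directly from Eq.\eqref{eq:algorithm}. We plug $x=\mu=s$ into the closed form of $f$. We need $m=s(2\Phi(s/\rho)-1)+2\rho\phi(s/\rho)$, $\Phi_{\mu,\sigma}(\mu)=\half$, and $\phi_{\mu,\sigma}(\mu)=1/(\sqrt{2\pi}\rho)$. The branch is selected by the sign of $s$; for $s\leq0$, say, the first branch gives
\begin{equation*}
1-\sqrt{\tfrac{\pi}{2}}\,\frac{s+m}{\rho}.
\end{equation*}
Rewriting via $\abs{s}$ and $\sign(s)$ unifies both signs into the stated expression. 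This is routine algebra.

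\emph{Case $\rho^2>c_t>0$.} This is the main obstacle. First substitute $u=\sqrt{\tau}$, turning the integral into $2\int_0^1\Phi\bigl(s/\sqrt{\rho^2-u^2c_t}\bigr)\diff u$. Then integrate by parts: the boundary term at $u=1$ produces $2\Phi\bigl(s/\sqrt{\rho^2-c_t}\bigr)$. The remaining integral is
\begin{equation*}
\int_0^1 u\,\phi\bigl(s/\sqrt{\rho^2-u^2c_t}\bigr)\frac{s\,u\,c_t}{(\rho^2-u^2c_t)^{3/2}}\diff u.
\end{equation*}
Next substitute $w=u\sqrt{c_t}/\sqrt{\rho^2-u^2c_t}$, which ranges over $[0,\sqrt{c_t/(\rho^2-c_t)}]$. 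Under this change, $s^2/(\rho^2-u^2c_t)=(s^2/\rho^2)(1+w^2)$. The remainder then becomes a combination of the integrals
\begin{equation*}
\int \frac{e^{-\half a^2(1+w^2)}}{1+w^2}\diff w \quad\text{and}\quad \int e^{-\half a^2(1+w^2)}\diff w, \qquad a=s/\rho .
\end{equation*}
The first is Owen's $T(a,\cdot)$. The second gives $\phi(a)\bigl[2\Phi(a\,w_{\max})-1\bigr]$ times constants, with $w_{\max}=\sqrt{c_t/(\rho^2-c_t)}$. Matching constants should give Eq.\eqref{eq:absolute_general_case}. The delicate part is the bookkeeping of factors of $w$ in the Jacobian (we expect a $w^2/(1+w^2)=1-1/(1+w^2)$ split) together with the constants $\sqrt{2\pi}\rho/\sqrt{c_t}$. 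As a sanity check, the limits $c_t\to0$ and $c_t\to\rho^2$ should recover the other two cases.
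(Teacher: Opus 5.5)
Your closed form of $f_{\mu,\sigma,h}$, your derivation of Eq.~\eqref{eq:absolute_integral_formal} via Lemma~\ref{lemma:equivalent}, and your two degenerate cases match the paper's proof essentially step for step. The genuine difference is the case $\rho_{t-1}^2>c_t>0$.

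The paper (Lemma~\ref{lemma:special_integral}) first differentiates the integral in $\mu=s_{t-1}$. It then applies the same substitution you propose (its $\tau\leftarrow\frac{\tau^2\sigma^2}{(1+\tau^2)c}$ is exactly your $w$) to identify the derivative as $-\frac{2\sqrt{2\pi}}{\sqrt{c}}T\bigl(\frac{\mu}{\sigma},w_{\max}\bigr)$. Finally it integrates back from $\mu=0$, using a tabulated antiderivative of $T$ in its first argument from Owen's table.

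You instead integrate by parts in $u=\sqrt{\tau}$ first, and your bookkeeping does close. After the substitution the remainder is $2J$ with
\[
J=\frac{s}{\sqrt{c_t}}\int_0^{w_{\max}}\frac{w^2}{1+w^2}\,\phi\bigl(a\sqrt{1+w^2}\bigr)\,\mathrm{d}w .
\]
The split $\frac{w^2}{1+w^2}=1-\frac{1}{1+w^2}$ gives $2J=\frac{\sqrt{2\pi}\rho}{\sqrt{c_t}}\phi(a)\bigl[2\Phi(aw_{\max})-1\bigr]-\frac{2\sqrt{2\pi}s}{\sqrt{c_t}}T(a,w_{\max})$. Hence $x_t=1-2\Phi\bigl(s/\sqrt{\rho^2-c_t}\bigr)+2J$ is exactly Eq.~\eqref{eq:absolute_general_case}. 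The factor $1/a$ produced by the Gaussian integral cancels the prefactor $s$, so $s=0$ needs no separate treatment.

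Your route is self-contained: it uses only the definition of $T$ and an elementary Gaussian integral. The paper's route relies on an external identity, but as an intermediate step it produces the clean formula $\partial x_t/\partial s_{t-1}=-\frac{2\sqrt{2\pi}}{\sqrt{c_t}}T(a,w_{\max})\le 0$ for the sensitivity of the output.

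One small correction: when $\rho_{t-1}^2=c_t$, the integrand of Eq.~\eqref{eq:absolute_integral_formal} is not singular at $\tau=1$. It is bounded by $\tau^{-1/2}$ since $\Phi\le 1$, so the only singularity is the integrable one at $\tau=0$.
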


Several remarks are in order.
\begin{itemize}
\item To make sense of these somewhat complicated expressions, first notice that all the expressions are consistent with the standard dimensional analysis, where $\mu$, $\sigma$ and $\sqrt{c}$ sharing the same ``unit''. 
\item Next, the case of $\rho_{t-1}^2>c_t=0$ is the most interpretable one. It can be recovered from the general case of $\rho_{t-1}^2>c_t>0$: as $c_t\rightarrow 0$, the second line of Eq.\eqref{eq:absolute_general_case} vanishes, therefore $x_t\rightarrow 1-2\Phi(s_{t-1}/\rho_{t-1})$. This equals the probability of a standard Brownian motion starting at $s_{t-1}$ ending up being negative after $\rho^2_{t-1}$ time. Also see Section~\ref{section:algorithm} for discussions of this continuous time approximation. 
\item As shown below, the ``last round special case'' $\rho_{t-1}^2=c_t>0$ and the general case $\rho_{t-1}^2>c_t>0$ are proved using different strategies: the former is based on the closed form of $f_{\mu,\sigma,h}$ while the latter is based on the integral Eq.\eqref{eq:absolute_integral_formal}. Therefore a simple sanity check is to verify that the latter recovers the former in the limit of $c_t\rightarrow\rho_{t-1}^2$, which follows from $T(x,\infty)=\half[1-\Phi(\abs{x})]$ \citep[Eq.(2.4) in Table II]{owen1980table}.
\end{itemize} 

\begin{proof}[Proof of Proposition~\ref{proposition:absolute}]
The proof is divided into three parts: the first part analyzes the function $f_{\mu,\sigma,h}$, the second part derives the general integral representation Eq.\eqref{eq:absolute_integral_formal} of $x_t$, and the third part relates $x_t$ to common special functions. 

\paragraph{Part 1} Starting from the representation Eq.\eqref{eq:standard_rep} of the solution of Stein equation, for all $x\leq 0$, 
\begin{align*}
f_{\mu,\sigma,h}(x)&=\frac{1}{\sigma^2\phi_{\mu,\sigma}(x)}\int_{-\infty}^x(-z-\E_Z[\abs{\mu+\sigma Z}])\phi_{\mu,\sigma}(z)\diff z\\
&=\frac{1}{\sigma^2\phi_{\mu,\sigma}(x)}\rpar{\int_{-\infty}^x(-z+\mu)\phi_{\mu,\sigma}(z)\diff z-(\mu+\E_Z[\abs{\mu+\sigma Z}])\int_{-\infty}^x\phi_{\mu,\sigma}(z)\diff z}\\
&=\frac{1}{\sigma^2\phi_{\mu,\sigma}(x)}\rpar{\sigma^2\int_{-\infty}^x\phi'_{\mu,\sigma}(z)\diff z-(\mu+\E_Z[\abs{\mu+\sigma Z}])\int_{-\infty}^x\phi_{\mu,\sigma}(z)\diff z}\\
&=\frac{1}{\sigma^2\phi_{\mu,\sigma}(x)}\rpar{\sigma^2\phi_{\mu,\sigma}(x)-(\mu+\E_Z[\abs{\mu+\sigma Z}])\Phi_{\mu,\sigma}(x)}\\
&=1-\frac{\mu+\E_Z[\abs{\mu+\sigma Z}]}{\sigma^2}\frac{\Phi_{\mu,\sigma}(x)}{\phi_{\mu,\sigma}(x)}.
\end{align*}
The other case of $x>0$ is similar,
\begin{align*}
f_{\mu,\sigma,h}(x)&=-\frac{1}{\sigma^2\phi_{\mu,\sigma}(x)}\int_{x}^{\infty}(z-\E_Z[\abs{\mu+\sigma Z}])\phi_{\mu,\sigma}(z)\diff z\\
&=-\frac{1}{\sigma^2\phi_{\mu,\sigma}(x)}\rpar{-\int_{x}^{\infty}(-z+\mu)\phi_{\mu,\sigma}(z)\diff z+(\mu-\E_Z[\abs{\mu+\sigma Z}])\int_{x}^{\infty}\phi_{\mu,\sigma}(z)\diff z}\\
&=-\frac{1}{\sigma^2\phi_{\mu,\sigma}(x)}\rpar{-\sigma^2\int_{x}^{\infty}\phi'_{\mu,\sigma}(z)\diff z+(\mu-\E_Z[\abs{\mu+\sigma Z}])\int_{x}^{\infty}\phi_{\mu,\sigma}(z)\diff z}\\
&=-\frac{1}{\sigma^2\phi_{\mu,\sigma}(x)}\rpar{\sigma^2\phi_{\mu,\sigma}(x)+(\mu-\E_Z[\abs{\mu+\sigma Z}])\rpar{1-\Phi_{\mu,\sigma}(x)}}\\
&=-1-\frac{\mu-\E_Z[\abs{\mu+\sigma Z}]}{\sigma^2}\frac{1-\Phi_{\mu,\sigma}(x)}{\phi_{\mu,\sigma}(x)}.
\end{align*}

\paragraph{Part 2} Next we prove Eq.\eqref{eq:absolute_integral_formal}. By Lemma~\ref{lemma:equivalent}, $x_t$ can be represented as the following integral with $\mu\leftarrow s_{t-1}$, $\sigma\leftarrow\rho_{t-1}>0$ and $c\leftarrow c_t>0$,
\begin{equation*}
x_{t}=-\frac{1}{2}\int_0^{1}\frac{1}{\sqrt{\tau}}\E_Z\spar{\sign\rpar{\mu+\sqrt{\sigma^2-\tau c}Z}}\diff\tau.
\end{equation*}
For the integrand, it is valid to only consider $\tau<1$ and obtain
\begin{equation*}
\E_Z\spar{\sign\rpar{\mu+\sqrt{\sigma^2-\tau c}Z}}=1-2\P\rpar{\mu+\sqrt{\sigma^2-\tau c}Z\leq 0}=2\Phi\rpar{\frac{\mu}{\sqrt{\sigma^2-\tau c}}}-1.
\end{equation*}
Therefore combining them gives Eq.\eqref{eq:absolute_integral_formal} stated in the proposition. 

\paragraph{Part 3} The final step is to relate Eq.\eqref{eq:absolute_integral_formal} to common special functions. The case of $\rho^2_{t-1}>c_t=0$ simply follows from Eq.\eqref{eq:absolute_integral_formal}. The general case of $\rho_{t-1}^2>c_t\geq 0$ follows from Lemma~\ref{lemma:special_integral}, which is separated for reusability. Below we only consider the last round special case, $\rho_{t-1}^2=c_t>0$. 

Let $\mu\leftarrow s_{t-1}$ and $\sigma\leftarrow\rho_{t-1}>0$. The definition of $x_t$ in Eq.\eqref{eq:algorithm} and the above results on $f_{\mu,\sigma,h}$ give
\begin{equation*}
x_{t}=f_{\mu,\sigma,h}(\mu)=\begin{dcases}
1-\frac{\mu+\E_Z[\abs{\mu+\sigma Z}]}{\sigma^2}\frac{\Phi_{\mu,\sigma}(\mu)}{\phi_{\mu,\sigma}(\mu)},&\mu\leq 0,\\
-1-\frac{\mu-\E_Z[\abs{\mu+\sigma Z}]}{\sigma^2}\frac{1-\Phi_{\mu,\sigma}(\mu)}{\phi_{\mu,\sigma}(\mu)},&\mu>0,
\end{dcases}
\end{equation*}
where $\Phi_{\mu,\sigma}(\mu)=\half$ and $\phi_{\mu,\sigma}(\mu)=\frac{1}{\sqrt{2\pi}\sigma}$. These two cases on $\mu$ can be combined into
\begin{equation*}
x_t=-\sign(\mu)\spar{1+\frac{\sqrt{2\pi}}{2\sigma}\rpar{\abs{\mu}-\E_Z[\abs{\mu+\sigma Z}]}},
\end{equation*}
where
\begin{equation*}
\E_Z[\abs{\mu+\sigma Z}]=\int_{-\infty}^{-\frac{\mu}{\sigma}}(-\mu-\sigma x)\phi(x)\diff x+\int_{-\frac{\mu}{\sigma}}^\infty(\mu+\sigma x)\phi(x)\diff x=\mu\spar{2\Phi\rpar{\frac{\mu}{\sigma}}-1}+2\sigma\phi\rpar{\frac{\mu}{\sigma}}.\qedhere
\end{equation*}
\end{proof}

\subsection{Huber Function}

Consider the setting of Section~\ref{subsection:huber}.

\begin{proposition}[Algorithm: Huber]\label{proposition:huber}
If $h(x)=\frac{k}{2}x^2\cdot\bm{1}[\abs{x}\leq k^{-1}]+(\abs{x}-\frac{1}{2k})\cdot\bm{1}[\abs{x}> k^{-1}]$ for some $k\in\R_{>0}$, then in Algorithm~\ref{algorithm:main},
\begin{equation*}
f_{\mu,\sigma,h}(x)=\begin{dcases}
1-\frac{\mu+\E_Z[h(\mu+\sigma Z)]+(2k)^{-1}}{\sigma^2}\frac{\Phi_{\mu,\sigma}(x)}{\phi_{\mu,\sigma}(x)},&x<-k^{-1},\\
\begin{aligned}
\frac{1}{2\sigma^2\phi_{\mu,\sigma}(x)}\bigg[(k\mu^2+k\sigma^2-2\E_Z[h(\mu+\sigma Z)])\Phi_{\mu,\sigma}(x)-k\sigma^2(x+\mu)\phi_{\mu,\sigma}(x)\\-(k\mu^2+k\sigma^2+2\mu+k^{-1})\Phi_{\mu,\sigma}(-k^{-1})+\sigma^2(1+k\mu )\phi_{\mu,\sigma}(-k^{-1})\bigg],
\end{aligned}
&-k^{-1}\leq x\leq k^{-1},\\
-1-\frac{\mu-\E_Z[h(\mu+\sigma Z)]-(2k)^{-1}}{\sigma^2}\frac{1-\Phi_{\mu,\sigma}(x)}{\phi_{\mu,\sigma}(x)},&x>k^{-1}.
\end{dcases}
\end{equation*}
Furthermore, the output Eq.\eqref{eq:algorithm} is equivalent to
\begin{align}
x_{t}=1-\frac{1}{2}\int_0^{1}\frac{1}{\sqrt{\tau}}\Bigg[(1-ks_{t-1})\Phi\rpar{\frac{s_{t-1}-k^{-1}}{\sqrt{\rho_{t-1}^2-\tau c_t}}}+(1+ks_{t-1})\Phi\rpar{\frac{s_{t-1}+k^{-1}}{\sqrt{\rho_{t-1}^2-\tau c_t}}}\nonumber\\
-k\sqrt{\rho_{t-1}^2-\tau c_t}\rpar{\phi\rpar{\frac{s_{t-1}-k^{-1}}{\sqrt{\rho_{t-1}^2-\tau c_t}}}-\phi\rpar{\frac{s_{t-1}+k^{-1}}{\sqrt{\rho_{t-1}^2-\tau c_t}}}}\Bigg]\diff\tau.\label{eq:huber_integral}
\end{align}
\end{proposition}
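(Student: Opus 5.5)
The proof follows the template of Proposition~\ref{proposition:absolute} and has two parts. The first part computes $f_{\mu,\sigma,h}$ from the representation Eq.\eqref{eq:standard_rep}. The second part obtains the integral Eq.\eqref{eq:huber_integral} from Lemma~\ref{lemma:equivalent}.

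\textbf{Closed form of $f_{\mu,\sigma,h}$.} Write $m\defeq\E_Z[h(\mu+\sigma Z)]$ and split into three regions.
\begin{itemize}
\item For $x<-k^{-1}$, we have $h(z)=-z-\frac{1}{2k}$ on $(-\infty,x]$. Use the first expression in Eq.\eqref{eq:standard_rep} and decompose $-z-\frac{1}{2k}-m=(\mu-z)-(\mu+m+\frac{1}{2k})$. The identity $(\mu-z)\phi_{\mu,\sigma}(z)=\sigma^2\phi'_{\mu,\sigma}(z)$ then gives
\begin{equation*}
f_{\mu,\sigma,h}(x)=1-\frac{\mu+m+(2k)^{-1}}{\sigma^2}\frac{\Phi_{\mu,\sigma}(x)}{\phi_{\mu,\sigma}(x)},
\end{equation*}
exactly as in Part~1 of Proposition~\ref{proposition:absolute}.
\item For $x>k^{-1}$, use the second (upper-tail) expression in Eq.\eqref{eq:standard_rep} with $h(z)=z-\frac{1}{2k}$. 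The same computation applies.
\item For $|x|\le k^{-1}$, split $\int_{-\infty}^x=\int_{-\infty}^{-k^{-1}}+\int_{-k^{-1}}^x$. The first piece is already computed, evaluated at $x=-k^{-1}$. For the quadratic piece, use the Gaussian moment identities $\int z\phi_{\mu,\sigma}=\mu\Phi_{\mu,\sigma}-\sigma^2\phi_{\mu,\sigma}$ and $\int z^2\phi_{\mu,\sigma}=(\mu^2+\sigma^2)\Phi_{\mu,\sigma}-\sigma^2(z+\mu)\phi_{\mu,\sigma}$.
\end{itemize}
Collecting terms at the endpoints $x$ and $-k^{-1}$ should reproduce the displayed bracket. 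This middle region is the only real bookkeeping obstacle. As a sanity check I would confirm continuity of $f$ at $\pm k^{-1}$, which also implicitly uses the formula for $m$ computed in the proof of Corollary~\ref{corollary:regret_huber}.

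\textbf{Integral representation of $x_t$.} By Lemma~\ref{lemma:equivalent},
\begin{equation*}
x_t=-\frac12\int_0^1\tau^{-1/2}\,\E_Z[h'(\mu+vZ)]\,\diff\tau,\qquad v=\sqrt{\rho_{t-1}^2-\tau c_t}.
\end{equation*}
Here $h'(y)=\Pi_{[-1,1]}(ky)$, i.e., $-1$ below $-k^{-1}$, $ky$ in between, and $1$ above. Write $a_\pm\defeq(\mu\pm k^{-1})/v$. Then
\begin{equation*}
\E_Z[h'(\mu+vZ)]=-\Phi(-a_+)+\big(1-\Phi(a_-)\big)+k\,\E\big[(\mu+vZ)\bm 1[-a_+\le Z\le -a_-]\big].
\end{equation*}
The last expectation equals
\begin{equation*}
k\mu\big(\Phi(a_+)-\Phi(a_-)\big)+kv\big(\phi(a_-)-\phi(a_+)\big).
\end{equation*}
Simplifying with $\Phi(-a)=1-\Phi(a)$ gives
\begin{equation*}
\E_Z[h'(\mu+vZ)]=(1+k\mu)\Phi(a_+)+(1-k\mu)\Phi(a_-)-kv\big(\phi(a_-)-\phi(a_+)\big)-2+\text{[sign bookkeeping]}.
\end{equation*}
Finally, use $\frac12\int_0^1\tau^{-1/2}\diff\tau=1$ to pull out the constant. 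After substituting $\mu\leftarrow s_{t-1}$ this yields Eq.\eqref{eq:huber_integral}.

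In both parts the care needed is purely in the signs and constants, not conceptual. The degenerate case $\rho_t=0$ is already covered by Lemma~\ref{lemma:equivalent}.
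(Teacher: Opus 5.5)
Your route is the same as the paper's. Part~1 applies Eq.~\eqref{eq:standard_rep} region by region: the lower-tail form for $x<-k^{-1}$, the upper-tail form for $x>k^{-1}$, and in the middle a split at $-k^{-1}$ with the second-moment antiderivative. Part~2 feeds the $\mathrm{Beta}(\tfrac12,1)$ form of Lemma~\ref{lemma:equivalent} into an explicit Gaussian computation of $\E_Z[h'(\mu+vZ)]$. The middle-region bookkeeping you deferred does close. For instance, after factoring out $\frac{1}{2\sigma^2\phi_{\mu,\sigma}(x)}$, the coefficient $\sigma^2(1+k\mu)$ of $\phi_{\mu,\sigma}(-k^{-1})$ is $2\sigma^2$ from the linear tail plus $\sigma^2(k\mu-1)$ from the quadratic piece.

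One side remark on Part~1: continuity at $k^{-1}$ follows from $\int(h-m)\phi_{\mu,\sigma}=0$. That is just the equality of the two forms in Eq.~\eqref{eq:standard_rep}. No explicit formula for $m$ is needed, and the one in Corollary~\ref{corollary:regret_huber} only covers $\mu=0$ anyway.

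Part~2 as written, however, contains slips that the placeholder ``[sign bookkeeping]'' hides, and the constant you state is wrong. Take $a_\pm=(\mu\pm k^{-1})/v$.
\begin{itemize}
\item The event $\{\mu+vZ>k^{-1}\}=\{Z>-a_-\}$ has probability $\Phi(a_-)$, not $1-\Phi(a_-)$.
\item $kv\,\E[Z\bm{1}[-a_+\le Z\le -a_-]]=kv\big(\phi(a_+)-\phi(a_-)\big)$. This is the opposite sign of your intermediate line; your following line happens to carry the correct sign, so the two lines are inconsistent.
\end{itemize}
The correct expression is
\begin{equation*}
\E_Z[h'(\mu+vZ)]=-1+(1+k\mu)\Phi(a_+)+(1-k\mu)\Phi(a_-)-kv\big(\phi(a_-)-\phi(a_+)\big).
\end{equation*}
So the constant is $-1$, not $-2$; with $-2$ your final formula would begin with $2$ rather than $1$. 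Applying $\frac12\int_0^1\tau^{-1/2}\,\diff\tau=1$ to the $-1$ and substituting $\mu\leftarrow s_{t-1}$ gives exactly Eq.~\eqref{eq:huber_integral}. These are arithmetic corrections, not a missing idea.
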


Slightly different from Proposition~\ref{proposition:absolute}, in Proposition~\ref{proposition:huber} we leave the output $x_t$ as a general integral Eq.\eqref{eq:huber_integral} for brevity. Nonetheless, the same idea still holds: Eq.\eqref{eq:huber_integral} can be further converted to a closed form using Gaussian-integral-type special functions. Concretely, 
\begin{itemize}
\item If $\rho^2_{t-1}>c_t=0$, then
\begin{align*}
x_{t}&= 1-\Phi\rpar{\frac{s_{t-1}-k^{-1}}{\rho_{t-1}}}-\Phi\rpar{\frac{s_{t-1}+k^{-1}}{\rho_{t-1}}}\\
&\quad +k\spar{s_{t-1}\rpar{\Phi\rpar{\frac{s_{t-1}-k^{-1}}{\rho_{t-1}}}-\Phi\rpar{\frac{s_{t-1}+k^{-1}}{\rho_{t-1}}}}+\rho_{t-1}\rpar{\phi\rpar{\frac{s_{t-1}-k^{-1}}{\rho_{t-1}}}-\phi\rpar{\frac{s_{t-1}+k^{-1}}{\rho_{t-1}}}}}.
\end{align*}
\item If $\rho_{t-1}^2=c_t>0$ which is only possible at $t=T$ by the requirement of Algorithm~\ref{algorithm:main}, then $x_t=f_{s_{t-1},\rho_{t-1},h}(s_{t-1})$. The closed form of $f_{\mu,\sigma,h}$ is derived in Proposition~\ref{proposition:huber}. 
\item If $\rho_{t-1}^2>c_t>0$, then it suffices to apply the Gaussian integral formulas from Lemma~\ref{lemma:special_integral} and \ref{lemma:special_integral_alt}. The final result is the closed form of $x_t$ based on Owen's T function. 
\end{itemize}

\begin{remark}[Effective learning rate]\label{remark:effective_lr}
Intuition can be obtained by reasoning about the effective learning rate of Algorithm~\ref{algorithm:main}. Suppose $\rho^2_{t}>c_{t+1}=0$ for some $t\in[0:T-1]$, giving
\begin{align*}
x_{t+1}&= 1-\Phi\rpar{\frac{s_{t}-k^{-1}}{\rho_{t}}}-\Phi\rpar{\frac{s_{t}+k^{-1}}{\rho_{t}}}\\
&\quad +k\spar{s_{t}\rpar{\Phi\rpar{\frac{s_{t}-k^{-1}}{\rho_{t}}}-\Phi\rpar{\frac{s_{t}+k^{-1}}{\rho_{t}}}}+\rho_{t}\rpar{\phi\rpar{\frac{s_{t}-k^{-1}}{\rho_{t}}}-\phi\rpar{\frac{s_{t}+k^{-1}}{\rho_{t}}}}}.
\end{align*}
Denote the RHS as $\tilde x_{t+1}$, the output of the $c_t$-independent analogue of Algorithm~\ref{algorithm:main}. Taking the derivative with respect to $s_t$ gives
\begin{align*}
\frac{\partial \tilde x_{t+1}}{\partial s_t}&= -\frac{2}{\rho_t}\phi\rpar{\frac{s_{t}-k^{-1}}{\rho_{t}}}+k\spar{\Phi\rpar{\frac{s_{t}-k^{-1}}{\rho_{t}}}-\Phi\rpar{\frac{s_{t}+k^{-1}}{\rho_{t}}}}\\
&\quad +k\spar{\frac{s_{t}}{\rho_t}\rpar{\phi\rpar{\frac{s_{t}-k^{-1}}{\rho_{t}}}-\phi\rpar{\frac{s_{t}+k^{-1}}{\rho_{t}}}}-\frac{s_t-k^{-1}}{\rho_t}\phi\rpar{\frac{s_{t}-k^{-1}}{\rho_{t}}}+\frac{s_t+k^{-1}}{\rho_t}\phi\rpar{\frac{s_{t}+k^{-1}}{\rho_{t}}}}.
\end{align*}
Furthermore, $\tilde x_{t+1}=0$ when $s_t=0$. 

Now suppose $s_{t-1}=0$ therefore $s_t=g_t$ and $\tilde x_t=0$. Based on the above, we linearize $\tilde x_{t+1}$ with respect to $g_t$ (also $s_t$, as $s_t=g_t$) and obtain
\begin{align*}
\tilde x_{t+1}-\tilde x_t&=\tilde x_{t+1}\\
&=\tilde x_{t+1}\Big|_{s_t=0}+\frac{\partial \tilde x_{t+1}}{\partial s_t}\Big|_{s_t=0}\cdot g_t+o(g_t)\\
&=k\spar{\Phi\rpar{-\frac{1}{k\rho_{t}}}-\Phi\rpar{\frac{1}{k\rho_{t}}}}+o(g_t)\\
&=-k\cdot \erf\rpar{\frac{1}{\sqrt{2}k\rho_t}}g_t+o(g_t).
\end{align*}
The absolute value of $g_t$'s coefficient can be regarded as the effective learning rate. 
\end{remark}

\begin{proof}[Proof of Proposition~\ref{proposition:huber}]
The proof generalizes that of Proposition~\ref{proposition:absolute}. Again it is divided into two parts analyzing $f_{\mu,\sigma,h}$ and $x_t$ separately. 

\paragraph{Part 1} Consider $f_{\mu,\sigma,h}$. Starting from the representation Eq.\eqref{eq:standard_rep} of the solution of Stein equation, for all $x< -k^{-1}$, 
\begin{align*}
f_{\mu,\sigma,h}(x)&=\frac{1}{\sigma^2\phi_{\mu,\sigma}(x)}\spar{\int_{-\infty}^x (-z+\mu)\phi_{\mu,\sigma}(z)\diff z-\rpar{\frac{1}{2k}+\mu+\E_Z[h(\mu+\sigma Z)]}\Phi_{\mu,\sigma}(x)}\\
&=\frac{1}{\sigma^2\phi_{\mu,\sigma}(x)}\spar{\sigma^2\phi_{\mu,\sigma}(x)-\rpar{\frac{1}{2k}+\mu+\E_Z[h(\mu+\sigma Z)]}\Phi_{\mu,\sigma}(x)}\\
&=1-\frac{\mu+\E_Z[h(\mu+\sigma Z)]+(2k)^{-1}}{\sigma^2}\frac{\Phi_{\mu,\sigma}(x)}{\phi_{\mu,\sigma}(x)}.
\end{align*}
Similarly, for all $x>k^{-1}$, 
\begin{align*}
f_{\mu,\sigma,h}(x)&=-\frac{1}{\sigma^2\phi_{\mu,\sigma}(x)}\spar{\int_{x}^{\infty}(z-\mu)\phi_{\mu,\sigma}(z)\diff z-\rpar{\frac{1}{2k}-\mu+\E_Z[h(\mu+\sigma Z)]}\rpar{1-\Phi_{\mu,\sigma}(x)}}\\
&=-\frac{1}{\sigma^2\phi_{\mu,\sigma}(x)}\spar{\sigma^2\phi_{\mu,\sigma}(x)-\rpar{\frac{1}{2k}-\mu+\E_Z[h(\mu+\sigma Z)]}\rpar{1-\Phi_{\mu,\sigma}(x)}}\\
&=-1-\frac{\mu-\E_Z[h(\mu+\sigma Z)]-(2k)^{-1}}{\sigma^2}\frac{1-\Phi_{\mu,\sigma}(x)}{\phi_{\mu,\sigma}(x)}.
\end{align*}
As for the regime in between: for all $x\in[-k^{-1},k^{-1}]$,
\begin{align*}
&f_{\mu,\sigma,h}(x)\\
=~&\frac{1}{\sigma^2\phi_{\mu,\sigma}(x)}\spar{\int_{-\infty}^{-k^{-1}}\rpar{-z-\frac{1}{2k}}\phi_{\mu,\sigma}(z)\diff z+\int_{-k^{-1}}^{x}\frac{k}{2}z^2\phi_{\mu,\sigma}(z)\diff z-\E_Z[h(\mu+\sigma Z)]\Phi_{\mu,\sigma}(x)}\\
=~&\frac{1}{\sigma^2\phi_{\mu,\sigma}(x)}\spar{\sigma^2\phi_{\mu,\sigma}(-k^{-1})-\rpar{\mu+\frac{1}{2k}}\Phi_{\mu,\sigma}(-k^{-1})+\frac{k}{2}\int_{-k^{-1}}^{x}z^2\phi_{\mu,\sigma}(z)\diff z-\E_Z[h(\mu+\sigma Z)]\Phi_{\mu,\sigma}(x)},
\end{align*}
where
\begin{equation*}
\int_{-k^{-1}}^{x}z^2\phi_{\mu,\sigma}(z)\diff z=\spar{(\mu^2+\sigma^2)\Phi_{\mu,\sigma}(z)-\sigma^2(z+\mu)\phi_{\mu,\sigma}(z)}\bigg|_{z=-k^{-1}}^{x}.
\end{equation*}
Plugging it back,
\begin{align*}
f_{\mu,\sigma,h}(x)&=\frac{1}{2\sigma^2\phi_{\mu,\sigma}(x)}\bigg[(k\mu^2+k\sigma^2-2\E_Z[h(\mu+\sigma Z)])\Phi_{\mu,\sigma}(x)-k\sigma^2(x+\mu)\phi_{\mu,\sigma}(x)\\
&\quad\quad\quad -(k\mu^2+k\sigma^2+2\mu+k^{-1})\Phi_{\mu,\sigma}(-k^{-1})+\sigma^2(1+k\mu )\phi_{\mu,\sigma}(-k^{-1})\bigg].
\end{align*}

\paragraph{Part 2} Consider $x_t$. By the same substitution $\mu\leftarrow s_{t-1}$, $\sigma\leftarrow\rho_{t-1}>0$ and $c\leftarrow c_t>0$ as in Proposition~\ref{proposition:absolute}, $x_t$ can be reformulated as
\begin{equation*}
x_{t}=-\frac{1}{2}\int_0^{1}\frac{1}{\sqrt{\tau}}\E_Z\spar{h'\rpar{\mu+\sqrt{\sigma^2-\tau c}Z}}\diff\tau,
\end{equation*}
where
\begin{align*}
\E_Z\spar{h'\rpar{\mu+\sqrt{\sigma^2-\tau c}Z}}&=-\int_{-\infty}^{\frac{-k^{-1}-\mu}{\sqrt{\sigma^2-\tau c}}}\phi(z)\diff z+\int_{\frac{k^{-1}-\mu}{\sqrt{\sigma^2-\tau c}}}^\infty\phi(z)\diff z+k\int_{\frac{-k^{-1}-\mu}{\sqrt{\sigma^2-\tau c}}}^{\frac{k^{-1}-\mu}{\sqrt{\sigma^2-\tau c}}}\rpar{\mu+\sqrt{\sigma^2-\tau c}z}\phi(z)\diff z\\
&=-1+(1-k\mu)\Phi\rpar{\frac{-k^{-1}+\mu}{\sqrt{\sigma^2-\tau c}}}+(1+k\mu)\Phi\rpar{\frac{k^{-1}+\mu}{\sqrt{\sigma^2-\tau c}}}\\
&\quad\quad\quad-k\sqrt{\sigma^2-\tau c}\spar{\phi\rpar{\frac{-k^{-1}+\mu}{\sqrt{\sigma^2-\tau c}}}-\phi\rpar{\frac{k^{-1}+\mu}{\sqrt{\sigma^2-\tau c}}}}.
\qedhere
\end{align*}
\end{proof}

\subsection{Auxiliary Lemmas}\label{subsection:auxiliary}

The above derivation uses the following two Gaussian integral lemmas, derived by somewhat complicated but mechanical calculations. 

\begin{lemma}\label{lemma:special_integral}
For all $\mu\in\R$, $\sigma\in\R_{>0}$ and $c\in(0,\sigma^2)$, 
\begin{align*}
&1-\int_0^{1}\frac{1}{\sqrt{\tau}}\Phi\rpar{\frac{\mu}{\sqrt{\sigma^2-\tau c}}}\diff\tau\\
=~&1-2\Phi\rpar{\frac{\mu}{\sqrt{\sigma^2-c}}}-\frac{\sqrt{2\pi}\sigma}{\sqrt{c}}\phi\rpar{\frac{\mu}{\sigma}}\spar{1-2\Phi\rpar{\frac{\mu}{\sigma}\sqrt{\frac{c}{\sigma^2-c}}}}-\frac{2\sqrt{2\pi}\mu}{\sqrt{c}}T\rpar{\frac{\mu}{\sigma},\sqrt{\frac{c}{\sigma^2-c}}}.
\end{align*}
\end{lemma}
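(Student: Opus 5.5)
The plan is to substitute, integrate by parts in $\tau$, and reduce what remains to a standard bivariate-normal / Owen's $T$ integral. First substitute $\tau=v^2$, so $\frac{1}{\sqrt\tau}\diff\tau=2\diff v$ and the integral becomes $2\int_0^1\Phi(\mu/\sqrt{\sigma^2-cv^2})\diff v$. Integrating by parts with antiderivative $v$ produces the boundary term $2\Phi(\mu/\sqrt{\sigma^2-c})$, which matches the first two terms of the claim. What is left is
\begin{equation*}
-2\int_0^1 v\,\phi\rpar{\frac{\mu}{\sqrt{\sigma^2-cv^2}}}\frac{\mu c v}{(\sigma^2-cv^2)^{3/2}}\diff v.
\end{equation*}
It therefore suffices to show that this remainder equals the two remaining terms of the claim, namely the $\phi(\mu/\sigma)$ term and the Owen's $T$ term.

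Next, change variables to $z$ via $\frac{\mu}{\sqrt{\sigma^2-cv^2}}=\frac{\mu}{\sigma}\sqrt{1+z^2}$. Then $1+z^2=\sigma^2/(\sigma^2-cv^2)$, so $z^2=cv^2/(\sigma^2-cv^2)$, and $z$ runs from $0$ to $\sqrt{c/(\sigma^2-c)}\eqdef a$. Under this change $\phi(\mu/\sqrt{\sigma^2-cv^2})=\phi(\mu/\sigma)\,e^{-\mu^2z^2/(2\sigma^2)}$, which has exactly the shape of the integrand in Owen's $T(\mu/\sigma,a)$. Writing $v^2=\sigma^2z^2/(c(1+z^2))$ and computing $\diff v$, the remainder turns into
\begin{equation*}
\frac{\sigma\mu}{\sqrt c}\int_0^a \frac{z^2}{1+z^2}\,(\cdot)\,e^{-\frac{\mu^2}{2\sigma^2}(1+z^2)}\diff z
\end{equation*}
up to an algebraic Jacobian factor that still has to be tracked. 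Splitting $\frac{z^2}{1+z^2}=1-\frac{1}{1+z^2}$ then separates it into two pieces. The $\frac{1}{1+z^2}$ piece is $2\pi T(\mu/\sigma,a)$ by definition. The pure Gaussian piece is $\int_0^a e^{-\mu^2z^2/(2\sigma^2)}\diff z$, which evaluates to $\frac{\sigma}{|\mu|}\sqrt{2\pi}\,[\Phi(\mu a/\sigma)-\tfrac12]$ times $\phi(\mu/\sigma)$. This produces the $\phi(\mu/\sigma)[1-2\Phi(\frac{\mu}{\sigma}a)]$ term, and the sign of $\mu$ cancels correctly because of the odd factor $\mu$ in front.

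The main obstacle is getting the Jacobian right so that only $1$ and $\frac{1}{1+z^2}$ appear, with no leftover power of $(1+z^2)$. If this particular substitution leaves a stray factor, I would instead integrate by parts a second time, or parametrize directly by $w=\sqrt{\sigma^2-cv^2}$, and use the identity $\partial_a T(h,a)=\frac{e^{-h^2(1+a^2)/2}}{2\pi(1+a^2)}$. As a fallback, I would verify the claim by checking that both sides vanish at $c\to0$ and that their derivatives in $c$ agree.

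Finally, I would run the consistency checks the paper already mentions. As $c\to0$, the right-hand side should reduce to $1-2\Phi(\mu/\sigma)$. As $c\to\sigma^2$, using $T(x,\infty)=\frac12[1-\Phi(|x|)]$, it should match the last-round formula of Proposition~\ref{proposition:absolute}. These checks confirm the constants $\sqrt{2\pi}$ and the factor $2$ in the Owen's $T$ term.
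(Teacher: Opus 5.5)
Your route is correct and genuinely different from the paper's. The paper differentiates the left-hand side in $\mu$. Under the substitution $\tau=\frac{\sigma^2z^2}{c(1+z^2)}$, the $\mu$-derivative becomes exactly $-\frac{2\sqrt{2\pi}}{\sqrt c}\,T\!\left(\frac{\mu}{\sigma},a\right)$ with $a=\sqrt{c/(\sigma^2-c)}$. The identity is then recovered by integrating back from the value $0$ at $\mu=0$, using a tabulated antiderivative of $T(x,y)$ in $x$ from Owen's table.

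You instead integrate by parts in the time variable $v=\sqrt\tau$, which yields the $\Phi(\mu/\sqrt{\sigma^2-c})$ term as a boundary term. You then apply the same substitution to the remainder; your $v^2=\frac{\sigma^2z^2}{c(1+z^2)}$ is precisely the paper's change of variable. The Jacobian you flagged as the main obstacle is clean: $\mathrm{d}v=\frac{\sigma}{\sqrt c}(1+z^2)^{-3/2}\,\mathrm{d}z$ and $\frac{\mu cv^2}{(\sigma^2-cv^2)^{3/2}}=\frac{\mu}{\sigma}z^2\sqrt{1+z^2}$. Hence
\[
1-\int_0^1\frac{1}{\sqrt\tau}\Phi\left(\frac{\mu}{\sqrt{\sigma^2-\tau c}}\right)\mathrm{d}\tau=1-2\Phi\left(\frac{\mu}{\sqrt{\sigma^2-c}}\right)+\frac{2\mu}{\sqrt{2\pi c}}\int_0^a\frac{z^2}{1+z^2}\,e^{-\frac{\mu^2}{2\sigma^2}(1+z^2)}\,\mathrm{d}z,
\]
and your split $\frac{z^2}{1+z^2}=1-\frac{1}{1+z^2}$ gives exactly the two remaining terms. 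No fallback is needed.

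When writing it up, fix three bookkeeping slips.
\begin{enumerate}
\item The remainder enters the left-hand side with a plus sign. Your displayed $-2\int\cdots$ is the remainder of the $\tau$-integral, which is itself subtracted. Read literally, your reduction asks for $-2\int\cdots$ to equal the remaining terms, which is off by a sign.
\item The prefactor is $\frac{2\mu}{\sqrt{2\pi c}}$ rather than $\frac{\sigma\mu}{\sqrt c}$.
\item The identity $\mu\int_0^a e^{-\mu^2z^2/(2\sigma^2)}\,\mathrm{d}z=\sqrt{2\pi}\,\sigma\left[\Phi\left(\frac{\mu a}{\sigma}\right)-\frac12\right]$ holds for every $\mu$, including $\mu=0$. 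Your expression with $\frac{1}{|\mu|}$ and $\Phi(\mu a/\sigma)$ has the wrong sign for $\mu<0$.
\end{enumerate}

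As for the trade-off: the paper's argument is shorter but rests on an external table entry for $\int T(x,y)\,\mathrm{d}x$. Yours is fully self-contained, since the integration by parts in time effectively rederives that entry. It is also closer in spirit to how the paper proves Lemma~\ref{lemma:special_integral_alt}, reducing directly to a Gaussian piece plus an Owen's $T$ piece.
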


\begin{proof}[Proof of Lemma~\ref{lemma:special_integral}]
Denote the LHS of the objective equality as $f(\mu)$. Taking the derivative gives
\begin{align*}
f'(u)&=-\frac{1}{\sqrt{2\pi}}\int_0^1\frac{1}{\sqrt{\tau(\sigma^2-\tau c)}}\exp\rpar{-\frac{\mu^2}{2(\sigma^2-\tau c)}}\diff\tau\\
&=-\frac{2}{\sqrt{2\pi c}}\int_{0}^{\sqrt{\frac{c}{\sigma^2-c}}}\frac{1}{1+\tau^2}\exp\rpar{-\frac{\mu^2(1+\tau^2)}{2\sigma^2}}\diff\tau\tag{$\tau\leftarrow \frac{\tau^2\sigma^2}{(1+\tau^2)c}$}\\
&=-\frac{2\sqrt{2\pi}}{\sqrt{c}}T\rpar{\frac{\mu}{\sigma},\sqrt{\frac{c}{\sigma^2-c}}},\tag{$T(x,y)\defeq\frac{1}{2\pi}\int_0^y\frac{e^{-\half x^2(1+z^2)}}{1+z^2}\diff z$}
\end{align*}
where the last line uses the definition of Owen's T function. It is known that \citep[Eq.(c00,000.1) in Table I]{owen1980table}
\begin{equation*}
\int T(x,y)\diff x= xT(x,y)+\frac{y}{\sqrt{2\pi(1+y^2)}}\Phi\rpar{x\sqrt{1+y^2}}-\phi(x)\Phi(yx)+\half\phi(x).
\end{equation*}
Therefore, since $f(0)=0$,
\begin{align*}
f(\mu)&=-\frac{2\sqrt{2\pi}}{\sqrt{c}}\int_0^\mu T\rpar{\frac{x}{\sigma},\sqrt{\frac{c}{\sigma^2-c}}}\diff x\\
&=-\frac{2\sqrt{2\pi}\sigma}{\sqrt{c}}\spar{xT\rpar{x,\sqrt{\frac{c}{\sigma^2-c}}}+\frac{\sqrt{c}}{\sqrt{2\pi}\sigma}\Phi\rpar{\frac{\sigma x}{\sqrt{\sigma^2-c}}}-\phi(x)\Phi\rpar{\frac{\sqrt{c}x}{\sqrt{\sigma^2-c}}}+\half\phi(x)}\Bigg|_{x=0}^{\frac{\mu}{\sigma}}\\
&=1-2\Phi\rpar{\frac{\mu}{\sqrt{\sigma^2-c}}}-\frac{\sqrt{2\pi}\sigma}{\sqrt{c}}\phi\rpar{\frac{\mu}{\sigma}}\spar{1-2\Phi\rpar{\frac{\mu}{\sigma}\sqrt{\frac{c}{\sigma^2-c}}}}-\frac{2\sqrt{2\pi}\mu}{\sqrt{c}}T\rpar{\frac{\mu}{\sigma},\sqrt{\frac{c}{\sigma^2-c}}}.\qedhere
\end{align*}
\end{proof}

\begin{lemma}\label{lemma:special_integral_alt}
For all $\mu\in\R$, $\sigma\in\R_{>0}$ and $c\in(0,\sigma^2)$, 
\begin{align*}
&\int_0^{1}\frac{\sqrt{\sigma^2-\tau c}}{\sqrt{\tau}}\phi\rpar{\frac{\mu}{\sqrt{\sigma^2-\tau c}}}\diff\tau\\
=~&\sqrt{\sigma^2-c}\cdot\phi\rpar{\frac{\mu}{\sqrt{\sigma^2-c}}}+\frac{\sqrt{2\pi}\mu\sigma}{\sqrt{c}}\phi\rpar{\frac{\mu}{\sigma}} \spar{\Phi\rpar{\frac{\mu}{\sigma}\sqrt{\frac{c}{\sigma^2-c}}}-\half}+\frac{\sqrt{2\pi}}{\sqrt{c}}\rpar{\sigma^2-\mu^2}T\rpar{\frac{\mu}{\sigma},\sqrt{\frac{c}{\sigma^2-c}}}.
\end{align*}
\end{lemma}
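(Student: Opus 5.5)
The identity in Lemma~\ref{lemma:special_integral_alt} will be proved the same way as Lemma~\ref{lemma:special_integral}: differentiate with respect to $\mu$, recognize Owen's $T$ function, and then integrate back. Denote the LHS by $F(\mu)$. The first step is to record the value at $\mu=0$:
\begin{equation*}
F(0)=\phi(0)\int_0^1\frac{\sqrt{\sigma^2-\tau c}}{\sqrt{\tau}}\diff\tau .
\end{equation*}
This is an elementary integral; the substitution $\tau\leftarrow\frac{\tau^2\sigma^2}{(1+\tau^2)c}$ turns it into an arctangent-type expression. I will check it against the RHS at $\mu=0$, which equals $\sqrt{\sigma^2-c}\,\phi(0)+\frac{\sqrt{2\pi}\sigma^2}{\sqrt c}T(0,y)$ with $T(0,y)=\frac{1}{2\pi}\arctan y$ and $y\defeq\sqrt{\frac{c}{\sigma^2-c}}$.

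Next I differentiate under the integral sign. Since $\frac{d}{d\mu}\bigl[v\,\phi(\mu/v)\bigr]=-\frac{\mu}{v}\phi(\mu/v)$ with $v=\sqrt{\sigma^2-\tau c}$, this gives
\begin{equation*}
F'(\mu)=-\mu\int_0^1\frac{1}{\sqrt{\tau(\sigma^2-\tau c)}}\phi\Bigl(\frac{\mu}{\sqrt{\sigma^2-\tau c}}\Bigr)\diff\tau .
\end{equation*}
The integral on the right is, up to sign, exactly $f'(\mu)$ from the proof of Lemma~\ref{lemma:special_integral}. Reusing that computation, with the same change of variable, yields
\begin{equation*}
F'(\mu)=\frac{2\sqrt{2\pi}\,\mu}{\sqrt c}\,T\Bigl(\frac{\mu}{\sigma},y\Bigr).
\end{equation*}
Hence $F(\mu)=F(0)+\frac{2\sqrt{2\pi}\sigma^2}{\sqrt c}\int_0^{\mu/\sigma}x\,T(x,y)\diff x$.

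The main obstacle is the antiderivative of $xT(x,y)$ in $x$. I plan to integrate by parts, writing $x\,T=\partial_x(\tfrac{x^2}{2})\,T$, which uses the derivative
\begin{equation*}
\partial_xT(x,y)=-\phi(x)\Bigl(\Phi(xy)-\tfrac12\Bigr).
\end{equation*}
This follows from differentiating the integral definition of $T$ and is consistent with the indefinite integral quoted from \citep{owen1980table}. The by-parts step leaves $\int x^2\phi(x)(\Phi(xy)-\frac12)\diff x$. I handle it by using $x^2\phi=-(x\phi)'+\phi$ and integrating by parts once more. This produces three kinds of terms:
\begin{itemize}
\item a term $x\phi(x)(\Phi(xy)-\frac12)$, which gives the middle term of the RHS;
\item a term $y\int x\phi(x)\phi(xy)\diff x=-\frac{y}{2\pi(1+y^2)}e^{-x^2(1+y^2)/2}$, which after using $1+y^2=\frac{\sigma^2}{\sigma^2-c}$ gives the $\sqrt{\sigma^2-c}\,\phi(\mu/\sqrt{\sigma^2-c})$ term;
\item a term $\int\phi(x)(\Phi(xy)-\frac12)\diff x=-T(x,y)+\text{const}$, which supplies the $-\mu^2$ and $\sigma^2$ pieces multiplying $T$.
\end{itemize}

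Finally I will collect these terms and evaluate them between $0$ and $\mu/\sigma$. The $x=0$ boundary contributions should cancel against $F(0)$, and I will check that the coefficients assemble into $\frac{\sqrt{2\pi}}{\sqrt c}(\sigma^2-\mu^2)T$ and $\frac{\sqrt{2\pi}\mu\sigma}{\sqrt c}\phi(\mu/\sigma)(\Phi(\cdot)-\frac12)$. As a sanity check, I will confirm that the formula is even in $\mu$, as the LHS is.
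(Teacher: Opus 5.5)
Your route is sound and differs from the paper's, but one displayed step has the wrong sign: $F'(\mu)$.

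\textbf{The sign error.} The integral $I(\mu)=\int_0^1 \frac{1}{\sqrt{\tau(\sigma^2-\tau c)}}\phi\bigl(\frac{\mu}{\sqrt{\sigma^2-\tau c}}\bigr)\,\mathrm{d}\tau$ equals $-f'(\mu)=\frac{2\sqrt{2\pi}}{\sqrt c}T(\mu/\sigma,y)>0$. Hence
\begin{equation*}
F'(\mu)=-\mu I(\mu)=-\frac{2\sqrt{2\pi}\,\mu}{\sqrt c}\,T(\mu/\sigma,y).
\end{equation*}
This agrees with each integrand $v\,\phi(\mu/v)$ being decreasing in $|\mu|$. It follows that $F(\mu)=F(0)-\frac{2\sqrt{2\pi}\sigma^2}{\sqrt c}\int_0^{\mu/\sigma}xT(x,y)\,\mathrm{d}x$.

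The sign matters for the final answer. Your two integrations by parts give $\frac{2\sqrt{2\pi}\sigma^2}{\sqrt c}\int_0^{\mu/\sigma}xT\,\mathrm{d}x=F(0)-\mathrm{RHS}(\mu)$. With your sign you would therefore get $F(\mu)=2F(0)-\mathrm{RHS}(\mu)$: the $x=0$ boundary terms double instead of cancelling. Your planned final check would catch this.

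With the minus sign, everything assembles as you anticipate:
\begin{itemize}
\item the term $(a^2-1)T(a,y)$ with $a=\mu/\sigma$ gives the coefficient $(\sigma^2-\mu^2)$;
\item the identities $\frac{y}{1+y^2}=\frac{\sqrt{c(\sigma^2-c)}}{\sigma^2}$ and $a^2(1+y^2)=\frac{\mu^2}{\sigma^2-c}$ give the first right-hand-side term;
\item the leftover constant $\sqrt{\sigma^2-c}\,\phi(0)+\frac{\sigma^2}{\sqrt{2\pi c}}\arctan y$ cancels $F(0)$ exactly.
\end{itemize}

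\textbf{Comparison with the paper.} The paper never differentiates in $\mu$. It applies the substitution $\tau\leftarrow\frac{\tau^2\sigma^2}{(1+\tau^2)c}$ directly to the left-hand side, obtaining $\frac{2\sigma^2}{\sqrt{2\pi c}}\int_0^y\frac{e^{-v(1+s^2)}}{(1+s^2)^2}\,\mathrm{d}s$ with $v=\frac{\mu^2}{2\sigma^2}$. It then uses a reduction identity, obtained by differentiating $\frac{s}{1+s^2}e^{-v(1+s^2)}$, which splits the integrand into three pieces: an exact derivative, a Gaussian, and the Owen-$T$ integrand. A single integration in $s$ yields the three right-hand-side terms directly, with no constant of integration.

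Your approach instead reuses the Owen-$T$ computation from Lemma~\ref{lemma:special_integral} and follows that proof's differentiate-then-integrate structure. The cost is the formula for $\partial_xT$, two integrations by parts, and a separate evaluation of $F(0)$. That evaluation is itself the $v=0$ case of the paper's reduction: $\int_0^y(1+s^2)^{-2}\,\mathrm{d}s=\frac12\bigl(\frac{y}{1+y^2}+\arctan y\bigr)$. The paper's route is shorter; yours is more modular.
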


\begin{proof}[Proof of Lemma~\ref{lemma:special_integral_alt}]
Similar to the first step of Lemma~\ref{lemma:special_integral}, with the change of variable $\tau\leftarrow \frac{\tau^2\sigma^2}{(1+\tau^2)c}$, the LHS of the objective equality can be converted as
\begin{equation*}
\int_0^{1}\frac{\sqrt{\sigma^2-\tau c}}{\sqrt{\tau}}\phi\rpar{\frac{\mu}{\sqrt{\sigma^2-\tau c}}}\diff\tau=\frac{2\sigma^2}{\sqrt{2\pi c}}\int_0^{\sqrt{\frac{c}{\sigma^2-c}}}\frac{1}{(1+\tau^2)^2}\exp\rpar{-\frac{\mu^2(1+\tau^2)}{2\sigma^2}}\diff\tau.
\end{equation*}
For brevity, denote the obtained RHS as $\Diamond$. The objective now is to further simplify this term, specifically relating it to Owen's T function, $T(x,y)\defeq\frac{1}{2\pi}\int_0^y\frac{e^{-\half x^2(1+z^2)}}{1+z^2}\diff z$. 

For notational convenience, define $v=\frac{\mu^2}{2\sigma^2}\geq 0$. Taking the following derivative yields
\begin{align*}
\frac{\diff}{\diff \tau}\spar{\frac{\tau}{1+\tau^2}\exp\rpar{-v(1+\tau^2)}}&=\spar{\frac{1-2v\tau^2}{1+\tau^2}-\frac{2\tau^2}{(1+\tau^2)^2}}\exp\rpar{-v(1+\tau^2)}\\
&=\spar{-\frac{1+2v\tau^2}{1+\tau^2}+\frac{2}{(1+\tau^2)^2}}\exp\rpar{-v(1+\tau^2)}\\
&=\spar{-2v+\frac{2v-1}{1+\tau^2}+\frac{2}{(1+\tau^2)^2}}\exp\rpar{-v(1+\tau^2)},
\end{align*}
therefore
\begin{equation*}
\frac{\exp\rpar{-v(1+\tau^2)}}{(1+\tau^2)^2}=\half \frac{\diff}{\diff \tau}\spar{\frac{\tau}{1+\tau^2}\exp\rpar{-v(1+\tau^2)}}+v\exp\rpar{-v(1+\tau^2)}+\rpar{\half-v}\frac{\exp\rpar{-v(1+\tau^2)}}{1+\tau^2}.
\end{equation*}
Integrating it with respect to $\tau$ gives
\begin{align*}
&\int_0^y\frac{\exp\rpar{-v(1+\tau^2)}}{(1+\tau^2)^2}\diff \tau\\
=~&\half \frac{\tau}{1+\tau^2}\exp\rpar{-v(1+\tau^2)}\bigg|_{\tau=0}^{y}+ve^{-v}\int_0^y\exp\rpar{-v\tau^2}\diff \tau+\rpar{\half-v}\int_0^y\frac{\exp\rpar{-v(1+\tau^2)}}{1+\tau^2}\diff\tau\\
=~&\half \frac{y}{1+y^2}\exp\rpar{-v(1+y^2)}+\sqrt{\pi v}e^{-v} \spar{\Phi(\sqrt{2v}y)-\half}+2\pi\rpar{\half-v}T(\sqrt{2v},y).
\end{align*}
Specializing it to $\Diamond$, 
\begin{align*}
&\Diamond\\
=~&\frac{2\sigma^2}{\sqrt{2\pi c}}\spar{\half \frac{y}{1+y^2}\exp\rpar{-v(1+y^2)}+\sqrt{\pi v}e^{-v} \spar{\Phi(\sqrt{2v}y)-\half}+2\pi\rpar{\half-v}T(\sqrt{2v},y)}\bigg|_{y=\sqrt{\frac{c}{\sigma^2-c}},v=\frac{\mu^2}{2\sigma^2}}\\
=~&\frac{\sqrt{\sigma^2-c}}{\sqrt{2\pi }}\exp\rpar{-\frac{\mu^2}{2(\sigma^2-c)}}+\frac{\mu\sigma}{\sqrt{c}}\exp\rpar{-\frac{\mu^2}{2\sigma^2}} \spar{\Phi\rpar{\frac{\mu}{\sigma}\sqrt{\frac{c}{\sigma^2-c}}}-\half}\\
&\quad+\frac{\sqrt{2\pi}}{\sqrt{c}}\rpar{\sigma^2-\mu^2}T\rpar{\frac{\mu}{\sigma},\sqrt{\frac{c}{\sigma^2-c}}}\\
=~& \sqrt{\sigma^2-c}\cdot\phi\rpar{\frac{\mu}{\sqrt{\sigma^2-c}}}+\frac{\sqrt{2\pi}\mu\sigma}{\sqrt{c}}\phi\rpar{\frac{\mu}{\sigma}} \spar{\Phi\rpar{\frac{\mu}{\sigma}\sqrt{\frac{c}{\sigma^2-c}}}-\half}+\frac{\sqrt{2\pi}}{\sqrt{c}}\rpar{\sigma^2-\mu^2}T\rpar{\frac{\mu}{\sigma},\sqrt{\frac{c}{\sigma^2-c}}}.\qedhere
\end{align*}
\end{proof}



\end{document}